\date{}
\newcommand\x{\times}
\newcommand{\sig}{\Sigma}
\DeclareMathOperator{\Tr}{Tr}
\newcommand{\ttt}{\tilde{\Theta}}
\newcommand{\tth}{\hat{\Theta}}
\newcommand{\ta}{\tilde{A}}
\newcommand{\tb}{\tilde{B}}
\newcommand{\tc}{\tilde{C}}
\newcommand{\tp}{\tilde{P}}
\newcommand{\tl}{\tilde{L}}
\newcommand{\tk}{\tilde{K}}
\newcommand{\Tw}{T_{w}}
\newcommand{\Alg}{\textsc{\small{LqgOpt}}\xspace}
\newcommand{\Sys}{\textsc{\small{SysId}}\xspace}
\newcommand{\reg}{\textsc{\small{REGRET}}\xspace}
\newcommand{\OO}{\mathcal{O}}
\newcommand{\LQR}{\textsc{\small{LQR}}\xspace}
\newcommand{\LQG}{\textsc{\small{LQG}}\xspace}
\newcommand{\OFU}{\textsc{\small{OFU}}\xspace}
\newcommand{\alg}{\textsc{\small{LqgOpt}}\xspace}
\newcommand{\Gcl}{\mathcal{G}^{cl}}
\newcommand{\Gol}{\mathcal{G}^{ol}}
\newcommand{\ofu}{\textsc{OFU}\xspace}
\newcommand{\D}{\mathcal D}
\newcommand{\R}{\mathbb{R}}
\newtheorem{lemma}{Lemma}[section]
\newtheorem{assumption}{Assumption}[section]
\newtheorem{theorem*}{Theorem}[section]
\newtheorem{definition}{Definition}[section]
\newtheorem{theorem}{Theorem}[section]
\title{Adaptive Control and Regret Minimization in Linear Quadratic Gaussian (LQG) Setting }
\author{%
  Sahin Lale$^1$, Kamyar Azizzadenesheli$^2$, Babak Hassibi$^1$, Anima Anandkumar$^2$\\
  $^1$~Department of Electrical Engineering\\
  $^2$~Department of Computing and Mathematical Sciences\\
  California Institute of Technology, Pasadena\\
  \texttt{\{alale,kazizzad,hassibi,anima\}@caltech.edu}
}
\begin{document}

\maketitle

\begin{abstract}
We study the problem of adaptive control in partially observable linear quadratic Gaussian control systems, where the model dynamics are unknown a priori. We propose \Alg, a novel reinforcement learning algorithm based on the principle of optimism in the face of uncertainty, to effectively minimize the overall control cost. We employ the predictor state evolution representation of the system dynamics and deploy a recently proposed  closed-loop system identification method, estimation, and confidence bound construction. \Alg efficiently explores the system dynamics, estimates the model parameters up to their confidence interval, and deploys the controller of the most optimistic model for further exploration and exploitation. We provide stability guarantees for \Alg, and prove the regret upper bound of $\tilde{\mathcal{O}}(\sqrt{T})$ for adaptive control of linear quadratic Gaussian (\LQG) systems, where $T$ is the time horizon of the problem.
\end{abstract}

\section{Introduction}\label{Introduction}
One of the core challenges in the field of control theory and reinforcement learning is adaptive control. It is the problem of controlling dynamical systems when the dynamics of the systems are unknown to the decision-making agents. In adaptive control, agents interact with given systems in order to explore and control them while the long-term objective is to minimize the overall average associated costs. The agent has to balance between \textit{exploration} and \textit{exploitation}, learn the dynamics, strategize for further exploration, and exploit the estimation to minimize the overall costs. The sequential nature of agent-system interaction results in challenges in the system identification, estimation, and control under uncertainty, and these challenges are magnified when the systems are partially observable, \textit{i.e.} contain hidden underlying dynamics. 

In the linear systems, when the underlying dynamics are fully observable, the asymptotic optimality of estimation methods has been the topic of study in the last decades~\citep{lai1982least,lai1987asymptotically}. Recently, novel techniques and learning algorithms have been developed to study the finite-time behavior of adaptive control algorithms and shed light on the design of optimal methods~\citep{pena2009self,fiechter1997pac, abbasi2011regret}. In particular, \citet{abbasi2011regret} proposes to use the principle of optimism in the face of uncertainty (\OFU) to balance exploration and exploitation in \LQR, where the state of the system is observable. \OFU principle suggests to estimate the model parameters up to their confidence interval, and then act according to the policy advised by the model in confidence set with the lowest optimal cost, known as the optimistic model.  

When the underlying dynamics of linear systems are partially observable, estimating the systems' dynamics requires considering and analyzing unobservable events, resulting in a series of significant challenges in learning and controlling the partially observable systems. A line of prior works are dedicated to the problem of open-loop model estimation~\citet{oymak2018non,sarkar2019finite,tsiamis2019finite} where the proposed methods highly rely on random excitation, uncorrelated Gaussian noise, and do not allow feedback control. Recently, \citet{lale2020logarithmic} introduced a novel estimation method for the general cases of both closed and open-loop identification of linear dynamical systems with unobserved hidden states. Their method provides the first finite-time estimation analysis and construction of confidence sets in partially observable linear dynamical systems when the data is collected using a feedback controller, \textit{i.e.} closed-loop system identification. 

In general, computing the optimal controller requires inferring the latent state of the system, given the history of observations. When the model dynamics are not known precisely, the uncertainties in the system estimation result in inaccurate latent state estimation and inaccurate linear controller. The possibility of accumulation of these errors creates a challenging problem in adaptive control of partially observable linear systems. Therefore, we need to consider these challenges in designing an algorithm that performs desirably. In this work, we employ \textit{regret}, a metric in quantifying the performance of learning algorithms that measures the difference between the cost encountered by an adaptive control agent and that of an optimal controller, knowing the underlying system~\citep{lai1985asymptotically}.

\textbf{Contributions:} In this work, we study the adaptive control of partially observable linear systems from both model estimation/system identification and the controller synthesis perspective. 
We propose \Alg, an adaptive control algorithm for learning and controlling unknown partially observable linear systems with quadratic cost and Gaussian disturbances, i.e., linear quadratic Gaussian (\LQG), for which optimal control exists and has a closed form~\citep{bertsekas1995dynamic}. \Alg interacts with the system, collects samples, estimates the model parameters, and adapts accordingly. \Alg deploys \OFU principle to balance the \textit{exploration} vs. \textit{exploitation} trade-off. Using the predictor form of the state-space equations of the partially observable linear systems, we deploy the least-squares estimation problem introduced in \citet{lale2020logarithmic} and obtain confidence sets on the system parameters. \Alg then uses these confidence sets to find the optimistic model and use the optimal controller for the chosen model for further exploration-exploitation. To analyze the finite-time regret of \Alg, we first provide a stability analysis for the sequence of optimistic controllers. Finally, we prove that \Alg achieves a regret upper bound of $\tilde{\mathcal{O}}({\sqrt{T}})$ for adaptive control of partially observable linear dynamical systems with \textit{convex} quadratic cost, an improvement to the $\tilde{\mathcal{O}}({{T}^{2/3}})$ regret upper bound in the prior work \citet{lale2020regret}, where $T$ is the number of total interactions. 

\citet{simchowitz2020improper} and \citet{lale2020logarithmic} propose algorithms which achieve $\tilde{\mathcal{O}}({\sqrt{T}})$ and $\mathcal{O}(\text{polylog}(T))$ regret bound in partially observable linear systems respectively, under different problem setups. \citet{simchowitz2020improper} employ the theory of online learning, and propose to start with an initial phase of pure exploration, long enough for accurate model estimation. Then this phase is followed by committing to the learned model and deploying online learning for the policy updates. They consider various settings (adversarial and stochastic noise) and attain a similar order regret bound for \textit{strongly convex} cost function. \citet{lale2020logarithmic} provide a new closed-loop system identification algorithm and similarly adopt online learning tools to achieve the first logarithmic regret in partially observable linear dynamical systems with \textit{strongly convex} cost function with stochastic disturbances. These two works heavily rely on the strong convexity whereas the results in this paper considers the more general setting of convex cost function (Table \ref{table:1}). 

\begin{table}
\centering
\caption{Comparison with prior works in partially observable linear dynamical systems.}
 \begin{tabular}{l l l  l } 
 \toprule
 \textbf{Work} & \textbf{Regret} &  \textbf{Cost} &
 \textbf{Identification}   \\ 
 \midrule
 \citet{mania2019certainty} & $\sqrt{T}$ & Strongly Convex & Open-Loop \\
  \citet{simchowitz2020improper} & $\sqrt{T}$ & Strongly Convex & Open-Loop \\
 \citet{lale2020logarithmic} & polylog$(T)$ & Strongly Convex & Closed-Loop  \\
  \midrule
 \citet{lale2020regret} & $T^{2/3}$ & Convex & Open-Loop  \\ 
  \citet{simchowitz2020improper} & $T^{2/3}$ & Convex & Open-Loop  \\
 \textbf{This Work} & $\sqrt{T}$ & Convex & Closed-Loop 
 \\
 \bottomrule
\end{tabular}
\label{table:1}
\end{table}

\section{Preliminaries}\label{prelim}
We denote the Euclidean norm of a vector $x$ as $\|x\|_2$.
We denote 
$\rho(A)$ as the spectral radius of a matrix $A$, $\| A\|_F$ as its Frobenius norm and $\| A \|_2$ as its spectral norm. $\Tr(A)$ is its trace, $A^\top$ is the transpose, $A^{\dagger}$ is the Moore-Penrose inverse. The $j$-th singular value of a rank-$n$ matrix $A$ is denoted by $\sigma_j(A)$, where
$\sigma_{\max}(A):=\sigma_1(A) \geq \sigma_2(A) \geq \ldots \geq \sigma_{\min}(A):=\sigma_n(A) > 0$. $I$ represents the identity matrix with the appropriate dimensions. 

Consider the following discrete time linear time-invariant system  $\Theta = (A, B, C)$ and with dynamics as:
\begin{align}
    x_{t+1}& = A x_t + B u_t + w_t \nonumber \\
    y_t& = C x_t + z_t \label{output}.
\end{align}
At each time step $t$, the system is at (hidden) state $x_t \in \mathbb{R}^{n}$, the agent receives observation $y_t \in \mathbb{R}^{m}$ under a measurement noise $z_{t} \sim \mathcal{N}\left(0, \sigma_{z}^{2} I\right)$. Then the agent applies a control input $u_t \in \mathbb{R}^{p}$, and receives a cost of $c_t=y_{t}^{\top} Q y_{t}+u_{t}^{\top} R u_{t}$ where $Q$ and $R$ are positive semidefinite and positive definite matrices, respectively. After taking $u_t$, the state of the system evolves to $x_{t+1}$ for the time step $t+1$ under a process noise $w_{t} \sim \mathcal{N}\left(0, \sigma_{w}^{2} I\right)$. Here the noises are i.i.d. random vectors and $\mathcal{N}(\mu, \Sigma)$ denotes a multivariate normal distribution with mean vector $\mu$ and covariance matrix $\Sigma$. 

\begin{definition}\label{c_o_def}
A linear system $\Theta = (A,B,C)$ is $(A, B)$ controllable if the controllability matrix, 
\begin{equation*}
   \mathbf{C}(A,B,n)=[B \enskip AB \enskip A^2B \ldots A^{n-1}B] 
\end{equation*}
has full row rank. For all $H\geq n$, $\mathbf{C}(A,B,H)$ defines the extended $(A, B)$ controllability matrix. Similarly, a linear system $\Theta = (A,B,C)$ is $A,C$ observable if the observability matrix,
\begin{equation*}
   \mathbf{O}(A,C,n)=[C^\top \enskip (CA)^\top \enskip (CA^2)^\top \ldots (CA^{n-1})^\top]^\top
\end{equation*}
has full column rank. For all $H\geq n$, $\mathbf{O}(A,C,H)$ defines the extended $(A, C)$ observability matrix.
\end{definition}

Suppose the underlying system is controllable and observable. Then, the agent chooses control inputs as a function of past observations and aims to minimize the expected cost,
\begin{equation*} 
J_{\star}(\Theta) \!=\!  \lim _{T \rightarrow \infty} \min _{u=[u_{1}, \ldots, u_{T}]} \frac{1}{T} \mathbb{E}\left[\sum_{t=1}^{T} y_{t}^{\top} Q y_{t}+u_{t}^{\top} R u_{t}\right].
\end{equation*}
This problem is known as \LQG control. The optimal solution to \LQG control problem is a linear feedback control policy given as $u_t = -K\hat{x}_{t|t,\Theta}$. Here $K$ is the optimal feedback gain matrix,
\begin{align*}
    K = \left(R+B^{\top} P B\right)^{-1} B^{\top} P A,
\end{align*}
where $P$ is the unique positive semidefinite solution to the following discrete-time algebraic Riccati equation (DARE):
\begin{equation} \label{DARE1}
    P = A^\top  P  A  +  C^\top  Q C  -  A^\top  P B  \left(  R  +  B^\top  P B \right)^{-1}  B^\top  P  A,  
\end{equation}
and $\hat{x}_{t|t,\Theta}$ is the minimum mean square error (MMSE) estimate of the underlying state using system parameters $\Theta$ and past observations, where $\hat{x}_{0|-1,\Theta}=0$. At steady-state, this estimate is efficiently obtained by using the Kalman filter:
\begin{align}
     &\hat{x}_{t|t,\Theta} = \left( I - LC\right)\hat{x}_{t|t-1,\Theta} + Ly_t, \label{estimation_wt_y} \\ &\hat{x}_{t|t-1,\Theta} = (A\hat{x}_{t-1|t-1,\Theta} + Bu_{t-1}), \label{estimation_wo_y} \\
     &L = \sig  C^\top  \left( C \sig  C^\top + \sigma_z^2 I \right)^{-1}, \label{kalman_gain}
\end{align}
where $\sig $ is the unique positive semidefinite solution to the following DARE: 
\begin{equation*}
    \sig = A \sig   A^\top - A \sig  C^\top \left( C \sig  C^\top + \sigma_z^2 I \right)^{-1} C \sig  A^\top + \sigma_w^2 I.
\end{equation*}
In the adaptive control, the underlying system parameters $\Theta$ are unknown, and the agent needs to learn them through interaction with the system with the aim of minimizing the cumulative costs $\sum_{t=1}^T c_t$ after $T$ time steps. We measure the performance of the agent using regret, \textit{i.e.,} the difference between the agent's cost and the optimal expected cost:
\begin{equation*}
    \reg(T) = \sum_{t=0}^T (c_t - J_*(\Theta)).
\end{equation*}

The system characterization depicted in (\ref{output}) is called state-space form of the system $\Theta$. The same discrete time linear time-invariant system can be represented in several ways which has been considered in various works in control theory and reinforcement learning \citep{kailath2000linear,tsiamis2019sample,lale2020logarithmic}. Note that these representations all have the same second order statistics. One of the most common form is the innovations form\footnote{For simplicity, all of the system representations are presented for the steady-state of the system.} of the system characterized as
\begin{align}
    x_{t+1}&=A x_t+B u_t+F e_t \nonumber \\
    y_t&=C x_t+e_t \label{innovation}.
\end{align}
where $F = A L$ is the Kalman gain in the observer form and $e_t$ is the zero mean white innovation process. In this equivalent representation of system, the state $x_t$ can be seen as the estimate of the state in the state space representation, which is the expression stated in (\ref{estimation_wo_y}). In the steady state, $e(t) \sim \mathcal{N}\left(0,C \sig  C^\top + \sigma_z^2 I \right)$. Using the relationship between $e_t$ and $y_t$, we obtain the following characterization of the system $\Theta$, known as the predictor form of the system,
\begin{align}
    x_{t+1}&=\Bar{A} x_t+B u_t+F y_t \nonumber \\
    y_{t}&=C x_t+e_t \label{predictor}
\end{align}
where $\Bar{A} = A - F C$ and $ F = AL$.
Notice that at steady state, the predictor form allows the current output $y_t$ to be described by the history of inputs and outputs with an i.i.d. Gaussian disturbance $e_t \sim \mathcal{N}\left(0,C \sig  C^\top + \sigma_z^2 I \right)$. In this paper, we exploit these fundamental properties to estimate the underlying system, even with feedback control. We consider the set of stable systems.
\begin{assumption} \label{Stable}
The system is order $n$ and
minimal in the sense that the system cannot be described by a state-space model of order less than $n$. The system is stable, \textit{i.e.} $\rho(A) < 1$ and $\Phi(A) := \sup _{\tau \geq 0} \frac{\left\|A^{\tau}\right\|}{\rho(A)^{\tau}} < \infty$.
\end{assumption}
Note that the assumption regarding $\Phi(A)$ is required for quantifying the finite time evolution of the system and it is a mild condition, \textit{e.g.} if $A$ is diagonalizable, $\Phi(A)$ is finite. Additionally for stable $A$, $\Phi(A)$ can be upper bounded by the $\mathcal{H}_\infty$-norm of the system $x_{t+1} = Ax_t + w_t$ \citep{mania2019certainty}.

We assume that the underlying system lives in the following set.

\begin{assumption} \label{AssumContObs}
The unknown system $\Theta =(A,B,C)$ is a member of a set $\mathcal{S}$, such that, 
\begin{equation*}
  \mathcal{S} \subseteq \left\{ \Theta' =  (A' ,B' ,C', F') \;
    \begin{tabular}{|l}
      $\rho(A')<1$, \\
      $(A',B')$ is controllable, \\
      $(A',C')$ is observable, \\
      $(A',F')$ is controllable.
\end{tabular}
\right\}
\end{equation*}
\end{assumption}

The above assumptions are standard in system identification settings in order to ensure the possibility of accurate estimation of the system parameters \citep{knudsen2001consistency, oymak2018non, tsiamis2019finite, sarkar2019finite,tsiamis2019sample,lale2020logarithmic,lale2020regret}. 

\begin{assumption}\label{Stabilizable set}
The set $\mathcal{S}$ consists of systems that are contractible, \textit{i.e.,}
\begin{align*}
    \rho \coloneqq \sup_{\Theta' = (A',B',C') \in \mathcal{S}} \left\|A' - B'K(\Theta') \right\| < 1, 
\end{align*}
where $K(\Theta')$ is the optimal feedback gain matrix of $\Theta'$, and
\begin{align*}
    \upsilon \coloneqq \sup_{\Theta' = (A',B',C') \in \mathcal{S}} \left\|A' - A'L(\Theta')C'\right\| < 1.
\end{align*}
where $L(\Theta')$ is the optimal Kalman gain matrix of $\Theta'$. There exists finite numbers $D,~\Gamma,~\zeta$ such that $D = \sup_{\Theta' \in \mathcal{S}} \|P(\Theta') \|$, $\Gamma = \sup_{\Theta' \in \mathcal{S}} \|K(\Theta') \|$ and $\zeta = \sup_{\Theta' \in \mathcal{S}} \|L(\Theta') \|$. 
\end{assumption}
This assumption allows us to develop stability guarantees in the presence of sub-optimal closed-loop controllers.

\section{Adaptive Control via \Alg}
In this section, we present \Alg, an adaptive control algorithm for \LQG {} control problems, and describe its compounding components. The outline of \Alg is given in Algorithm \ref{algo}. The early stage of deploying \alg involves a fixed warm-up period dedicated for pure exploration using Gaussian excitation. \alg requires this exploration period to estimate the model parameters reliably enough that the controller designed based on the parameter estimation and their confidence set results in a stabilizing controller on the real system. The duration of this period depends on how stabilizable the true parameters are and how accurate the model estimations should be. We formally quantify these statements and the length of the warm-up period. 

After the warm-up period, \Alg utilizes the model parameter estimations and their confidence sets to design a controller corresponding to an optimistic model in the confidence sets, obtained by following the \OFU principle. Due to the reliable estimation from the warm-up period, this controller and all the future designed controller stabilize the underlying true unknown model. The agent deploys the prescribed controller on the real system for exploration and exploitation. The agent collects samples throughout its interaction with the environment, and use these samples for further model estimation, confidence interval construction, and design of the controller regarding to an optimistic model. The agent repeats this process.

Since the Kalman filter converges exponentially fast to the steady-state gain in observer form, without loss of generality, we assume that $x_0 \sim \mathcal{N}(0, \Sigma)$, \textit{i.e.}, the system starts at the steady-state. This consideration eases the presentation of the algorithm. We provide the overview of the analysis for any arbitrary and almost surely finite initialization in the Appendix~\ref{SupSec:NonSteady}.

In the warm-up period \Alg excites the system with $u_t \sim \mathcal{N}(0,\sigma_u^2 I)$ for $1\leq t\leq \Tw$. Considering the predictor form representation of the system given in (\ref{predictor}), we can roll back the state evolution $H$ time steps back as follows,
\begin{equation*}
    x_t = \sum_{k=0}^{H-1} \Bar{A}^k \left(Fy_{t-k-1} \!+\! Bu_{t-k-1} \right) +\Bar{A}^{H} x_{t-H}
\end{equation*}
From Assumption \ref{AssumContObs}, we have that $\Bar{A}$ is stable, thus the state can be estimated in principle for large enough $H$. Using the generated input-output sequence $\D = \{y_t, u_t \}^{\Tw}_{t=1}$, \Alg constructs $N$ subsequences of $H$ input-output pairs, $\phi_t$ for $H \leq t \leq \Tw$, where $\Tw= H + N - 1$,
\begin{equation*}
    \phi_t = \left[ y_{t-1}^\top \ldots y_{t-H}^\top \enskip u_{t-1}^\top \ldots \enskip u_{t-H}^\top \right]^\top \in \mathbb{R}^{(m+p)H}.
\end{equation*}

Using this definition, following \citet{lale2020logarithmic}, we can write the following truncated autoregressive exogenous (ARX) model for the given system $\Theta$,
\begin{equation}
    y_{t} = \mathbf{M} \phi_{t} + e_{t} + C \Bar{A}^H x_{t-H}
\end{equation}
where $\mathbf{M} \in \R^{m \times (m+p)H}$ defined as
\begin{equation}
    \mathbf{M}\!=\! \left[CF,\enskip C\Bar{A}F,\enskip  \ldots,\enskip C \Bar{A}^{H-1}F,\enskip CB,\enskip C\Bar{A}B,\enskip \ldots, \enskip C\Bar{A}^{H-1} B \right].
\end{equation}
Thus, any input-output trajectory $\{y_i, u_t \}^{T}_{t=1}$ can be represented as 
\begin{equation} \label{newform}
    Y_T = \Phi_T \mathbf{M}^{\top} + E_T + N_T
\end{equation}
where 
\begin{align*}
    Y_T &= \left[ y_H,~y_{H+1},~ \ldots,~y_T \right]^\top \in \mathbb{R}^{N \times m} \\
    \Phi_T &= \left[ \phi_H,~\phi_{H+1},~\ldots,~\phi_T \right]^\top \in \mathbb{R}^{N \times (m+p)H} \\
    E_T &= \left[ e_H,~e_{H+1},~\ldots,~e_T \right]^\top \in \mathbb{R}^{N \times m} \\
    N_T\!&=\! \left[C \Bar{A}^H\!x_{0},~C \Bar{A}^H\!x_{1},\ldots,C \Bar{A}^H \!x_{T-H} \right]^\top \!\!\in\! \mathbb{R}^{N \times m}  
\end{align*}
for $N = T-H+1$. 

\begin{algorithm}[t] 
\caption{\Alg}
  \begin{algorithmic}[1]
 \STATE \textbf{Input:} $\Tw$, $H$, $\sigma_{o}$, $\sigma_c$, $S>0$, $\delta > 0$, $n$, $m$, $p$, $Q$, $R$ \\

 ------ \textsc{\small{Warm-Up}} ------------------------------------------------ \\
\FOR{$t = 0, 1, \ldots, \Tw$}
\STATE Deploy $u_t \!\sim\! \mathcal{N}(0,\sigma_u^2 I)$ and store $\mathcal{D}_0 \!=\! \lbrace y_t,u_t \rbrace_{t=1}^{\Tw}$
\ENDFOR \\
------ \textsc{\small{Adaptive Control}} ----------------------------------- \\
\FOR{$i = 0, 1, \ldots$}
    \STATE Calculate $\mathbf{\hat{M}_{i}}$ using $\mathcal{D}_i = \lbrace y_t,u_t \rbrace_{t=1}^{2^i \Tw}$
    \STATE Deploy \Sys($H,\mathbf{\hat{M}_{i}},n$) for $\hat{A}_i, \hat{B}_i, \hat{C}_i, \hat{L}_i$
    \STATE Construct the confidence sets $\mathcal{C}_A(i), \mathcal{C}_B(i), \mathcal{C}_C(i),$ $\mathcal{C}_L(i)$ s.t. w.h.p.
    $(A,B,C,L) \!\in\! \mathcal{C}_i$, where \\ $\mathcal{C}_i\!\! \coloneqq \!\! (\mathcal{C}_A(i) \times \mathcal{C}_B(i) \times \mathcal{C}_C(i) \times \mathcal{C}_L(i))$ 
    \STATE Find a $\ttt_i = (\ta_i,\tb_i,\tc_i,\tl_i ) \in \mathcal{C}_i \cap \mathcal{S} $ s.t.\\
    $ \quad \qquad J(\ttt_i) \leq \inf_{\Theta' \in \mathcal{C}_i \cap \mathcal{S}} J(\Theta') + T^{-1}$
    \FOR{$t = 2^i\Tw, \ldots 2^{i+1}\Tw-1$ }
        \STATE Execute the optimal controller for $\ttt_i$
    \ENDFOR
\ENDFOR
  \end{algorithmic}
 \label{algo} 
\end{algorithm}
Note that, during the warm-up period the noise terms are zero-mean including the effect of initial state since we assume that $x_0 \sim \mathcal{N}(0, \Sigma)$.  After the warm-up period, \Alg obtains the first estimate of the unknown truncated ARX model $\mathbf{M}$ by solving the following regularized least square problem first introduced in \citet{lale2020logarithmic},
\begin{equation}\label{estimation}
    \mathbf{\hat{M}_{0} } = \arg \min_X \| Y_{\Tw} - \Phi_{\Tw} X^{\top}\|^2_F + \lambda \|X\|^2_F 
\end{equation}
where the solution
\begin{align*}
    \mathbf{\hat{M}_{0}}^\top = (\Phi_{\Tw}^\top \Phi_{\Tw} + \lambda I)^{-1} \Phi_{\Tw}^\top Y_{\Tw}.
\end{align*}
Using this solution, \Alg deploys a system-identification algorithm and obtains the estimates of the system parameters $\hat{A}_0, \hat{B}_0, \hat{C}_0, \hat{L}_0$, with corresponding confidence sets $\mathcal{C}_A(0), \mathcal{C}_B(0), \mathcal{C}_C(0),$ $\mathcal{C}_L(0)$ in which the underlying system parameters live with high probability. With the initial confidence sets, \Alg starts adaptive control period using the \OFU principle. It selects the optimistic model \textit{i.e.}, the model  that has the minimum average expected cost, among the plausible models and executes the optimal controller for the chosen model. As the confidence sets shrink, \textit{i.e.}, the estimates of system parameters are \textit{significantly} refined, \Alg adapts and updates its policy by deploying \OFU principle on the new confidence sets. 

For a linear system $\Theta = (A,B,C)$, we define truncated open-loop and closed-loop noise evolution parameters, respectively $\Gol$ and $\Gcl$. When the controller is set to be i.i.d. Gaussian excitements, $\Gol \in \R^{H(m+p) \times 2H(n+m+p)}$ encodes the open-loop evolution of the disturbances in the system, and represents the responses to these disturbances on the \textit{batch} of observations and actions \textit{history}. Note that the historical data is correlated even in the open-loop setting with i.i.d. Gaussian excitements. The exact definition of $\Gol$ is provided in equation (\ref{Gol}) of Appendix \ref{Golsubsection}. In Appendix \ref{Golsubsection}, we also show that $\Gol$ is full row-rank, \textit{i.e.}, $\sigma_{\min}(\Gol) > \sigma_o > 0$, where $\sigma_o$ is known to \Alg. 

When the controller is set to be the optimal policy for the underlying system, \textit{i.e.} closed-loop system, $\Gcl \in \R^{H(m+p) \times 2H(n+m)}$ represents the translation of the truncated history of process and measurement noises on the inputs, $\phi$'s. The exact construction of $\Gcl$ is provided in detail in equation (\ref{Gcl}) of Appendix \ref{Gclsubsection}. Briefly, it is formed by shifting a block matrix $\mathbf{\Bar{G}} \in \R^{(m+p)\times 2H(n+m)}$ by $m+n$ in each block row where $\mathbf{\Bar{G}}$ is constructed by $H$ $(m+p)\times (n+m)$ matrices. We assume that $H$ used in \Alg is large enough that $\mathbf{\Bar{G}}$ is full row rank for the given system. In Appendix \ref{Gclsubsection}, we show that, if $\mathbf{\Bar{G}}$ is full row-rank, $\Gcl$ would be full row-rank, too. Thus, we have that for the choice of $H$ in \Alg, $\sigma_{\min}(\Gcl)$ is lower bounded by some positive value, \textit{i.e.}, $\sigma_{\min}(\Gcl) > \sigma_c > 0$, where \Alg only knows $\sigma_c$ and searches for an optimistic system whose closed-loop noise evolution parameter satisfies this lower bound. 


The following theorem states the main result of the paper, an end-to-end regret upper bound of the adaptive control in \LQG systems.

\begin{theorem}[Regret Upper Bound]\label{total regret main text}
Given a \LQG $\Theta = (A,B,C)$, and regulating parameters $Q$ and $R$, with high probability, the regret of \Alg with a warm-up duration of $\Tw = poly(H, \log(T), \sigma_o, \sigma_c, \upsilon, \zeta, \Gamma, m, n, p, \rho, \Phi(A))$ is 
\begin{equation}
    \reg(T) =  \tilde{\OO}\left( \sqrt{T}\right)
\end{equation}
\end{theorem}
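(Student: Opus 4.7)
The plan is to decompose the regret as $\reg(T) = R_{\text{warm}} + R_{\text{adapt}}$, where $R_{\text{warm}}$ is the cost accumulated during the warm-up phase of length $\Tw = \mathrm{poly}(\log T, \ldots)$ and $R_{\text{adapt}}$ is the regret of the doubling-epoch adaptive phase. During warm-up the controller applies bounded-variance Gaussian excitation to a system whose closed-loop is open-loop stable by Assumption~\ref{Stable}, so standard sub-Gaussian concentration of the state and of its quadratic form yields $R_{\text{warm}} = \tilde{\OO}(\Tw) = \tilde{\OO}(\mathrm{polylog}(T))$. The substantive work lies in bounding $R_{\text{adapt}}$.

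For the adaptive phase I would first assemble three ingredients. \emph{(i) Shrinking confidence sets.} By the closed-loop identification guarantee of \citet{lale2020logarithmic} applied to $\mathbf{\hat M}_i$ computed from the $2^i \Tw$ samples in epoch $i$, the set $\mathcal{C}_i$ has diameter $\tilde{\OO}((2^i \Tw)^{-1/2})$ on a high-probability event; this uses $\sigma_{\min}(\Gcl) \geq \sigma_c$, which is preserved along the optimistic trajectory because \Alg only searches systems respecting this lower bound. \emph{(ii) Uniform stability.} $\Tw$ is chosen so that on the good event $\mathcal{C}_0 \subset \mathcal{S}$, whence Assumption~\ref{Stabilizable set} furnishes uniform bounds $\rho, \upsilon < 1$ and finite $\Gamma, \zeta, D$ on the closed-loop contractivity, controller, Kalman gain and Riccati solution of every optimistic system subsequently played. \emph{(iii) Optimism.} Line~9 of Algorithm~\ref{algo} guarantees $J(\ttt_i) \leq J_*(\Theta) + T^{-1}$.

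The per-epoch regret is then controlled by a Bellman-style decomposition. Let $V_i$ be the optimal average-cost relative value function of the optimistic \LQG\ $\ttt_i$, a quadratic form in the Kalman-filter estimate $\hat{x}_{t|t,\ttt_i}$ computed under $\ttt_i$. The Bellman equation on $\ttt_i$ rewrites
\begin{equation*}
    c_t = J(\ttt_i) + V_i(\hat{x}_{t|t,\ttt_i}) - \E_{\ttt_i}\!\bigl[V_i(\hat{x}_{t+1|t+1,\ttt_i}) \mid \F_t\bigr] + \Delta_t,
\end{equation*}
where $\Delta_t$ collects (a) the martingale difference between the conditional expectation under $\ttt_i$ and the realized value under $\Theta$, and (b) the bias induced by the mismatch between $\ttt_i$ and $\Theta$ at the transition and inside the filter. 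Summed over the epoch, the telescoping term is $\tilde{\OO}(1)$ using the uniform bound on $V_i$ and sub-Gaussian second-moment bounds on the filter state from (ii); the martingale contribution is $\tilde{\OO}(\sqrt{2^i \Tw})$ by Azuma--Hoeffding; the bias contribution is at most $\tilde{\OO}((2^i \Tw)^{-1/2}) \cdot 2^i \Tw = \tilde{\OO}(\sqrt{2^i \Tw})$ after using DARE perturbation bounds to convert $\mathrm{diam}(\mathcal{C}_i)$ into errors on $P(\ttt_i), \Sigma(\ttt_i), K(\ttt_i), L(\ttt_i)$. Summing geometrically over the $\log_2(T/\Tw)$ epochs yields $\sum_i \sqrt{2^i \Tw} = \tilde{\OO}(\sqrt{T})$, while optimism forces $\sum_i |\text{epoch}_i|(J(\ttt_i) - J_*(\Theta)) \leq T \cdot T^{-1} = O(1)$.

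The principal obstacle is controlling $\Delta_t$ in the partially observable setting. Unlike \LQR, the executed policy reads a state estimate $\hat{x}_{t|t,\ttt_i}$ generated by the \emph{optimistic} Kalman filter rather than the true one, so $\Delta_t$ couples three simultaneous perturbations: the parameter error $\ttt_i - \Theta$, the Riccati-solution mismatch $P(\ttt_i) - P(\Theta)$ and $\Sigma(\ttt_i) - \Sigma(\Theta)$, and the filter-discrepancy $\hat{x}_{t|t,\ttt_i} - \hat{x}_{t|t,\Theta}$, each multiplied by a state of only sub-Gaussian norm. Two technical pieces are needed: joint perturbation bounds for the control and filtering DAREs as functions of $(A,B,C,L)$, and a propagation argument showing that the filter-discrepancy decays as $\tilde{\OO}(\upsilon^{t-t_0})$ from the start $t_0$ of the current epoch, so that filter errors reset each time the controller is swapped and do not accumulate across epochs. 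Once these perturbation and filter-propagation bounds are combined with the $\tilde{\OO}((2^i\Tw)^{-1/2})$ confidence-set rate and the uniform-stability constants from Assumption~\ref{Stabilizable set}, assembling the three pieces yields $\reg(T) = \tilde{\OO}(\sqrt{T})$.
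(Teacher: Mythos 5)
Your overall architecture matches the paper's: the warm-up contributes $\OO(\Tw)=\tilde{\OO}(\mathrm{polylog}(T))$ regret, then \OFU with doubling epochs, closed-loop identification giving confidence sets of width $\tilde{\OO}(1/\sqrt{t})$ (Theorems \ref{theo:id2norm} and \ref{ConfidenceSets}), a Bellman-optimality decomposition on the optimistic system in which the optimism inequality absorbs $J(\ttt_i)-J_*(\Theta)$ (requiring $\Theta\in\mathcal{C}_i\cap\mathcal{S}$ on the good event), and the doubling trick to sum $\sqrt{2^i\Tw}$ over the $\log_2(T/\Tw)$ epochs; your "telescoping at policy switches" term is exactly the paper's $R_{\text{update}}$, bounded by $\OO(\log T)$.

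The gap is in the piece you yourself flag as essential. You propose to control the filter discrepancy $\hat{x}_{t|t,\ttt_i}-\hat{x}_{t|t,\Theta}$ by arguing it "decays as $\tilde{\OO}(\upsilon^{t-t_0})$ from the start of the epoch" and "resets" at each controller swap. That statement is false: both filters are driven by the same observations but with different parameters, so the discrepancy obeys a recursion of the form $\Delta_{t+1}=(\text{contraction})\,\Delta_t+(\text{parameter error})\times(\text{bounded signals})$; only the transient from the epoch boundary decays, while the discrepancy itself settles at a level proportional to $\|\ttt_i-\Theta\|$ and never vanishes. Moreover, even establishing the contraction and the boundedness of the driving signals is not automatic, because the discrepancy, the optimistic estimate $\hat{x}_{t|t,\ttt_i}$, and the output $y_t$ bound one another circularly; the paper breaks this loop with an induction/contraction argument (Lemma \ref{Boundedness}, Appendix \ref{SupSec:Boundedness}), which needs $\Tw$ large enough that the closed-loop matrix $\mathbf{N}_t$ has norm at most $(1+\rho)/2<1$ and the perturbed gains remain in $\mathcal{S}$ (this is what the conditions $T_u,T_\alpha,T_\beta,T_\gamma$ on $\Tw$ encode), yielding the $\OO(\sqrt{\log T})$ bounds on all signals that the martingale and persistence-of-excitation steps rely on. Your bias accounting $\mathrm{diam}(\mathcal{C}_i)\times|\text{epoch}_i|$ survives once the decay/reset claim is replaced by the correct statement that the discrepancy is $\OO(\text{confidence width})$ uniformly, but as written that step would fail. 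A secondary omission: the $1/\sqrt{t}$ confidence rate in closed loop requires persistence of excitation under the \emph{data-dependent} optimistic controller, which the paper obtains via a covering argument over the confidence set (Lemma \ref{closedloop_persistence_appendix}); invoking $\sigma_{\min}(\Gcl)\geq\sigma_c$ for a fixed model alone does not account for the randomness of the selected model.
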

The exact expressions that define $\Tw$ are given in Appendix with the detailed definitions.
\subsection{Learning the Truncated ARX Model}
The following results are adapted from \citet{lale2020logarithmic}. First consider the effect of truncation bias term, $N_t$. From Assumption \ref{Stabilizable set}, we have that $\| \Bar{A}\| \leq \upsilon <1$. Thus, each term in $N_t$ is order of $\upsilon^H$. In order to get consistent estimation, for some problem dependent constant $c_H$, \Alg sets $H \geq \frac{\log(c_H T^2 \sqrt{m} / \sqrt{\lambda})}{\log(1/\upsilon)}$, resulting in a negligible bias term of  order $1/T^2$. The following Theorem 3 of \citet{lale2020logarithmic} gives the self-normalized finite sample estimation error of (\ref{estimation}).

\begin{theorem}[Closed-Loop Identification, \citep{lale2020logarithmic}]
\label{theo:closedloopid}
Let $\mathbf{\hat{M}_t}$ be the solution to (\ref{estimation}) at time $t$. For the given choice of $H$, define 
\begin{align*}
    V_t = V + \sum_{i=H}^{t} \phi_i \phi_i^\top 
\end{align*}
where $V = \lambda I$. Let $\|\mathbf{M}\|_F \leq S$. For $\delta \in (0,1)$, with probability at least $1-\delta$, for all $t$, $\mathbf{M}$ lies in the set $\mathcal{C}_{\mathbf{M}}(t)$, where 
\begin{equation*}
    \mathcal{C}_{\mathbf{M}}(t) = \{ \mathbf{M}': \Tr((\mathbf{\hat{M}_t} - \mathbf{M}')V_t(\mathbf{\hat{M}_t}-\mathbf{M}')^{\top}) \leq \beta_t \},
\end{equation*}
for $\beta_t$ defined as follows,
\begin{equation*}
    \beta_t = \left(\sqrt{m\| C \sig  C^\top + \sigma_z^2 I \| \log  \left(\frac{\operatorname{det}\left(V_t\right)^{1 / 2}}{\delta \operatorname{det}(V)^{1 / 2} }\right)} + S\sqrt{\lambda} +\frac{t\sqrt{H}}{T^2} \right)^2.
\end{equation*}
\end{theorem}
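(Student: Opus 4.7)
The plan is to follow the standard self-normalized ridge regression template, adapted to matrix-valued regressors with conditionally Gaussian noise and an additional truncation bias. Substituting the data-generating equation (\ref{newform}) into the closed-form solution of (\ref{estimation}), I would first write the error decomposition
\begin{equation*}
    (\mathbf{\hat M}_t - \mathbf{M})^\top = V_t^{-1}\Phi_t^\top E_t \;+\; V_t^{-1}\Phi_t^\top N_t \;-\; \lambda V_t^{-1} \mathbf{M}^\top,
\end{equation*}
and then multiply by $V_t^{1/2}$ on the left to get, by the triangle inequality in Frobenius norm,
\begin{equation*}
    \bigl\|(\mathbf{\hat M}_t - \mathbf{M}) V_t^{1/2}\bigr\|_F
    \;\le\; \underbrace{\bigl\| V_t^{-1/2}\Phi_t^\top E_t \bigr\|_F}_{\text{martingale}}
    + \underbrace{\bigl\| V_t^{-1/2}\Phi_t^\top N_t \bigr\|_F}_{\text{truncation bias}}
    + \underbrace{\lambda \bigl\| V_t^{-1/2} \mathbf{M}^\top \bigr\|_F}_{\text{regularization}}.
\end{equation*}
Since $\mathrm{Tr}((\mathbf{\hat M}_t-\mathbf{M})V_t(\mathbf{\hat M}_t-\mathbf{M})^\top) = \|(\mathbf{\hat M}_t-\mathbf{M})V_t^{1/2}\|_F^2$, it will suffice to bound each of these three terms by the three summands inside $\sqrt{\beta_t}$, and then square.

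The regularization term is immediate: $\lambda\|V_t^{-1/2}\mathbf{M}^\top\|_F \le \sqrt{\lambda}\,\|\mathbf{M}\|_F \le \sqrt{\lambda}\,S$, using $V_t \succeq \lambda I$ and the norm assumption on $\mathbf M$. For the bias term, I would use $\|V_t^{-1/2}\Phi_t^\top\|_2 \le 1$ (which follows from $V_t \succeq \Phi_t^\top\Phi_t$) to get $\|V_t^{-1/2}\Phi_t^\top N_t\|_F \le \|N_t\|_F$, then bound each row by $\|C\bar A^{H} x_{s-H}\| \le \|C\|\,\Phi(\bar A)\,\upsilon^H\,\|x_{s-H}\|$, invoke a high-probability bound on $\max_{s\le t}\|x_{s-H}\|$ from stability of the predictor form together with Gaussianity of $e_s$, and finally use the choice $H \ge \log(c_H T^2\sqrt{m}/\sqrt{\lambda})/\log(1/\upsilon)$ to force $\upsilon^H$ below $\sqrt\lambda/(c_H T^2 \sqrt m)$. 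After summing over $s$ this collapses to the advertised $t\sqrt{H}/T^2$.

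The main obstacle, as usual, is the martingale term. I would set up the filtration $\mathcal{F}_{t-1} = \sigma(e_1,\dots,e_{t-1}, x_0)$ and observe that $\phi_t$ is $\mathcal{F}_{t-1}$-measurable (it is a deterministic function of past $y$'s and past $u$'s, the latter being produced by \Alg from history), while $e_t$ is independent of $\mathcal{F}_{t-1}$ and Gaussian with covariance $C\Sigma C^\top+\sigma_z^2 I$. This is exactly the setting where the vector-valued Abbasi-Yadkori--P\'al--Szepesv\'ari self-normalized concentration inequality applies: I would apply it coordinate-wise to the $m$ rows of $E_t$, taking $R^2 = \|C\Sigma C^\top+\sigma_z^2 I\|$ as the sub-Gaussian parameter, and use a union bound over the $m$ output coordinates (or equivalently the matrix version via a determinantal log-likelihood-ratio martingale) to obtain
\begin{equation*}
\bigl\|V_t^{-1/2}\Phi_t^\top E_t\bigr\|_F^2 \;\le\; m\,\|C\Sigma C^\top + \sigma_z^2 I\|\,\log\!\left(\tfrac{\det(V_t)^{1/2}}{\delta\,\det(V)^{1/2}}\right)
\end{equation*}
uniformly in $t$ with probability at least $1-\delta$, thanks to the built-in stopping-time/uniformity of the self-normalized bound. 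Combining the three pieces and squaring delivers $\beta_t$, and membership of $\mathbf{M}$ in $\mathcal{C}_{\mathbf M}(t)$ for all $t$ follows on this single high-probability event. The only subtlety I anticipate is checking the martingale property across the warm-up/adaptive boundary of Algorithm~\ref{algo}, but this is automatic since the construction of $u_t$ in both phases uses only $\mathcal{F}_{t-1}$-measurable information, so a single filtration serves throughout.
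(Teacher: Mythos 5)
Your proposal is correct and follows essentially the same route as the paper: the identical three-term error decomposition (martingale, truncation bias, regularization), the Abbasi-Yadkori--P\'al--Szepesv\'ari self-normalized bound with $R^2 = \|C\Sigma C^\top + \sigma_z^2 I\|$ for the martingale term, $\sqrt{\lambda}S$ for the regularization term, and the choice of $H$ to collapse the bias to $t\sqrt{H}/T^2$. The only differences are cosmetic: you use the triangle inequality for the weighted Frobenius norm where the paper runs a trace Cauchy--Schwarz argument with a test matrix $X$ (these are equivalent), and your bias bound via $\|V_t^{-1/2}\Phi_t^\top\|\le 1$ and $\|N_t\|_F$ is in fact slightly cleaner than the paper's $\lambda^{-1/2}\|N_t^\top\Phi_t\|_F$ route, only changing the problem-dependent constant $c_H$.
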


For completeness, the proof is given in Appendix \ref{SupSec:SysId}. It uses self-normalized tail inequalities to get the first two terms in the definition of $\beta_t$ , and with the given choice of $H$, we obtain the final term in the bound. This bound can be translated to $\|\mathbf{\hat{M}_t} - \mathbf{M}\|$ in order to be utilized for the confidence set construction of the system parameters. First, we need the following lemmas that guarantee persistence of excitation during the warm-up period and adaptive control period.

\begin{lemma}[Persistence of Excitation in Warm-Up Period] \label{openloop_persistence}
After sufficient time steps in warm-up period of \Alg, with probability at least $1-\delta$, we have
\begin{equation}
\sigma_{\min}\left(\sum_{i=1}^{t} \phi_i \phi_i^\top \right) \geq t \frac{\sigma_{o}^2 \min \{ \sigma_w^2, \sigma_z^2, \sigma_u^2 \}}{2}.  
\end{equation}
\end{lemma}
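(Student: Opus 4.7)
The plan is to reduce the minimum-eigenvalue claim to a statement about the open-loop noise evolution matrix $\Gol$ (whose conditioning is controlled by assumption $\sigma_{\min}(\Gol) \geq \sigma_o$) and then upgrade the expectation bound to a high-probability bound via a matrix concentration argument tailored to the (mildly) dependent $\phi_t$'s.

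First I would write each regressor explicitly in terms of the driving noises. Unrolling the state-space recursion during the warm-up period, for every $t \geq H$, one can express
\[
\phi_t = \Gol\,\xi_t + \text{(exponentially small initial-state term)},
\]
where $\xi_t$ is the concatenation of the process noises $w_{t-1},\ldots,w_{t-2H}$, measurement noises $z_{t-1},\ldots,z_{t-2H}$, and control excitations $u_{t-1},\ldots,u_{t-2H}$ relevant to the window defining $\phi_t$. Because the warm-up uses $u_t \sim \mathcal{N}(0,\sigma_u^2 I)$ independently of the noise, $\xi_t$ is a zero-mean Gaussian vector with block-diagonal covariance whose minimum eigenvalue is $\min\{\sigma_w^2,\sigma_z^2,\sigma_u^2\}$. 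Consequently, discarding the initial-state term (which contributes $O(\rho(A)^{2t})$ and is easily absorbed),
\[
\mathbb{E}[\phi_t \phi_t^{\top}] \succeq \min\{\sigma_w^2,\sigma_z^2,\sigma_u^2\}\,\Gol (\Gol)^{\top} \succeq \sigma_o^2 \min\{\sigma_w^2,\sigma_z^2,\sigma_u^2\}\, I,
\]
so the desired lower bound holds in expectation, even with an extra factor of $2$ to spare.

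Next I would convert this expectation bound into a high-probability bound on $\sigma_{\min}(\sum_{i=1}^t \phi_i \phi_i^{\top})$. The natural obstruction is that consecutive $\phi_i$ are correlated: their windows of length $H$ overlap. I would handle this by the standard decoupling trick: partition the indices $\{1,\ldots,t\}$ into $2H$ sub-sequences of stride $2H$, so that within each sub-sequence the $\phi_i$'s are built from disjoint noise windows and hence are mutually independent Gaussian vectors. On each sub-sequence a matrix Chernoff (or sub-Gaussian Bernstein) bound for sums of independent rank-one Gaussian outer products yields
\[
\sigma_{\min}\!\Bigl(\sum_{i \in \mathcal{I}_k}\phi_i \phi_i^{\top}\Bigr) \geq \tfrac{|\mathcal{I}_k|}{2}\,\sigma_o^2\min\{\sigma_w^2,\sigma_z^2,\sigma_u^2\}
\]
with probability at least $1 - \delta/(2H)$, provided $|\mathcal{I}_k|$ is polynomially large in $H(m+p)$, $\|\Gol\|$, $\sigma_o^{-1}$, and $\log(H/\delta)$. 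A union bound over the $2H$ sub-sequences, together with summing the $2H$ block bounds, gives the claim for the full sum after sufficiently many warm-up steps, with probability at least $1-\delta$.

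The hard part will be cleanly controlling the minimum eigenvalue under the dependence between blocks and ensuring the cumulative contribution of the initial-state tail and the higher-moment tails of the Gaussian outer products does not erode the factor of $1/2$. A convenient way to sidestep messier dependence arguments is to invoke a Hanson--Wright-style or small-ball (block martingale small-ball, as in Simchowitz et al.) lower bound directly, which is engineered precisely to handle overlapping sub-Gaussian regressors and produces the same linear-in-$t$ rate once $t$ exceeds a burn-in polynomial in the relevant problem constants. Either route yields the explicit $\Tw$-lower bound implicit in the lemma's "after sufficient time steps" qualifier.
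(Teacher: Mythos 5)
Your first half coincides with the paper's argument: unroll the warm-up dynamics over a $2H$-step window so that $\bar{\phi}_t = \Gol \xi_t + r_t$, and lower bound $\mathbb{E}[\phi_t\phi_t^\top]$ by $\sigma_o^2\min\{\sigma_w^2,\sigma_z^2,\sigma_u^2\}\,I$ using $\sigma_{\min}(\Gol)\geq\sigma_o$ (the paper additionally verifies that $\Gol$ is full row rank via a QR/row-echelon argument on its block rows). Two points differ, one cosmetic and one substantive. Cosmetic: the residual $r_t$ is not an $O(\rho(A)^{2t})$ initial-state term; it is the carried-over effect of all noises older than the window, damped only through powers $\gtrsim H$ of $A$. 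The paper never needs it to be small: since $r_t$ is a zero-mean function of noises strictly older than (hence independent of) $\xi_t$, it only adds a positive semidefinite piece to the covariance, so $\mathbb{E}[\bar{\phi}_t\bar{\phi}_t^\top]\succeq \Gol\Sigma_{w,z,u}\Gol^{\!\top}$ holds exactly, with no discarding. Substantive: your concentration step is a genuinely different route. The paper bounds $\|\phi_t\|\leq\Upsilon_w\sqrt{H}$ with high probability, applies the matrix Azuma inequality (Theorem~\ref{azuma}) to $\sum_i\bigl(\phi_i\phi_i^\top-\mathbb{E}[\phi_i\phi_i^\top]\bigr)$, and finishes with Weyl's inequality, which yields the explicit burn-in $T_o$ polynomial in $\Upsilon_w$, $H$, $1/\sigma_o$, the noise scales and $\log(H(m+p)/\delta)$. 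Your stride-$2H$ blocking with a matrix Chernoff bound can be made to work, but the claim that the $\phi_i$ within one sub-sequence are mutually independent is not correct as stated: each $\phi_i$ depends on the \emph{entire} past through its output components $y_{i-1},\ldots,y_{i-H}$ (again the residual), so sub-sampled regressors are only approximately independent. To close this you must either truncate to the $\Gol\xi_i$ parts and propagate the $O(\rho(A)^{H})$ perturbation through the minimum-eigenvalue bound (harmless because $H$ scales like $\log T$), or invoke the block small-ball machinery you mention as a fallback; only after that step does the factor $1/2$ survive. The blocking route also inflates the burn-in by roughly a factor of $H$ relative to the paper's $T_o$, which is immaterial for the lemma's ``after sufficient time steps'' phrasing. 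With that repair, your argument gives the same conclusion by a different concentration device.
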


\begin{lemma}[Persistence of Excitation in Adaptive Control Period] \label{closedloop_persistence}
After sufficient time steps in adaptive control period of \Alg, with probability $1-3\delta$, we have
\begin{equation}
\sigma_{\min}\left(\sum_{i=1}^{t} \phi_i \phi_i^\top \right) \geq t \frac{\sigma_{c}^2 \min \{ \sigma_w^2, \sigma_z^2\}}{16}. 
\end{equation}
\end{lemma}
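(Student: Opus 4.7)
The plan is to express $\phi_t$ as a linear image of a bounded-length window of process and measurement noises under the closed-loop map $\Gcl$ of the current epoch's controller, plus an exponentially decaying truncation bias from the far past. Concretely, for a time $t$ in epoch $i$ after warm-up, let $n_t \in \R^{2H(n+m)}$ denote the stacked vector of the most recent $2H$ process noises $w$ and measurement noises $z$. Rolling back the predictor-form recursion $2H$ steps and substituting the optimistic controller $u_s = -K(\ttt_i)\hat x_{s|s,\ttt_i}$ shows that
\begin{equation*}
    \phi_t \;=\; \Gcl(\ttt_i)\, n_t \;+\; \xi_t,
\end{equation*}
where $\xi_t$ collects the contribution of everything older than $t-2H$ and is bounded in norm by a $\rho^{H}$-type factor coming from Assumption~\ref{Stabilizable set} (the spectra of $A - BK(\ttt_i)$ and $A-ALC$ are strictly inside the unit disk, uniformly over $\mathcal S$). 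Choosing $H$ in \Alg large enough (polylogarithmic in $T$) makes $\|\xi_t\|$ smaller than any fixed polynomial factor of $1/T$.

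Next, because the noises $w_s,z_s$ are i.i.d.\ Gaussian with covariances $\sigma_w^2 I$ and $\sigma_z^2 I$, we have $\E[n_t n_t^\top]\succeq \min\{\sigma_w^2,\sigma_z^2\}\, I$. Together with the per-epoch lower bound $\sigma_{\min}(\Gcl(\ttt_i))\geq \sigma_c$, this gives
\begin{equation*}
    \E\!\left[\phi_t \phi_t^\top \,\middle|\, \mathcal F_{t-2H}\right] \;\succeq\; \sigma_c^{2}\min\{\sigma_w^2,\sigma_z^2\}\, I \;-\; O(1/T^2)\,I,
\end{equation*}
where $\mathcal F_{t-2H}$ is the sigma-algebra generated by everything before the $2H$-window; the cross term with $\xi_t$ is absorbed into the $1/T^2$ slack. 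The analogue for the warm-up period is exactly Lemma~\ref{openloop_persistence}, which I would invoke as a template for the concentration step below. This first high-probability event (control of $\xi_t$ over the whole horizon) accounts for one of the three $\delta$'s.

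To pass from conditional expectation to an empirical bound on $\sum_i \phi_i \phi_i^\top$, I would partition time into non-overlapping blocks of length $2H$ so that within a block the $\phi_i \phi_i^\top$'s share the same noise window, and across blocks they are conditionally independent given past history. A standard matrix Azuma or Freedman-type inequality (or Lemma 4 of \citet{lale2020logarithmic}, which is designed for this exact closed-loop setting) then yields, with probability at least $1-\delta$,
\begin{equation*}
    \sum_{i=1}^{t} \phi_i\phi_i^\top \;\succeq\; \tfrac{1}{2}\sum_{i=1}^{t}\E[\phi_i\phi_i^\top\mid\mathcal F_{i-2H}] \;-\; O(\log(1/\delta))\, I,
\end{equation*}
for $t$ large enough that the deviation term is dominated. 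Plugging in the conditional lower bound and absorbing constants loses at most another factor of two, yielding the claimed $t\,\sigma_c^2 \min\{\sigma_w^2,\sigma_z^2\}/16$ bound.

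The remaining complication, and what I expect to be the main obstacle, is that the controller (and hence the map $\Gcl(\ttt_i)$) changes between epochs, so the process is not stationary; furthermore, the optimistic system $\ttt_i$ need not equal the true $\Theta$. I would handle this by running the block argument above \emph{within} each epoch $[2^i T_w, 2^{i+1} T_w)$, using the fact that \Alg's search constraint $\ttt_i\in\mathcal S$ guarantees $\sigma_{\min}(\Gcl(\ttt_i))\geq \sigma_c$ and $\rho(\tilde A_i - \tilde B_i K(\ttt_i))\leq \rho<1$ uniformly, then summing the epoch-wise lower bounds (doubling epochs make the last one dominate so no logarithmic loss appears). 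The third $\delta$ is consumed by the event that all $\ttt_i$ indeed lie in the confidence sets $\mathcal C_i$ (from Theorem~\ref{theo:closedloopid} and its corollaries), which is what justifies using the uniform constants $\sigma_c$, $\rho$, $\upsilon$ across epochs. A union bound over the three events delivers the $1-3\delta$ conclusion.
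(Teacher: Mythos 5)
There is a real gap in the step where you invoke $\sigma_{\min}(\Gcl(\ttt_i)) \geq \sigma_c$. The constraint \Alg enforces guarantees this lower bound only for $\tilde{\Gcl}$, the noise-evolution map of the \emph{optimistic} system running its \emph{own} optimal controller. But the covariates $\phi_t$ are generated by a mismatched loop: the \emph{true} system $\Theta$ driven by the optimistic gains $(\tk_i,\tl_i,\tc_i,\ldots)$, whose map is the time-varying $\Gcl_t$ built from blocks like $\mathbf{\tilde G_2^{(t)}}$ mixing $A,B,C$ with $\ta,\tb,\tc,\tl,\tk$. You cannot write $\phi_t = \Gcl(\ttt_i) n_t + \xi_t$; the correct identity is $\phi_t = \Gcl_t\, n_t + \mathbf{r}_t^c$, and nothing in the algorithm's search directly bounds $\sigma_{\min}(\Gcl_t)$ from below. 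The paper bridges exactly this point with a perturbation argument: for $\Tw \geq T_{\mathcal G}$ the confidence sets are tight enough that $\|\Gcl_t - \tilde{\Gcl}\| \leq \sigma_c/2$, and Weyl's inequality gives $\sigma_{\min}(\Gcl_t) \geq \sigma_c/2$. This step is where the warm-up-length requirement enters and where part of the factor in the final $\sigma_c^2\min\{\sigma_w^2,\sigma_z^2\}/16$ comes from; your argument as written would produce a bound with the wrong constant and, more importantly, rests on a singular-value guarantee for the wrong matrix.

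A second, smaller issue is your concentration step. The matrix Azuma bound you invoke (as in the paper) is valid for a \emph{fixed} model/controller, but the optimistic model is chosen from data, so the realized $\Gcl_t$ is random and the deviation bound must hold uniformly over all admissible models. The paper handles this with an explicit covering argument over $\Gcl$ matrices in a Frobenius ball of radius $G_r$, which is what generates the $H^2(m+p)(m+n)\log(G_r + 2/\epsilon)$ term in $T_c$ and costs another factor of two (taking $\sigma_c^2/4$ down to $\sigma_c^2/8$ before the final halving for $t \geq T_c$). Your blocking/conditional-independence idea could in principle sidestep covering if you condition so that the epoch's controller is measurable with respect to the conditioning sigma-algebra, but you neither carry this out nor acknowledge the uniformity problem, deferring instead to an unspecified lemma of \citet{lale2020logarithmic}. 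Finally, your accounting of the three $\delta$'s differs from the paper's (there, $2\delta$ comes from the boundedness event $\|\phi_t\|\leq \Upsilon_c\sqrt{H}$ via Lemma \ref{Boundedness} and $\delta$ from the covered Azuma bound); this is minor, but the two substantive points above need to be repaired for the proof to go through.
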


For two problem dependent parameters $\Upsilon_w$ and $\Upsilon_c$, that uniformly bound the components of $\phi$'s during the warm-up and adaptive control period respectively, we have the following theorem which combines Theorem \ref{theo:closedloopid} with Lemma \ref{openloop_persistence} and \ref{closedloop_persistence} to obtain the bound over $\|\mathbf{\hat{M}_t} - \mathbf{M}\|$.

\begin{theorem}\label{theo:id2norm}
During the warm-up period, $\|\phi_{t}\| \leq \Upsilon_w \sqrt{H}$ with high probability. After the warm-up period of $\Tw$, the initial estimation of the truncated ARX model, $\mathbf{\hat{M}_0}$, obeys 
\begin{equation*} 
    \|\mathbf{\hat{M}_0} - \mathbf{M}\| \leq  \frac{poly(m,H,p)}{ \min \{ \sigma_w, \sigma_z, \sigma_u \} \sigma_o \sqrt{\Tw}}.
\end{equation*}

During the adaptive control, with high probability $\|\Phi_{t}\| \leq \Upsilon_c \sqrt{H}$. For the adaptive control period at any time $t\geq 2\Tw$, the least squares estimate of the truncated ARX model  $\mathbf{\hat{M}_t}$ follows
\begin{equation*}
    \|\mathbf{\hat{M}_t} - \mathbf{M}\| \leq  \frac{poly(m,H,p)}{\sqrt{t \min\{ \sigma_o^2\sigma_w^2, \sigma_o^2\sigma_z^2, \sigma_o^2\sigma_u^2, \frac{\sigma_c^2\sigma_w^2 }{8}, \frac{\sigma_c^2\sigma_z^2 }{8} \}}}.
\end{equation*}
\end{theorem}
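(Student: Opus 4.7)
The core of the proof is a translation step: the self-normalized confidence guarantee of Theorem~\ref{theo:closedloopid} controls $\Tr\bigl((\hat{\mathbf{M}}_t - \mathbf{M}) V_t (\hat{\mathbf{M}}_t - \mathbf{M})^\top\bigr) \leq \beta_t$, and using the fact that $\sigma_{\min}(V_t)\|\hat{\mathbf{M}}_t - \mathbf{M}\|^2 \leq \Tr\bigl((\hat{\mathbf{M}}_t - \mathbf{M}) V_t (\hat{\mathbf{M}}_t - \mathbf{M})^\top\bigr)$, we obtain $\|\hat{\mathbf{M}}_t - \mathbf{M}\| \leq \sqrt{\beta_t/\sigma_{\min}(V_t)}$. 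So the plan reduces to (i) upper bounding $\beta_t$, which requires an upper bound on $\log\det(V_t)$, and hence a uniform bound on $\|\phi_t\|$, and (ii) lower bounding $\sigma_{\min}(V_t)$, for which the persistence of excitation lemmas \ref{openloop_persistence} and \ref{closedloop_persistence} are directly designed.

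For the $\|\phi_t\|\leq \Upsilon_w\sqrt{H}$ bound during warm-up, I would argue that in the predictor form, with $u_t \sim \mathcal{N}(0,\sigma_u^2 I)$ and stable $A$, the state $x_t$ is at its steady-state Gaussian distribution by our initialization assumption, with covariance bounded in terms of $\Phi(A)$, $\sigma_w$, and $\sigma_u\|B\|$. Consequently $y_t = Cx_t + z_t$ and $u_t$ are each Gaussian with variances expressible in the problem constants. A standard Gaussian tail bound plus a union bound over $t\in\{1,\ldots,T\}$ (and over the coordinates) yields $\max_{t}\|y_t\|, \max_t\|u_t\| = \mathcal{O}(\operatorname{polylog}(T/\delta))$ with probability $1-\delta$. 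Stacking $H$ such vectors gives $\|\phi_t\|^2 \leq H\cdot \Upsilon_w^2$ with $\Upsilon_w$ a polylog-in-$T$ constant depending on $\sigma_w,\sigma_z,\sigma_u,\|B\|,\|C\|,\Phi(A),\rho(A)$. For the adaptive control period the same calculation goes through once we verify that the closed-loop system under the optimistic controller is stable on the true system: this is where Assumption~\ref{Stabilizable set} (contractibility $\rho,\upsilon<1$) and the fact that the warm-up duration $\Tw$ is chosen large enough to guarantee optimistic controllers inherit stability become essential. The resulting covariance of $\phi_t$ is bounded uniformly, and Gaussian concentration plus a union bound again gives $\|\phi_t\|\leq \Upsilon_c\sqrt{H}$ for a polylog $\Upsilon_c$.

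To upper bound $\log\det(V_t)$, I would use $\det(V_t)\leq (\Tr(V_t)/((m+p)H))^{(m+p)H}$ with $\Tr(V_t)\leq (m+p)H\lambda + t\Upsilon^2 H$, which gives $\log\det(V_t)/\det(V) = \mathcal{O}((m+p)H\log(1 + t\Upsilon^2/\lambda))$. Plugging this into $\beta_t$ (together with the given choice of $H$ that absorbs the $t\sqrt{H}/T^2$ bias term into a $\mathcal{O}(1)$ contribution), one sees $\beta_t = \operatorname{poly}(m,H,p,\log T)$. Combining with Lemma~\ref{openloop_persistence} at $t=\Tw$ gives the warm-up bound, and combining with the tighter lower bound $\sigma_{\min}(V_t)\geq t\min\{\sigma_o^2\sigma_w^2,\sigma_o^2\sigma_z^2,\sigma_o^2\sigma_u^2,\sigma_c^2\sigma_w^2/8,\sigma_c^2\sigma_z^2/8\}$, obtained by taking the minimum of the warm-up and adaptive-control contributions to $V_t$, yields the stated bound for $t\geq 2\Tw$.

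The main obstacle, I expect, is the closed-loop bound $\|\phi_t\|\leq \Upsilon_c\sqrt{H}$ in the adaptive control period, because the $\phi_t$'s are generated under a sequence of data-dependent optimistic controllers rather than a fixed linear policy. Establishing the Gaussianity-style tail control requires a chain of reductions: first, the optimistic controller is guaranteed to stabilize the true system as long as model estimation errors are small, which is only ensured after the warm-up duration $\Tw$ is polynomial in the problem parameters; second, once stability is guaranteed uniformly along the trajectory, the truncated history $\phi_t$ admits a bounded covariance expressible through $\rho,\upsilon,\Gamma,\zeta$ via the contractibility assumption, and concentration applies. All other steps (tail bounds, determinant estimates, substitution into $\beta_t$) are routine given these building blocks.
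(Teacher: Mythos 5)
Your proposal is correct and follows essentially the same route as the paper's proof: it translates the self-normalized trace bound of Theorem~\ref{theo:closedloopid} through $\sigma_{\min}(V_t)$, bounds the $\log\det(V_t)$ term in $\beta_t$ via the high-probability bounds $\|\phi_t\|\leq \Upsilon_w\sqrt{H}$ (Gaussian concentration of states/inputs/outputs during warm-up) and $\|\phi_t\|\leq \Upsilon_c\sqrt{H}$ (boundedness under the optimistic controllers, which the paper establishes in Lemma~\ref{Boundedness} exactly along the stability/contraction lines you sketch), and lower bounds $\sigma_{\min}(V_t)$ by Lemmas~\ref{openloop_persistence} and \ref{closedloop_persistence}, combining the warm-up and adaptive contributions for $t\geq 2\Tw$. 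The steps you flag as the main obstacle are precisely the ingredients the paper supplies (Lemma~\ref{Boundedness} and the covering-based persistence-of-excitation argument), so no gap.
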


Note that the choice of $H$ depends on the horizon, which is needed to be known apriori. Since the dependency of $H$ in the horizon $T$ is $\log(T)$, one can deploy the standard doubling trick to relax this requirement.\footnote{Doubling trick suggests to set the horizon to a time step, and in a repeated fashion, whenever that time step is reached, double that time step, and continue.}

\subsection{System Identification}
After estimating $\mathbf{\hat{M}_{t}}$, \Alg constructs confidence sets for the unknown system parameters and uses these confidence sets to come up with the optimistic controller to exploit the information gathered. \Alg uses a subspace identification algorithm \Sys introduced by \citet{lale2020logarithmic} and for completeness given in Algorithm \ref{SYSID} in Appendix \ref{SupSec:ConfSet}. \Sys is similar to Ho-Kalman method~\citep{ho1966effective} and estimates the system parameters from $\mathbf{\hat{M}_{t}}$. First of all, notice that $\mathbf{M} = [\mathbf{F}, \mathbf{G}]$ where 
\begin{align*}
    \mathbf{F} &=  \left[CF, \enskip C\Bar{A}F, \enskip \ldots, \enskip C \Bar{A}^{H-1} F \right] \in \mathbb{R}^{m \times mH}, \\
    \mathbf{G} &= \left[ CB, \!\enskip C\Bar{A}B, \enskip \ldots, \enskip C \Bar{A}^{H-1} B \right] \in \mathbb{R}^{m \times pH}.
\end{align*}
Given the estimate for the truncated ARX model
\begin{align*}
    \mathbf{\hat{M}_{t}} = [\mathbf{\hat{F}_{t,1}},\ldots, \mathbf{\hat{F}_{t,H}}, \mathbf{\hat{G}_{t,1}},\ldots, \mathbf{\hat{G}_{t,H}}],
\end{align*}
where $\mathbf{\hat{F}_{t,i}}$ is the $i$'th $m \times m$ block of $\mathbf{\hat{F}_{t}}$, and $\mathbf{\hat{G}_{t,i}}$ is the $i$'th $m \times p$ block of $\mathbf{\hat{G}_{t}}$ for all $1 \leq i \leq H$, \Sys constructs two $d_1 \times (d_2+1)$ Hankel matrices $\mathbf{\mathcal{H}_{\hat{F}_t}}$ and $\mathbf{\mathcal{H}_{\hat{G}_t}}$ such that $(i,j)$th block of Hankel matrix is $\mathbf{\hat{F}_{t,i}}$ and $\mathbf{\hat{G}_{t,i}}$ respectively. Then, it forms the following matrix $\hat{\mathcal{H}}_t$.
\begin{align*}
    \hat{\mathcal{H}}_t = \left[ \mathbf{\mathcal{H}_{\hat{F}_t}}, \enskip \mathbf{\mathcal{H}_{\hat{G}_t}} \right].
\end{align*}
Recall that the dimension of latent state, $n$, is the order of the system for the observable and controllable system. For some problem dependent constant $c_H$, let $H \geq \max \left\{2n+1, \frac{\log(c_H T^2 \sqrt{m} / \sqrt{\lambda})}{\log(1/\upsilon)} \right\}$, we can pick $d_1 \geq n$ and $d_2 \geq n$ such $d_1+d_2+1 = H$. This guarantees that the system identification problem is well-conditioned. Using  Definition \ref{c_o_def}, if the input to the \Sys was $\mathbf{M} = [\mathbf{F}, \mathbf{G}]$ then constructed Hankel matrix, $\mathcal{H}$ would be rank $n$, 
\begin{align*}
     \mathcal{H} &=  [ C^\top, ~\ldots, \enskip (C\Bar{A}^{d_1-1}) ^\top] ^\top [F,~\ldots,~ \Bar{A}^{d_2}F,~B,~\ldots,~\Bar{A}^{d_2}B]\\
     &=\mathbf{O}(\bar{A},C,d_1) \enskip [\mathbf{C}(\bar{A},F,d_2+1),\quad \Bar{A}^{d_2}F, \quad \mathbf{C}(\bar{A},B,d_2+1), \quad \Bar{A}^{d_2}B] \\
     &= \mathbf{O}(\bar{A},C,d_1) \enskip [F,\quad \Bar{A}\mathbf{C}(\bar{A},F,d_2+1), \quad B, \quad \Bar{A}\mathbf{C}(\bar{A},B,d_2+1)].
\end{align*}
Notice that $\mathbf{M}$ and $\mathcal{H}$ are uniquely identifiable for a given system $\Theta$, whereas for any invertible $\mathbf{T}\in \R^{n \times n}$, the system resulting from
\begin{align*}
    A' = \mathbf{T}^{-1}A\mathbf{T},~ B' = \mathbf{T}^{-1}B,~ C' = C \mathbf{T},~ L' = \mathbf{T}^{-1}L 
\end{align*}
gives the same $\mathbf{M}$ and $\mathcal{H}$. Similar to Ho-Kalman algorithm, \Sys computes the SVD of $\mathbf{\hat{M}_{t}}$ and estimates the extended observability and controllability matrices and eventually system parameters up to similarity transformation. To this end, \Sys constructs $\hat{\mathcal{H}}_t^-$ by discarding $(d_2 + 1)$th and $(2d_2 + 2)$th block columns of $\hat{\mathcal{H}}_t$, \textit{i.e.} if it was $\mathcal{H}$ then we have,
\begin{align*}
    \mathcal{H}^- = \mathbf{O}(\bar{A},C,d_1) \enskip [\mathbf{C}(\bar{A},F,d_2+1), \quad \mathbf{C}(\bar{A},B,d_2+1)].
\end{align*}
The algorithm then calculates $\hat{\mathcal{N}}_t$, the best rank-$n$ approximation of $\hat{\mathcal{H}}_t^-$, obtained by setting its all but top $n$ singular values to zero. The estimates of $\mathbf{O}(\bar{A},C,d_1)$,  $\mathbf{C}(\bar{A},F,d_2+1)$ and $\mathbf{C}(\bar{A},B,d_2+1)$ are given as 
\begin{equation*}
    \hat{\mathcal{N}}_t = \mathbf{U_t} \mathbf{\Sigma_t}^{1/2}~ \mathbf{\Sigma_t}^{1/2}\mathbf{V_t}^\top =  \mathbf{\hat{O}_t}(\bar{A},C,d_1) \enskip [\mathbf{\hat{C}_t}(\bar{A},F,d_2+1), \quad \mathbf{\hat{C}_t}(\bar{A},B,d_2+1)].
\end{equation*}

From these estimates \Sys recovers $\hat{C}_t$ as the first $m \times n$ block of $\mathbf{\hat{O}_t}(\bar{A},C,d_1)$, $\hat{B}_t$ as the first $n \times p$ block of $\mathbf{\hat{C}_t}(\bar{A},B,d_2+1)$ and $\hat{F}_t$ as the first $n \times m$ block of $\mathbf{\hat{C}_t}(\bar{A},F,d_2+1)$. Let $\hat{\mathcal{H}}_t^+$ be the matrix obtained by discarding $1$st and $(d_2+2)$th block columns of $\hat{\mathcal{H}}_t$, \textit{i.e.} if it was $\mathcal{H}$ then
\begin{align*}
    \mathcal{H}^+ = \mathbf{O}(\bar{A},C,d_1) \enskip \Bar{A} \enskip [\mathbf{\hat{C}_t}(\bar{A},F,d_2+1), \quad \mathbf{\hat{C}_t}(\bar{A},B,d_2+1)].
\end{align*}
Therefore, \Sys recovers
\begin{align*}
    \hat{\Bar{A}}_t = \mathbf{\hat{O}_t}^\dagger(\bar{A},C,d_1) \enskip \hat{\mathcal{H}}_t^+ \enskip [\mathbf{\hat{C}_t}(\bar{A},F,d_2+1), \quad \mathbf{\hat{C}_t}(\bar{A},B,d_2+1)]^\dagger.
\end{align*}
Using the definition of $\Bar{A} = A - FC$, the algorithm obtains $\hat{A}_t = \hat{\Bar{A}}_t + \hat{F}_t \hat{C}_t$. Recall that $F = AL$. Using the Assumption \ref{AssumContObs}, \Sys finally recovers $\hat{L}_t$ as the first $n \times m$ block of $ \hat{A}_t^\dagger\mathbf{\hat{O}_t}^\dagger(\bar{A},C,d_1)\hat{\mathcal{H}}_t^-$. The following Theorem 4 of \citet{lale2020logarithmic} essentially translates the bound in Theorem \ref{theo:closedloopid} to individual bounds of system parameter estimates. It provides the high probability confidence sets required for deploying \OFU principle for the adaptive control. Note that Theorem 4 of \citet{lale2020logarithmic} does not estimate $L$ but in the current result we need to recover $L$ for controller construction. 

\begin{theorem}[Confidence Set Construction, \citep{lale2020logarithmic}] \label{ConfidenceSets}
Let $\mathcal{H}$ be the concatenation of two Hankel matrices obtained from $\mathbf{M}$. Let $\bar{A}, \bar{B}, \bar{C}, \bar{L}$ be the system parameters that \Sys provides for $\mathbf{M}$. At time step $t$, let $\hat{A}_t, \hat{B}_t, \hat{C}_t, \hat{L}_t$ denote the system parameters obtained by \Sys using the least squares estimate of the truncated ARX model, $\mathbf{\hat{M}_{t}}$. Suppose Assumptions \ref{Stable} and \ref{AssumContObs} hold, thus $\mathcal{H}$ is rank-$n$. After the warm-up period of $\Tw$, for the given choice of $H$, there exists a unitary matrix $\mathbf{T} \in \R^{n \times n}$ such that, with high probability, $\bar{\Theta}=(\bar{A}, \bar{B}, \bar{C}, \bar{L}) \in (\mathcal{C}_A \times \mathcal{C}_B \times \mathcal{C}_C \times \mathcal{C}_L) $ where
\begin{align}
    &\mathcal{C}_A(t) = \left \{A' \in \R^{n \times n} : \|\hat{A}_t - \mathbf{T}^\top A' \mathbf{T} \| \leq \beta_A(t) \right\}, \nonumber \\
    &\mathcal{C}_B(t) = \left \{B' \in \R^{n \times p} : \|\hat{B}_t - \mathbf{T}^\top B' \| \leq  \beta_B(t) \right\}, \nonumber \\
    &\mathcal{C}_C(t) = \left\{C' \in \R^{m \times n} : \|\hat{C}_t -  C'  \mathbf{T} \| \leq \beta_C(t) \right\}, \nonumber \\
    &\mathcal{C}_L(t) = \left \{L' \in \R^{p \times m} : \|\hat{L}_t - \mathbf{T}^\top L' \| \leq  \beta_L(t) \right\}, \nonumber
\end{align}
for
\begin{align}
&\beta_A(t) = c_1\left( \frac{\sqrt{nH}(\|\mathcal{H}\| + \sigma_n(\mathcal{H}))}{\sigma_n^2(\mathcal{H})} \right)\|\mathbf{\hat{M}_{t} } - \mathbf{M}\|, \quad \beta_B(t) = \beta_C  =  \sqrt{\frac{20nH }{\sigma_n(\mathcal{H})}}\|\mathbf{\hat{M}_{t} } - \mathbf{M}\|, \\
&\qquad \qquad \qquad \beta_L(t) =  \frac{c_2\|\mathcal{H} \|}{\sqrt{\sigma_n(\mathcal{H})}}\beta_A \!+\! c_3 \frac{\sqrt{nH}(\|\mathcal{H}\| + \sigma_n(\mathcal{H}))}{\sigma_n^{3/2}(\mathcal{H})}\|\mathbf{\hat{M}_{t} } \!-\! \mathbf{M}\|, \nonumber
\end{align}
for some problem dependent constants $c_1, c_2$ and $c_3$.
\end{theorem}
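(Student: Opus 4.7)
The plan is to reduce the theorem to a perturbation analysis of the Ho-Kalman-style recovery performed by \Sys. The key observation is that the noisy input $\mathbf{\hat{M}_t}$ differs from $\mathbf{M}$ only in its blocks, so the constructed Hankel matrix $\hat{\mathcal{H}}_t$ differs from $\mathcal{H}$ only in a structured way. Because each block of $\mathbf{M}$ appears at most $\min(d_1,d_2+1)\leq H$ times in $\mathcal{H}$, a direct counting argument gives $\|\hat{\mathcal{H}}_t - \mathcal{H}\| \leq \sqrt{nH}\,\|\mathbf{\hat{M}_t} - \mathbf{M}\|$, and the same bound holds for the column-deleted variants $\hat{\mathcal{H}}_t^-$ and $\hat{\mathcal{H}}_t^+$ since they are submatrices. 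The Eckart--Young truncation to rank $n$ then yields $\|\hat{\mathcal{N}}_t - \mathcal{H}^-\| \leq 2\|\hat{\mathcal{H}}_t^- - \mathcal{H}^-\|$ via the triangle inequality, since $\mathcal{H}^-$ is itself rank $n$.

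Next I would invoke standard SVD perturbation tools to pass from $\|\hat{\mathcal{N}}_t - \mathcal{H}^-\|$ to bounds on the balanced factors. Assumptions~\ref{Stable} and~\ref{AssumContObs} together with the choice $H \geq 2n+1$ ensure that the true observability matrix $\mathbf{O}(\bar{A},\bar{C},d_1)$ and the stacked controllability matrix $[\mathbf{C}(\bar A,\bar F,d_2{+}1),\mathbf{C}(\bar A,\bar B,d_2{+}1)]$ are both rank $n$, so $\sigma_n(\mathcal{H})>0$. If the warm-up length $\Tw$ in Theorem~\ref{theo:id2norm} is large enough that $\|\mathbf{\hat{M}_t}-\mathbf{M}\| \leq c\,\sigma_n(\mathcal{H})/\sqrt{nH}$ for a small absolute constant $c$, then Weyl's inequality guarantees $\sigma_n(\hat{\mathcal{N}}_t) \geq \sigma_n(\mathcal{H})/2$ and Wedin's $\sin\Theta$ theorem applied to the balanced factorization produces a unitary $\mathbf{T}\in\R^{n\times n}$ such that both $\|\mathbf{U}_t\mathbf{\Sigma}_t^{1/2} - \mathbf{O}(\bar A,\bar C,d_1)\mathbf{T}\|$ and $\|\mathbf{\Sigma}_t^{1/2}\mathbf{V}_t^\top - \mathbf{T}^\top[\mathbf{C}_{\bar F},\mathbf{C}_{\bar B}]\|$ are bounded by $O(\|\hat{\mathcal{H}}_t^- - \mathcal{H}^-\|/\sqrt{\sigma_n(\mathcal{H})})$. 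Reading off the first block of each factor directly delivers the $\beta_C$ and $\beta_B$ bounds on $\hat{C}_t$, $\hat{B}_t$, and $\hat{F}_t$.

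For $\hat{A}_t$ I would analyze the shift formula $\hat{\bar{A}}_t = \mathbf{\hat{O}}_t^\dagger\, \hat{\mathcal{H}}_t^+\, [\mathbf{\hat{C}}_{F,t},\mathbf{\hat{C}}_{B,t}]^\dagger$. Stewart's pseudo-inverse perturbation bound combined with the Weyl lower bound above gives $\|\mathbf{\hat{O}}_t^\dagger\| \lesssim 1/\sqrt{\sigma_n(\mathcal{H})}$ and similarly for the controllability factor, and the perturbation $\|\hat{\mathcal{H}}_t^+ - \mathcal{H}^+\| \leq \sqrt{nH}\|\mathbf{\hat{M}_t}-\mathbf{M}\|$ propagates multiplicatively. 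Expanding the three-factor product perturbation and collecting terms yields the claimed $\beta_A$ bound with its $(\|\mathcal{H}\|+\sigma_n(\mathcal{H}))/\sigma_n^2(\mathcal{H})$ prefactor: the two inverse singular-value factors come from the two pseudo-inverses, while the cross term involving $\|\mathcal{H}\|$ arises because the middle factor $\hat{\mathcal{H}}_t^+$ has norm $\Theta(\|\mathcal{H}\|)$ and multiplies into the pseudo-inverse perturbation. Then $\hat{A}_t = \hat{\bar{A}}_t + \hat{F}_t\hat{C}_t$ inherits a bound of the same order after using the boundedness of $\hat{F}_t$ and $\hat{C}_t$. Finally, $\hat{L}_t$ is recovered from $\hat{A}_t^\dagger \mathbf{\hat{O}}_t^\dagger \hat{\mathcal{H}}_t^-$, so one additional pseudo-inverse perturbation applied to $\hat{A}_t$ — whose error is controlled by $\beta_A$ — contributes the first term $c_2\|\mathcal{H}\|\beta_A/\sqrt{\sigma_n(\mathcal{H})}$, while the direct propagation of $\|\hat{\mathcal{H}}_t^- - \mathcal{H}^-\|$ through the remaining factors contributes the second term.

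The main obstacle is maintaining a single unitary $\mathbf{T}$ consistently across all four parameter bounds: $\mathbf{T}$ is defined by Wedin applied once to $\hat{\mathcal{N}}_t$, and one must verify that the same similarity transform intertwines $\hat{\bar{A}}_t$ with $\bar A$ and $\hat{L}_t$ with $\bar L$. This works because every derived estimate is obtained by linear operations on $\mathbf{\hat{O}}_t$ and $[\mathbf{\hat{C}}_{F,t},\mathbf{\hat{C}}_{B,t}]$ coming from a fixed balanced SVD, and both the shift and the pseudo-inverse operations pass $\mathbf{T}$ through unchanged by construction. A secondary technical difficulty is verifying that the warm-up length $\Tw$ in Theorem~\ref{theo:id2norm} is simultaneously large enough to validate (i) Weyl/Wedin applicability, (ii) the pseudo-inverse perturbation bound on $\mathbf{\hat{O}}_t^\dagger$, and (iii) the pseudo-inverse perturbation bound on $\hat{A}_t^\dagger$ used in the $\hat{L}_t$ recovery — all three require the corresponding spectral gap to remain of order $\sigma_n(\mathcal{H})$, so the implicit constants in $\Tw$ must be chosen to cover the worst of these three conditions.
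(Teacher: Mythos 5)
Your proposal follows essentially the same route as the paper: bound the Hankel perturbation by $O(\sqrt{H})\|\mathbf{\hat{M}_t}-\mathbf{M}\|$, double it through the rank-$n$ truncation, apply a balanced-factorization SVD perturbation lemma (the paper uses Lemma B.1 of \citet{oymak2018non}, which plays the role of your Wedin step and supplies the single unitary $\mathbf{T}$ and the $\sqrt{n}$ factor), read off $\hat B_t,\hat C_t,\hat F_t$ as submatrices, and then propagate through the shift formula and the Moore--Penrose perturbation bounds of \citet{meng2010optimal} to obtain $\beta_A$ and $\beta_L$, with the warm-up length chosen so that all the spectral-gap conditions hold simultaneously. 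The only discrepancy is minor bookkeeping of where the $\sqrt{n}$ versus $\sqrt{H}$ factors enter, which does not affect the final bounds.
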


The proof is given in the Appendix \ref{SupSec:ConfSet}. It combines Lemma B.1 of \citet{oymak2018non} with careful perturbation analysis on the system parameter estimates provided by \Sys. 

\subsection{Adaptive Control}
\label{ControlDesign}
Using the confidence sets, \Alg implements \OFU principle. At time $t$, the algorithm chooses a system $\ttt_t = (\ta_t, \tb_t, \tc_t, \tl_t)$ from $\mathcal{C}_t \cap \mathcal{S}$ where $\mathcal{C}_t \coloneqq (\mathcal{C}_A(t) \times \mathcal{C}_B(t) \times \mathcal{C}_C(t) \times \mathcal{C}_L(t))$ such that 
\begin{equation} \label{optimistic}
    J(\ttt_t) \leq \inf_{\Theta' \in \mathcal{C}_t \cap \mathcal{S}} J(\Theta')+ 1/T.
\end{equation}
The algorithm designs the optimal feedback policy $(\tp_t, \tk_t, \tl_t)$ for the chosen system $\ttt_t$. It uses this optimistic controller to control the underlying system $\Theta$ for twice as long as the duration of the previous control policy. This technique known as ``doubling trick'' in reinforcement learning and online learning prevents frequent policy updates and balances the policy changes so that the overall regret of the algorithm is affected by a constant factor only.

\section{Regret Analysis of \Alg}
Now that the confidence set constructions and the adaptive control procedure of \Alg are explained, it only remains to analyze the regret of \Alg. Lemma 4.1 of \citet{lale2020regret} shows that the random exploration in the warm-up period acquires linear regret, \textit{i.e.} $\OO(\Tw)$. 

In order to analyze the regret obtained during the adaptive control period, we first need to show that system will be well-controlled during the adaptive control period. The following lemma achieves that. 
\begin{lemma} \label{Boundedness}
Suppose Assumptions \ref{Stable}-\ref{Stabilizable set} hold. After the warm-up period of $\Tw$, \Alg satisfies the following with high probability for all $T \geq t\geq \Tw$, 
\begin{enumerate}
    \item $\Theta \in (\mathcal{C}_A(t) \times \mathcal{C}_B(t) \times \mathcal{C}_C(t) \times \mathcal{C}_L(t)) $
    \item $\| \hat{x}_{t|t,\tth}\| \leq \tilde{\mathcal{X}}$
    \item $\| y_t \| \leq \tilde{\mathcal{Y}}$ 
\end{enumerate}
where $\tilde{\mathcal{X}}, \tilde{\mathcal{Y}} = \OO(\sqrt{\log(T)})$. Here, $\OO$ hides the problem dependent constants.  
\end{lemma}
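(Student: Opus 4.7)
The plan centers on bootstrapping from the confidence-set guarantee (Part 1) to joint high-probability boundedness of the optimistic state estimate and the observation (Parts 2 and 3), which must in fact be handled together because each depends on the other.

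\textbf{Part 1.} I would invoke Theorem \ref{theo:closedloopid}, which gives a single high-probability event on which $\mathbf{M} \in \mathcal{C}_\mathbf{M}(t)$ uniformly in $t$, and then apply Theorem \ref{ConfidenceSets}, whose perturbation bounds deterministically translate $\mathbf{M}$-containment into parameter-level containment (up to the common similarity transform $\mathbf{T}$). A single union bound over these two events gives $\Theta \in \mathcal{C}_A(t)\times\mathcal{C}_B(t)\times\mathcal{C}_C(t)\times\mathcal{C}_L(t)$ simultaneously for every $t \geq \Tw$. The key downstream quantity is that since $\ttt_t \in \mathcal{C}_t$ by the \OFU rule, the triangle inequality yields $\|\ttt_t - \Theta\| \lesssim \max\{\beta_A(t),\beta_B(t),\beta_C(t),\beta_L(t)\}$, which by Theorem \ref{theo:id2norm} decays as $\wt{\OO}(1/\sqrt{t})$.

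\textbf{Parts 2 and 3, joint analysis.} Substituting $u_t = -\tk_t\,\hat{x}_{t|t,\tth_t}$ into \eqref{estimation_wt_y}--\eqref{estimation_wo_y} gives the optimistic-filter recursion
\[
\hat{x}_{t|t,\tth_t} = (I - \tl_t \tc_t)(\ta_t - \tb_t \tk_t)\,\hat{x}_{t-1|t-1,\tth_t} + \tl_t\, y_t,
\]
while $y_t = C x_t + z_t$ in turn depends on the full history of controls. I would therefore study the augmented state $\xi_t = (x_t,\; \hat{x}_{t|t,\tth_t})$, which evolves as a linear system driven by the Gaussian noises $(w_t, z_t)$ with transition matrix depending on $(\ttt_t, \Theta)$. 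Stability of this augmented matrix is the core technical step: in the matched case $\ttt_t = \Theta$ it follows from the standard separation principle together with the uniform bounds $\rho, \upsilon < 1$ from Assumption \ref{Stabilizable set}; in the mismatched case I would combine Assumption \ref{Stabilizable set} with continuity of $K(\cdot)$ and $L(\cdot)$ as DARE solutions to argue that for $\Tw$ of the polynomial size declared in Theorem \ref{total regret main text}, $\|\ttt_t - \Theta\|$ is small enough that a common Lyapunov function on a neighborhood in $\mathcal{S}$ certifies uniform contractivity. Given this stability, $\xi_t$ is sub-Gaussian with covariance bounded uniformly in $t$, and Gaussian tail inequalities together with a union bound over $t \leq T$ yield $\|\hat{x}_{t|t,\tth_t}\|$ and $\|x_t\|$ both $\OO(\sqrt{\log T})$, whence $\|y_t\| \leq \|C\|\|x_t\| + \|z_t\| = \OO(\sqrt{\log T})$ after a separate tail bound on $z_t$.

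\textbf{Main obstacle.} The hard part is the mismatch: both the controller $\tk_t$ and the filter gain $\tl_t$ are designed from the same incorrect model $\ttt_t$, so the separation principle does not directly give stability of the augmented system for $\Theta$. The resolution is the Lyapunov/perturbation argument sketched above: Assumption \ref{Stabilizable set} provides stability with a uniform margin on $\mathcal{S}$, and Part 1 ensures that the realized $\ttt_t$ stays inside a sufficiently small neighborhood of $\Theta$ from the end of the warm-up onward. Quantifying the neighborhood radius needed for the Lyapunov certificate, and converting that radius into the required estimation accuracy via Theorem \ref{theo:id2norm}, is exactly what pins down the polynomial dependence of $\Tw$ on $\sigma_o, \sigma_c, \upsilon, \zeta, \Gamma, \rho, \Phi(A)$, and the dimensions.
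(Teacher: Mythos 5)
Your plan is essentially correct, but for Parts 2--3 it takes a genuinely different route from the paper, which never argues stability of the augmented pair $(x_t,\hat{x}_{t|t,\ttt_t})$. The paper works with the optimistic filter recursion alone (equation (\ref{estimation propagate})): its transition matrix $\mathbf{N}_t=(I-\tl_t(\tc_t-C))(\ta_{t-1}-\tb_{t-1}\tk_{t-1})$ is an explicit \emph{spectral-norm} contraction, $\|\mathbf{N}_t\|\le(1+\rho)/2$, directly from Assumption \ref{Stabilizable set} ($\|\ta-\tb\tk\|\le\rho$, $\|\tl\|\le\zeta$) once $\|\tc_t-C\|\le\frac{1-\rho}{2\zeta\rho}$ --- no Lyapunov function, no separation principle, no continuity of $K(\cdot)$ or $L(\cdot)$ in the model is needed. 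The driving term is then split as $\tl_tC(x_t-\hat{x}_{t|t-1,\Theta})+\tl_tC(\hat{x}_{t|t-1,\Theta}-\hat{x}_{t|t-1,\ttt})+\tl_tz_t$: the first piece is the \emph{true} steady-state Kalman prediction error, a stationary Gaussian whose law is independent of the adaptively chosen inputs, so it is $\OO(\sqrt{\log T})$ with no closed-loop stability argument at all; the mismatch piece $\Delta_t$ is controlled by a separate inductive contraction imported from \citet{lale2020regret}, and the conditions $T_\alpha,T_\beta,T_\gamma,T_u$ for that contraction are what actually pin down $\Tw$; the bound on $y_t$ follows from the same decomposition. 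What your joint-state route buys is conceptual uniformity (one linear system, one stability certificate); what it costs is exactly the two extra ingredients you flag: the augmented matrix is generally not a norm contraction even in the matched case, so the weighted-norm/Lyapunov certificate must be uniform over the switching sequence of optimistic models, and comparing the mismatched loop to the matched one forces you to relate $\tk_t=K(\ttt_t)$ to $K(\Theta)$ via Riccati-perturbation continuity --- something the paper's splitting avoids by only comparing $(\ta_t,\tb_t,\tc_t,\tl_t)$ to $\Theta$ through the confidence radii and using $\|\tk_t\|\le\Gamma$. One imprecision to repair: since $\tk_t,\tl_t$ are data-dependent, $\xi_t$ is not Gaussian and has no deterministic covariance; the correct statement is a pathwise bound $\|\xi_t\|\le \kappa\gamma^{t-\Tw}\|\xi_{\Tw}\|+\kappa\sum_{s}\gamma^{t-s}\|(w_{s},z_{s+1})\|$ on the contraction event, followed by a union bound over the Gaussian noise norms --- which is precisely how the paper sums its geometric series --- together with the WLOG reduction $\mathbf{T}=I$ for the similarity transform. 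With those repairs your argument yields the same $\tilde{\mathcal{X}},\tilde{\mathcal{Y}}=\OO(\sqrt{\log T})$ conclusion.
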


The proof of the lemma with the precise expressions is given in Appendix \ref{SupSec:Boundedness}. This lemma is critical for the regret analysis due to the nature of the adaptive control problem in partially observable environments. The inaccuracies in the system parameter estimates affect both the optimal feedback gain synthesis and the estimation of the underlying state. If these inaccuracies are not tolerable in the adaptive control of the system, they will accumulate fast and cause explosion and unboundedness in the input and the output of the system. This would result in linear, and potentially super linear regret. The main technical challenge in the proof is to show that with $\Tw$ length warm-up period, the error between the optimistic controller's state estimation $\hat{x}_{t|t,\tth}$ and the true state estimation $\hat{x}_{t|t,\Theta}$ does not blow up. Lemma \ref{Boundedness} shows that while the system parameter estimates are refining, the input to the system and the system's output stays bounded during the adaptive control period. 

Given the verification of stability in the adaptive control period, we bound the regret of adaptive control. The regret analysis is based on the Bellman optimality equation for \LQG control problem provided in Lemma 4.3 of \citet{lale2020regret}. The following theorem gives the regret upper bound of the adaptive control period of \Alg.

\begin{theorem}[The regret of adaptive control] \label{adaptive control regret}
Suppose Assumptions \ref{Stable}-\ref{Stabilizable set} hold. After the warm-up period of $\Tw$, with high probability, for any time $T$ in adaptive control period, the regret of \Alg is bounded as follows:
\begin{equation}
    \reg(T) = \tilde{\OO}\left(\sqrt{T} \right).
\end{equation}
where $\tilde{\OO}(\cdot)$ hides the logarithmic factors and problem dependent constants.  
\end{theorem}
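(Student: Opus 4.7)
The plan is to follow the standard optimism-based regret decomposition tailored to \LQG, but with the careful additional bookkeeping needed to handle the partially observable state. Let $\ttt_i = (\ta_i,\tb_i,\tc_i,\tl_i)$ denote the optimistic system chosen at epoch $i$, let $t_i = 2^i\Tw$ be the epoch start, and let $I(T) = O(\log T)$ be the number of epochs that finish by time $T$. By the $1/T$ tolerance in (\ref{optimistic}) and the fact that each epoch contributes at most $2t_i$ steps, telescoping the optimism error gives $\sum_{t=\Tw}^T (J(\ttt_{i(t)}) - J_\star(\Theta)) \leq O(1)$, so it suffices to bound $\sum_t (c_t - J(\ttt_{i(t)}))$.

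The second step is to use the Bellman equation for each optimistic system. For each $\ttt_i$ there is a quadratic value function $V_{\ttt_i}$ of the predictor state satisfying
\begin{equation*}
J(\ttt_i) + V_{\ttt_i}(\hat{x}_{t|t,\ttt_i}) \;=\; \E_{\ttt_i}\!\left[\, c_t^{\ttt_i} + V_{\ttt_i}(\hat{x}_{t+1|t+1,\ttt_i}) \;\big|\; \history_t \right],
\end{equation*}
because $u_t$ is the optimal control for $\ttt_i$. Writing $c_t - J(\ttt_i) = (c_t - c_t^{\ttt_i}) + (c_t^{\ttt_i} - J(\ttt_i))$ and substituting the Bellman identity yields the decomposition
\begin{equation*}
\sum_{t=\Tw}^T (c_t - J(\ttt_{i(t)})) \;=\; \underbrace{\sum_t (c_t - c_t^{\ttt_{i(t)}})}_{\text{(I): model misspecification}} \;+\; \underbrace{\sum_t \bigl[V_{\ttt_{i(t)}}(\hat{x}_{t|t,\ttt_{i(t)}}) - V_{\ttt_{i(t)}}(\hat{x}_{t+1|t+1,\ttt_{i(t)}})\bigr]}_{\text{(II): telescoping}} \;+\; \underbrace{\sum_t \zeta_t}_{\text{(III): martingale}}
\end{equation*}
where $\zeta_t$ is the one-step Bellman martingale difference under $\ttt_{i(t)}$.

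The final step is to control the three terms. Term (II) telescopes inside each epoch, leaving only $I(T)$ boundary terms of the form $V_{\ttt_i}(\hat{x}_{t_i|t_i,\ttt_i}) - V_{\ttt_{i-1}}(\hat{x}_{t_i|t_i,\ttt_{i-1}})$; Lemma \ref{Boundedness} gives $\|\hat x_{t|t,\tth}\| = \tilde\OO(1)$ with high probability, and the bounded $V_{\ttt_i}$ together with $I(T) = O(\log T)$ yields (II) $= \tilde\OO(1)$. Term (III) is handled by Azuma--Hoeffding: since $V_{\ttt_i}$ is quadratic and predictor states are $\tilde\OO(1)$, each $\zeta_t$ is bounded (up to log factors), giving (III) $= \tilde\OO(\sqrt{T})$. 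For term (I), the crucial perturbation argument is to show that $|c_t - c_t^{\ttt_{i(t)}}| = \tilde\OO\!\bigl(\|\ttt_{i(t)} - \Theta\|\bigr)$ uniformly in $t$; this requires comparing the predictor-state trajectories under $(A,B,C,L)$ and $(\ta,\tb,\tc,\tl)$ driven by the same noise realization, exploiting that both closed loops are contractible (Assumption \ref{Stabilizable set}) and Lemma \ref{Boundedness}. Combining this per-step bound with Theorems \ref{theo:closedloopid}, \ref{theo:id2norm} and \ref{ConfidenceSets}, which give $\|\ttt_{i(t)} - \Theta\| = \tilde\OO(1/\sqrt{t_{i(t)}})$, and summing across the doubling-trick epochs produces $\sum_i 2t_i \cdot \tilde\OO(1/\sqrt{t_i}) = \tilde\OO(\sqrt{T})$.

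\textbf{Main obstacle.} The hard step is term (I): propagating parameter errors through the Kalman-filter state estimator with the \emph{wrong} model while the true innovation is generated by $\Theta$. The perturbation must be linear in $\|\ttt - \Theta\|$ rather than growing with the epoch length, which rests on the contraction of the closed-loop $\ta - \tb\tk$ and of the predictor dynamics $\ta - \tl\tc\,\ta$ that are guaranteed only through the stability in Assumption \ref{Stabilizable set} and the high-probability boundedness from Lemma \ref{Boundedness}. Once this per-step Lipschitz-in-parameters bound is established, the remaining bookkeeping is routine summation across epochs.
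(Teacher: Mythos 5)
Your high-level skeleton (optimism up to the $1/T$ tolerance, Bellman equation of the optimistic system, telescoping value terms across the doubling epochs, a concentration term, and confidence radii $\tilde\OO(1/\sqrt{t})$ summed via the doubling trick) is the same skeleton the paper uses. But the decomposition as you have written it has a genuine gap in exactly the place where the regret is actually generated. Your Bellman identity holds with the conditional expectation taken under the \emph{optimistic} dynamics $\ttt_i$, while the quantity $V_{\ttt_{i(t)}}(\hat{x}_{t+1|t+1,\ttt_{i(t)}})$ that you telescope in (II) is the filter state the algorithm actually maintains, which is driven by the observation $y_{t+1}$ generated by the \emph{true} system $\Theta$. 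Consequently your term (III) is not a martingale difference sequence under the data-generating measure: its conditional mean is the gap between $\E_{\ttt_i}[V_{\ttt_i}(\hat{x}_{t+1|t+1,\ttt_i})\mid\history_t]$ and the true-dynamics expectation of the same quantity, a bias of order $\|\ttt_i-\Theta\|$ times bounded state norms. This bias is the dominant contribution to the regret, and in your bookkeeping it is neither in (I) (which, as you describe it, compares realized costs along a virtual rollout of $\ttt_i$, not next-step value expectations) nor in (II) nor legitimately absorbed into (III). If instead you intend everything (Bellman identity, telescoping, martingale) to refer to a virtual trajectory of $\ttt_i$ driven by the same noise, then the decomposition must be redone consistently in the virtual filter states, and new terms appear (virtual-versus-actual trajectory divergence, re-initialization at each epoch boundary, and consistency of $J(\ttt_i)$ with the noise statistics when $\tl_i$ is an estimated parameter rather than the exact Kalman gain of $(\ta_i,\tc_i)$) that your sketch does not address. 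Moreover, the step you defer as the ``main obstacle''---the per-step Lipschitz-in-parameters perturbation through the mismatched filter---is not routine bookkeeping; it is the core technical content.

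For comparison, the paper avoids the virtual-rollout route entirely: it applies the Bellman optimality equation of the optimistic system at the \emph{actual} pair $(\hat{x}_{t|t-1},y_t)$ (Lemma \ref{LQGBellman}), so the realized cost $c_t$ appears directly, and the optimistic-versus-true next-step mismatch is expanded explicitly into the terms $R_1,\dots,R_{11}$ of Appendix \ref{SupSec:RegDecomp}, plus a new term $R_{\text{update}}$ accounting for the change of $(\tp_t,\tc_t)$ at the $\OO(\log T)$ policy switches. Each mismatch term is bounded by the confidence radii of Theorem \ref{ConfidenceSets} together with the boundedness of Lemma \ref{Boundedness}, yielding sums of the form $\sum_i 2^i\Tw\cdot\tilde\OO(1/\sqrt{2^i\Tw})=\tilde\OO(\sqrt{T})$ via Lemma \ref{doublingtrick}, while $R_2$ is the genuine martingale term and $R_{\text{update}}=\OO(\log T)$. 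To repair your argument you would either need to add (and bound) the explicit dynamics-mismatch terms to your decomposition, or carry out the full virtual-trajectory comparison you allude to in (I), including the epoch-boundary and innovation-covariance issues; in either case the missing analysis is precisely what Appendices \ref{SupSec:RegDecomp} and \ref{SupSec:RegAnaly} supply.
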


The proof is given in the Appendices \ref{SupSec:RegDecomp} and \ref{SupSec:RegAnaly}. Here we provide the main proof ideas. Since we know that the optimistic controller can attain smaller average expected cost than the optimal controller of the given system, we decompose the regret using the Bellman optimality equation for the optimistic system. For each time step $t$, $(\hat{x}_{t|t-1}, y_t)$ is treated as the given state of the system and the differences between the system evolutions of the true system and optimistic system are analyzed in the regret decomposition. The regret decomposition is given in Appendix \ref{SupSec:RegDecomp}. In Appendix \ref{SupSec:RegAnaly}, we bound each term individually. The main pieces are the facts that the confidence sets shrink with $\tilde{\OO}(1/\sqrt{t})$ (Theorem \ref{ConfidenceSets}), \Alg avoids frequent policy changes and the control inputs and system outputs are well-controlled (Lemma \ref{Boundedness}). Combining Theorem \ref{adaptive control regret} with $\OO(\Tw)$ regret from the warm-up period gives the overall regret upper bound of \Alg, stated in Theorem \ref{total regret main text}.

\section{Related Works}
The problem of sequential decision making under uncertainty is one the core studies in the field of control theory and reinforcement learning. Decision making in dynamical systems, when the environment is known and regulating costs are considered, results in a reduction to the study of optimal control. Optimal controls in the general setting of partially observable linear quadratic Gaussian systems, when highly crafted sensory observations of the system are available, and a fidelity approximation of the physics of dynamical systems is provided, has a long history of applications and successes.~\citep{aastrom2012introduction,bertsekas1995dynamic,hassibi1999indefinite}. 

When there is a high uncertainty in the modeling of the system, learning algorithms are required to learn the system behavior. In such situations, the learning agent estimates the system behaviour and adapt accordingly~\citep{ljung1999system, kailath2000linear}. For the class of fully observable systems, ~\citet{lai1982least,chen1987optimal} study this problem in asymptotic optimality sense, mainly developed on pure exploration approaches. Along with the regret analysis, the principle of pure exploration and betting on the best, or \ofu has been studied for fully observable environments~\citep{lai1985asymptotically,campi1998adaptive, bittanti2006adaptive}. Recent works, deploy the \ofu principle, and study tabular fully and partially observable Markov decision processes~\citep{jaksch2010near,azizzadenesheli2016reinforcement}. A seminal work by~\citet{abbasi2011regret} extends the \ofu principle and employ recent advances in the estimation theory~\citep{pena2009self,abbasi2011improved} and provide the first regret upper bound of $\tilde{\mathcal{O}}(\sqrt{T})$ for the fully observable case. In the setting of fully observable environments, an extensive advances and development have been proposed to provide generalized methods~\citep{faradonbeh2017optimism,abeille2017thompson,abeille2018improved,ouyang2017learning,dean2018regret}. Simultaneously, pure exploration methods along with uncertainty equivalence methods shed lights into the design of efficient algorithms ~\citep{abbasi2019model,mania2019certainty, faradonbeh2018input, cohen2019learning}. 

The system identification in partial observable linear systems in the presence of Gaussian noise, \LQG{}s, has recently sparked a flurry of research interests ~\citep{chen1992integrated,juang1993identification,phan1994system, lee2019robust,oymak2018non,sarkar2019finite,simchowitz2019learning,lee2019non,tsiamis2019finite,tsiamis2019sample, umenberger2019robust}. Most of the proposed methods in prior works utilize open-loop system identification methods (without a history dependent controller), using independent Gaussian excitation, which makes it easy to show the persistence of excitation and deal with the biases in the estimation using Markov parameters. However, in \citet{lee2019non}, the authors use the innovations form of the state-space model to deal with the biases in closed-loop system identification whereas in \citet{tsiamis2019finite}, it is shown that process and measurement noises are sufficient for persistence of excitation in the absence of a control input. Another line of novel approaches is proposed to extend the problem of estimation and prediction to online convex optimization where a set of strong theoretical guarantees on cumulative prediction errors are provided~\citep{hazan2017learning,arora2018towards,hazan2018spectral}. 

More recently, \citet{lale2020logarithmic} propose the first learning algorithm to estimate the model parameters using any arbitrary bounded sequence of samples, even with feedback controls where the future events are correlated with historical data. Along with the estimation, they provide statistically tight high probability confidence intervals over the model parameters where the true model parameters live in which is adopted in this work. They achieve the first logarithmic regret under the strong convexity assumption of the cost function similar to $\tilde{\mathcal{O}}(\sqrt{T})$ regret upper bound of \citet{simchowitz2020improper} which relies on the strong convexity and holds for semi-adversarial disturbances. In \citet{simchowitz2020improper}, the authors also consider the convex setting and attain $\tilde{\mathcal{O}}({T}^{2/3})$ regret upper bound even in the presence of adversarial disturbances

Another recent work by \citet{lale2020regret} provides a regret bound of $\tilde{\mathcal{O}}({T}^{2/3})$ for such problem with convex cost function. The current work, through deploying the novel estimation procedure improves the $\tilde{\mathcal{O}}({T}^{2/3})$ bound to $\tilde{\mathcal{O}}(\sqrt{T})$. These mentioned works and the current paper, are amongst the first to provide sublinear regret bounds for partially observable linear systems.

\section{Conclusion}
In this work, we study the problem of adaptive control in partially observable linear systems, also known as linear systems with imperfect observation. While the prior work relies on open-loop system identification, we adopt a novel method to estimate the system parameters even in the presence of feedback loop and correlation induced by feedback controllers. We deploy the principles of the Ho-Kalman method to estimate the model parameters and construct their corresponding confidence bound. We deploy the principle of optimism in the face of uncertainty and propose \alg, a reinforcement algorithm for \LQG{}s. \alg sequentially interacts with the environment for a few time steps, collect samples, and exploit the samples to estimate the model parameters up to their confidence sets. \alg computes the optimal controller associated with the most optimistic model in the set of plausible models, and then deploy this controller on the systems, but this time for a bit longer. \alg repeats this process. We show that following \alg results in a sublinear regret of $\tilde{\mathcal{O}}({\sqrt{T}})$ which is to the best of our knowledge the first ${\sqrt{T}}$ regret bound on \LQG with convex cost.

In future work, we also aim to utilize the estimation method in this work to study the safety in adaptive control. Along with safety, we plan to extend this work to the problem of constraint control. While the Gaussian assumption on the noise has been long considered for partially observable linear dynamical systems, this assumption introduces limitation and model mismatch. Due to the generality of estimation analysis proposed methods in this work, in the future work, we aim to extend the current results to the case of sub-Gaussian with unknown but bounded parameters. 

\section*{Acknowledgements}
S. Lale is supported in part by DARPA PAI. K. Azizzadenesheli is supported in part by Raytheon and Amazon Web Service. B. Hassibi is supported in part by the National Science Foundation under grants CNS-0932428, CCF-1018927, CCF-1423663 and CCF-1409204, by a grant from Qualcomm Inc., by NASA’s Jet Propulsion Laboratory through the President and Director’s Fund, and by King Abdullah University of Science and Technology. A. Anandkumar is supported in part by Bren endowed chair, DARPA PAIHR00111890035 and LwLL grants, Raytheon, Microsoft, Google, and Adobe faculty fellowships.

\bibliography{main}
\bibliographystyle{plainnat}

\newpage
\appendix

\begin{center}
{\huge Appendix}
\end{center}

In the following, we first provide the definitions of truncated noise evolution parameters for both warm-up period and adaptive control period in Appendix \ref{Gol-Gcl}. Appendix \ref{Gol-Gcl} also contains lower bounds on the smallest singular value for $\| \Phi_t \Phi_t^\top \| $ for warm-up period and adaptive control period which are used in showing persistence of excitation and thus proving Theorem \ref{theo:id2norm}. In Appendix \ref{SupSec:SysId}, we show how the self-normalized bound is obtained for $\mathbf{\hat{M}_t}$ and provide the proof of Theorem \ref{theo:closedloopid}. 

Appendix \ref{SupSec:ConfSet} gives the \Sys algorithm and describes the construction of confidence sets using the outputs of \Sys and provides the theoretical guarantees for them. In Appendix \ref{SupSec:Boundedness}, we give the proof of Lemma \ref{Boundedness} and show that with the given warm-up period, the inputs and the outputs of the system stay bounded with high probability. 

Appendix \ref{SupSec:RegDecomp} provides regret decomposition for \Alg and states the differences arise from the policy updates in adaptive control period compared to explore and commit algorithm proposed in \citet{lale2020regret}. In the Appendix \ref{SupSec:RegAnaly}, we provide the proof of regret upper bound for the adaptive control period of \Alg. Finally, in Appendix \ref{SupSec:NonSteady}, we give the overview of the case when the initial state for the system is not coming from the steady state distribution. 

Note that the warm-up period is chosen to be the following,
\begin{equation*}
    \Tw \geq \max \{T_A, T_B, T_c, T_o, T_u, T_{\mathbf{M}}, T_N, T_{\alpha}, T_{\beta}, T_{\gamma}, T_{\mathcal{G}}\}
\end{equation*}
where each term satisfies different condition in order to obtain $\tilde{\OO}(\sqrt{T})$ regret upper bound. The meanings of the terms are explained in detail throughout the Appendix.

\section{$H-$length Truncated Noise Evolution Parameters} \label{Gol-Gcl}

In this section, we provide definitions of truncated open-loop and closed-loop noise evolution parameters, $\Gol$ and $\Gcl$ respectively. They will play significant role in the confidence set for $\mathbf{M}$ in showing the persistence of excitation. They represent the effect of noises in the system on the outputs and the inputs. We will define $\Gol$ and $\Gcl$ for $2H$ time steps back in time and show that last $2H$ process and measurement noises provide sufficient persistent excitation for the covariates in the estimation problem. In the following, $\bar{\phi}_t = P \phi_t$ for a permutation matrix $P$ that gives 
\begin{equation*}
    \bar{\phi}_t = \left[ y_{t-1}^\top \enskip u_{t-1}^\top \ldots y_{t-H}^\top \enskip  u_{t-H}^\top \right]^\top \in \mathbb{R}^{(m+p)H}.
\end{equation*}

\subsection{Truncated Open-Loop Noise Evolution Parameter}\label{Golsubsection}

Recall the state-space form of the system,
\begin{align}
    x_{t+1} &= A x_t + B u_t + w_t \nonumber \\
    y_t &= C x_t + z_t. \label{system_apx}
\end{align}
During the warm-up period, $t \leq \Tw$, the input to the system is $u_t \sim \mathcal{N}(0,\sigma_u^2 I)$. Let $f_t = [y_t^\top u_t^\top]^\top$. From the evolution of the system with given input we have the following:
\begin{equation*}
    f_t = \mathbf{G^o} \begin{bmatrix}
    w_{t-1}^\top & z_t^\top & u_t^\top & \ldots & w_{t-H}^\top & z_{t-H+1}^\top & u_{t-H+1}^\top
\end{bmatrix}^\top + \mathbf{r_t^o}
\end{equation*}
where 
\begin{align}
\!\!\!\mathbf{G^o}\!\! := \!\!
\begin{bmatrix}
0_{m\!\times\! n}~I_{m\!\times\! m}~0_{m\!\times\! p}  & C~0_{m\!\times\! m}~CB & CA~0_{m\!\times\! m}~CAB &\ldots & \quad CA^{H-2}~0_{m\!\times\! m}~CA^{H-2}B \\
0_{p\!\times\! n}~0_{p\!\times\! m}~I_{p\!\times\! p}  &0_{p\!\times\! n}~0_{p\!\times\! m}~0_{p\!\times\! p}  & 0_{p\!\times\! n}~0_{p\!\times\! m}~0_{p\!\times\! p} &\ldots& 0_{p\!\times\! n}~0_{p\!\times\! m}~0_{p\!\times\! p} 
\end{bmatrix}
\end{align}
and $\mathbf{r_t^o}$ is the residual vector that represents the effect of $[w_{i-1} \enskip z_i \enskip u_i ]$ for $0 \leq i<t-H$, which are independent. Notice that $\mathbf{G^o}$ is full row rank even for $H=1$, due to first $(m+p)\times (m+n+p)$ block. Using this, we can represent $\bar{\phi}_t$ as follows
\begin{align}
    \bar{\phi}_t &= \underbrace{\begin{bmatrix}
    f_{t-1} \\
    \vdots \\
    f_{t-H}
\end{bmatrix}}_{\mathbb{R}^{(m+p)H}} + 
\begin{bmatrix}
    \mathbf{r_{t-1}^o} \\
    \vdots \\
    \mathbf{r_{t-H}^o}
\end{bmatrix}
= \Gol
\underbrace{\begin{bmatrix}
    w_{t-2} \\
    z_{t-1} \\
    u_{t-1}\\
    \vdots \\
    w_{t-2H-1}\\
    z_{t-2H} \\
    u_{t-2H}
\end{bmatrix}}_{\mathbb{R}^{2(n+m+p)H}} + 
\begin{bmatrix}
    \mathbf{r_{t-1}^o} \\
    \vdots \\
    \mathbf{r_{t-H}^o}
\end{bmatrix} \quad \text{ where } \nonumber \\
\Gol &\coloneqq \!\!
\begin{bmatrix}
    [\qquad \qquad \mathbf{G^o} \qquad \qquad] \quad 0_{(m+p)\times (m+n+p)} \enskip 0_{(m+p)\times (m+n+p)} \enskip 0_{(m+p)\times (m+n+p)} \enskip \ldots \\
    0_{(m+p)\times (m+n+p)} \enskip [\qquad \qquad \mathbf{G^o} \qquad \qquad] \qquad  0_{(m+p)\times (m+n+p)}  \enskip 0_{(m+p)\times (m+n+p)} \enskip \ldots \\
    \ddots \\
     0_{(m+p)\times (m+n+p)}  \enskip 0_{(m+p)\times (m+n+p)} \enskip \ldots \quad [\qquad \qquad \mathbf{G^o} \qquad \qquad] \enskip 0_{(m+p)\times (m+n+p)} \\
    0_{(m+p)\times (m+n+p)} \enskip 0_{(m+p)\times (m+n+p)} \enskip 0_{(m+p)\times (m+n+p)} \enskip \ldots \qquad [\qquad \qquad \mathbf{G^o} \qquad \qquad]
\end{bmatrix}.
\label{Gol}
\end{align}
Define 
\begin{equation*}
    T_o = \frac{32 \Upsilon_w^4 \log^2\left(\frac{2H(m+p)}{\delta}\right)}{\sigma_{\min}^4(\Gol) \min \{ \sigma_w^4, \sigma_z^4, \sigma_u^4 \}}.
\end{equation*}
We now prove Lemma \ref{openloop_persistence}, which shows that the inputs are persistently exciting uniformly during the warm-up period for $t \geq T_o$. 
\vspace{0.4cm}

\begin{lemma}[Precise Statement of Lemma \ref{openloop_persistence}] \label{openloop_persistence_appendix}
If the warm-up duration $\Tw \geq T_o$, then for $T_o \leq t\leq \Tw$, with probability at least $1-\delta$ we have
\begin{equation}
\sigma_{\min}\left(\sum_{i=1}^{t} \phi_i \phi_i^\top \right) \geq t \frac{\sigma_{o}^2 \min \{ \sigma_w^2, \sigma_z^2, \sigma_u^2 \}}{2}.  
\end{equation}
\end{lemma}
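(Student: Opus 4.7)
The plan is to reduce the question to controlling $\sum_{i=1}^{t} \bar\phi_i \bar\phi_i^\top$, which has the same singular values as $\sum_i \phi_i \phi_i^\top$ since $\bar\phi_i = P\phi_i$ for a permutation matrix $P$. Then I would exploit the decomposition $\bar\phi_i = \Gol\,\xi_i + r_i$ established in the construction of \eqref{Gol}, where $\xi_i$ is the $2H$-block of Gaussian process noises, measurement noises, and control inputs governing the last $H$ time steps, and $r_i$ is the residual capturing the effect of older noises propagated through $A$.

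First I would compute a conditional lower bound on the expectation. Conditioning on the $\sigma$-field $\mathcal{F}_{i-2H-1}$ generated by all randomness strictly older than the window inside $\xi_i$, the vector $\xi_i$ is independent of $r_i$ and has diagonal covariance whose minimum eigenvalue equals $\min\{\sigma_w^2,\sigma_z^2,\sigma_u^2\}$. Hence
\[
\E\!\left[\bar\phi_i \bar\phi_i^\top \mid \mathcal{F}_{i-2H-1}\right] \;\succeq\; \Gol\, \mathrm{diag}(\sigma_w^2 I, \sigma_z^2 I, \sigma_u^2 I, \ldots)\, {\Gol}^{\!\top} \;\succeq\; \sigma_o^2 \min\{\sigma_w^2,\sigma_z^2,\sigma_u^2\}\, I,
\]
using $\sigma_{\min}(\Gol) \geq \sigma_o$.

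Next I would concentrate the empirical sum around this lower bound. By the uniform bound $\|\phi_i\| \leq \Upsilon_w\sqrt{H}$ from Theorem~\ref{theo:id2norm}, each summand has operator norm at most $\Upsilon_w^2 H$. Setting $M_i := \bar\phi_i \bar\phi_i^\top - \E[\bar\phi_i \bar\phi_i^\top \mid \mathcal{F}_{i-2H-1}]$ gives a bounded martingale difference sequence in dimension $(m+p)H$. A Matrix Azuma/Freedman inequality then yields, with probability at least $1-\delta$,
\[
\Bigl\lVert \sum_{i=1}^{t} M_i \Bigr\rVert \;\leq\; c\,\Upsilon_w^2\sqrt{t}\,\log\!\left(\tfrac{2H(m+p)}{\delta}\right),
\]
and combining with the conditional lower bound produces
\[
\sigma_{\min}\!\left( \sum_{i=1}^{t} \bar\phi_i \bar\phi_i^\top \right) \;\geq\; t\,\sigma_o^2\min\{\sigma_w^2,\sigma_z^2,\sigma_u^2\} \;-\; c\,\Upsilon_w^2\sqrt{t}\,\log\!\left(\tfrac{2H(m+p)}{\delta}\right).
\]
The definition of $T_o$ is calibrated precisely so that for $t \geq T_o$ the deviation term is at most half of the leading term, which gives the stated bound $t\sigma_o^2 \min\{\sigma_w^2,\sigma_z^2,\sigma_u^2\}/2$.

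The main obstacle is the temporal correlation between consecutive $\bar\phi_i$'s: $\bar\phi_i$ and $\bar\phi_{i+1}$ share an $H{-}1$-long window of past noises and inputs, so a direct i.i.d.\ matrix Chernoff bound is not available. I expect to handle this either by (i) using the $\mathcal{F}_{i-2H-1}$-martingale formulation above and invoking Matrix Azuma, which is consistent with the $\Upsilon_w^4\log^2$ scaling appearing in $T_o$, or by (ii) partitioning $\{1,\dots,t\}$ into $O(H)$ strided subsequences that are conditionally independent and union-bounding. A secondary subtlety is ensuring the residual $r_i$ and the cross terms $\Gol\,\xi_i r_i^\top$ do not destroy the lower bound; this should follow because $r_i r_i^\top \succeq 0$ (which only helps the minimum singular value) and because the cross terms are conditionally mean-zero under $\mathcal{F}_{i-2H-1}$ and can be absorbed into the martingale deviation.
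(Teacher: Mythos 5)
Your proposal follows essentially the same route as the paper's proof: a lower bound $\mathbb{E}[\bar\phi_i\bar\phi_i^\top] \succeq \Gol\,\mathrm{diag}(\sigma_w^2,\sigma_z^2,\sigma_u^2,\ldots)\,\mathcal{G}^{ol\,\top}$ using $\sigma_{\min}(\Gol)\geq\sigma_o$, the almost-sure bound $\|\phi_i\|\leq \Upsilon_w\sqrt{H}$, Matrix Azuma concentration of the centered Gram sum, Weyl's inequality, and the calibration of $T_o$ so the deviation term is at most half of the leading term. Your extra discussion of the lagged filtration and the strided-subsequence fix for temporal correlation is more careful than the paper, which applies Matrix Azuma to $\sum_i \phi_i\phi_i^\top - \mathbb{E}[\phi_i\phi_i^\top]$ directly, but it does not change the argument's substance.
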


\begin{proof}
Let $\Bar{\mathbf{0}} = 0_{(m+p)\times (m+n+p)}$. Since each block row is full row-rank, we get the following decomposition using QR decomposition for each block row: 
\begin{align*}
    \Gol = \underbrace{\begin{bmatrix}
   Q^{o} & 0_{m+p} & 0_{m+p} & 0_{m+p} & \ldots \\
    0_{m+p} & Q^{o} &  0_{m+p} & 0_{m+p} & \ldots \\
    & & \ddots & & \\
     0_{m+p}  & 0_{m+p} & \ldots & Q^{o} & 0_{m+p} \\
    0_{m+p} & 0_{m+p} & 0_{m+p} & \ldots & Q^{o}
\end{bmatrix} }_{\mathbb{R}^{(m+p)H \times (m+p)H }}
\underbrace{\begin{bmatrix}
   R^{o} \!&\! \Bar{\mathbf{0}} \!&\! \Bar{\mathbf{0}} \!&\! \Bar{\mathbf{0}} \!&\! \ldots \\
    \Bar{\mathbf{0}} \!&\! R^{o} \!&\!  \Bar{\mathbf{0}} \!&\! \Bar{\mathbf{0}} \!&\! \ldots \\
    \!&\! \!&\! \ddots \!&\! \!&\! \\
     \Bar{\mathbf{0}}  \!&\! \Bar{\mathbf{0}} \!&\! \ldots \!&\!R^{o} \!&\! \Bar{\mathbf{0}} \\
    \Bar{\mathbf{0}} \!&\! \Bar{\mathbf{0}} \!&\! \Bar{\mathbf{0}} \!&\! \ldots \!&\! R^{o}
\end{bmatrix}}_{\mathbb{R}^{(m+p)H \times 2(m+n+p)H }}
\end{align*}
where $R^{o} = \begin{bmatrix}
       \x & \x & \x & \x & \x & \x &\ldots \\ 0 & \x & \x & \x & \x & \x & \ldots \\ & \ddots & \\ 0 & 0 & 0 & \x & \x & \x & \ldots \end{bmatrix} \in \mathbb{R}^{(m+p) \times H(m+n+p) }$ 
where the elements in the diagonal are positive numbers. Notice that the first matrix with $Q^{0}$ is full rank. Also, all the rows of second matrix are in row echelon form and second matrix is full row-rank. Thus, we can deduce that $\Gol$ is full row-rank. Since $\Gol$ is full row rank, we have that 
\begin{equation*}
    \mathbb{E}[\bar{\phi}_t \bar{\phi}_t^\top] \succeq \Gol \Sigma_{w,z,u} \mathcal{G}^{ol \top}
\end{equation*}
where $\Sigma_{w,z,u} \in \R^{2(n+m+p)H \times 2(n+m+p)H} = \text{diag}(\sigma_w^2, \sigma_z^2, \sigma_u^2, \ldots,\sigma_w^2, \sigma_z^2, \sigma_u^2)$. This gives us 
\[ 
\sigma_{\min}(\mathbb{E}[\bar{\phi}_t \bar{\phi}_t^\top]) \geq \sigma_{\min}^2(\Gol) \min \{ \sigma_w^2, \sigma_z^2, \sigma_u^2 \}
\]
for $t \leq \Tw$. As given in (\ref{exploration norms first})-(\ref{exploration norms last}), we have that $\|\phi_t\| \leq \Upsilon_w \sqrt{H}$ with probability at least $1-\delta/2$. Given this holds, one can use Theorem \ref{azuma}, to obtain the following which holds with probability $1-\delta/2$:
\begin{align*}
    \lambda_{\max}\left (\sum_{i=1}^{t} \phi_i \phi_i^\top - \mathbb{E}[\phi_i \phi_i^\top]  \right) \leq 2\sqrt{2t} \Upsilon_w^2 H \sqrt{\log\left(\frac{2H(m+p)}{\delta}\right)}. 
\end{align*}
Using Weyl's inequality, during the warm-up period with probability $1-\delta$, we have 
\begin{align*}
    \sigma_{\min}\left(\sum_{i=1}^{t} \phi_i \phi_i^\top \right) \geq t \sigma_{o}^2\min \{ \sigma_w^2, \sigma_z^2, \sigma_u^2 \} - 2\sqrt{2t} \Upsilon_w^2 H \sqrt{\log\left(\frac{2H(m+p)}{\delta}\right)}.
\end{align*}
For all $t\geq T_{o} \coloneqq \frac{32 \Upsilon_w^4 H^2 \log\left(\frac{2H(m+p)}{\delta}\right)}{\sigma_{o}^4 \min \{ \sigma_w^4, \sigma_z^4, \sigma_u^4 \}}$, we have the stated lower bound.
\end{proof}

\subsection{Truncated Closed-Loop Noise Evolution Parameter}
\label{Gclsubsection}
After the warm-up period, for $t \geq \Tw$, the input to the system is $u_t = -\tk_t \hat{x}_{t|t,\ttt}$. Recall the following relation for state estimation updates using the optimistic parameters: 
\begin{align}
    \hat{x}_{t|t-1,\ttt} &= \ta_{t-1} \hat{x}_{t-1|t-1,\ttt} - \tb_{t-1} \tk_{t-1} \hat{x}_{t-1|t-1,\ttt} \nonumber \\
    \hat{x}_{t|t,\ttt} &= \hat{x}_{t|t-1,\ttt} + \tl_t(y_t - \tc_t \hat{x}_{t|t-1,\ttt}) \nonumber \\
    &= (\ta_{t-1} - \tb_{t-1} \tk_{t-1}) \hat{x}_{t-1|t-1,\ttt} + \tl_t(Cx_t + z_t - \tc_t(\ta_{t-1} - \tb_{t-1} \tk_{t-1})\hat{x}_{t-1|t-1,\ttt} ) \nonumber \\
    &= (I - \tl_t \tc_t)(\ta_{t-1} - \tb_{t-1} \tk_{t-1}) \hat{x}_{t-1|t-1,\ttt} + \tl_t(C(A x_{t-1} - B \tk_{t-1} \hat{x}_{t-1|t-1,\ttt} + w_{t-1}) + z_t ). \label{xtt estimation}
\end{align}
Again, let $f_t = [y_t^\top u_t^\top]^\top$. Using (\ref{system_apx}) and (\ref{xtt estimation}), the following can be written for $f_t$
\begin{equation*}
    \begin{bmatrix}
    x_{t} \\
    \hat{x}_{t|t,\ttt}
\end{bmatrix} = 
\underbrace{\begin{bmatrix}
    A & -B \tk_{t-1} \\
    \tl_t CA & (I\!-\!\tl_t \tc_t)(\ta_{t-1} \!-\! \tb_{t-1} \tk_{t-1}) \!-\! \tl_t C B \tk_{t-1}
\end{bmatrix}}_{\mathbf{\tilde{G}_2^{(t)}}}
\begin{bmatrix}
    x_{t-1} \\
    \hat{x}_{t-1|t-1,\ttt} 
\end{bmatrix} + 
\underbrace{ \begin{bmatrix}
    I & 0 \\
    \tl_t C & \tl_t
\end{bmatrix}}_{\mathbf{\tilde{G}_3^{(t)}}}
\begin{bmatrix}
    w_{t-1} \\
    z_t 
\end{bmatrix}
\end{equation*}
\begin{equation*}
    f_t = 
\begin{bmatrix}
    CA & -CB \tk_{t-1} \\
    -\tk_t \tl_t CA & -\tk_t(I\!-\!\tl_t \tc_t)(\ta_{t-1} \!-\! \tb_{t-1} \tk_{t-1}) \!+\! \tk_t \tl_t C B \tk_{t-1}
\end{bmatrix}
\begin{bmatrix}
    x_{t-1} \\
    \hat{x}_{t-1|t-1,\ttt} 
\end{bmatrix} + 
\begin{bmatrix}
    Cw_{t-1} + z_t \\
    -\tk_t \tl_t (z_t \!+\! Cw_{t-1})
\end{bmatrix}
\end{equation*}
\begin{equation*}
   f_t = 
\underbrace{ \begin{bmatrix}
    C & 0 \\
    0 & -\tk_t
\end{bmatrix}}_{\mathbf{\tilde{\Gamma}_t}}
\mathbf{\tilde{G}_2^{(t)}}
\begin{bmatrix}
    x_{t-1} \\
    \hat{x}_{t-1|t-1,\ttt} 
\end{bmatrix} + 
\underbrace{ \begin{bmatrix}
    C & 0 \\
    0 & -\tk_t
\end{bmatrix}}_{\mathbf{\tilde{\Gamma}_t}}
\underbrace{ \begin{bmatrix}
    I & 0 \\
    \tl_t C & \tl_t
\end{bmatrix}}_{\mathbf{G_3^{(t)}}}
\begin{bmatrix}
    w_{t-1} \\
    z_t 
\end{bmatrix} + 
\begin{bmatrix}
    z_t \\
    0 
\end{bmatrix}.
\end{equation*}
Rolling back in time for $H$ time steps we get the following,
\begin{equation*}
    f_t = \mathbf{\tilde{\Gamma}_t} \left( \sum_{i=t-H+1}^{t} \left(\prod_{j=i}^{t} \mathbf{\tilde{G}_2^{(j)}}\right) \mathbf{\tilde{G}_3^{(i-1)}}  \begin{bmatrix}
    w_{i-2} \\
    z_{i-1} 
\end{bmatrix} \right)+ \underbrace{ \begin{bmatrix}
    C & I \\
    -\tk_t \tl_t C & -\tk_t \tl_t
\end{bmatrix}}_{\mathbf{\tilde{G}_1^{(t)}}}
\begin{bmatrix}
    w_{t-1} \\
    z_t 
\end{bmatrix} + \mathbf{r_t^c}
\end{equation*}
where $\mathbf{r_t^c}$ is the residual vector that represents the effect of $[w_{i-1} \enskip z_i]$ for $0 \leq i<t-H$, which are independent. Using this, we can represent $\bar{\phi}_t$ as follows
\begin{equation*}
    \bar{\phi}_t = \underbrace{\begin{bmatrix}
    f_{t-1} \\
    \vdots \\
    f_{t-H}
\end{bmatrix}}_{\mathbb{R}^{(m+p)H}} + \begin{bmatrix}
    \mathbf{r_{t-1}^c} \\
    \vdots \\
    \mathbf{r_{t-H}^c}
\end{bmatrix} = \Gcl_t
\underbrace{\begin{bmatrix}
    w_{t-2} \\
    z_{t-1} \\
    \vdots \\
    w_{t-2H-1}\\
    z_{t-2H}
\end{bmatrix}}_{\mathbb{R}^{2(n+m)H}} + \begin{bmatrix}
    \mathbf{r_{t-1}^c} \\
    \vdots \\
    \mathbf{r_{t-H}^c}
\end{bmatrix}
\end{equation*}
where
\begin{equation}
    \Gcl_t = \begin{bmatrix}[\qquad \enskip \mathbf{\bar{G}_{t-1}} \enskip \qquad] \qquad 0_{(m+p)\times (m+n)} \enskip 0_{(m+p)\times (m+n)} \enskip 0_{(m+p)\times (m+n)} \enskip \ldots \\
    0_{(m+p)\times (m+n)} \enskip [\qquad \enskip \mathbf{\bar{G}_{t-2}} \enskip \qquad] \qquad  0_{(m+p)\times (m+n)}  \enskip 0_{(m+p)\times (m+n)} \enskip \ldots \\
    \ddots \\
     0_{(m+p)\times (m+n)}  \enskip 0_{(m+p)\times (m+n)} \enskip \ldots \quad [\qquad \enskip \mathbf{\bar{G}_{t-H+1}} \enskip \qquad] \enskip 0_{(m+p)\times (m+n)} \\
    0_{(m+p)\times (m+n)} \enskip 0_{(m+p)\times (m+n)} \enskip 0_{(m+p)\times (m+n)} \enskip \ldots \qquad [\qquad \enskip \mathbf{\bar{G}_{t-H}} \enskip \qquad]
    \end{bmatrix}
\end{equation}
for 
\begin{equation*}
    \mathbf{\bar{G}_{t}} \!=\! \begin{bmatrix}
    \mathbf{\tilde{G}_1^{(t)}}, \!&\! \mathbf{\tilde{\Gamma}_t} \mathbf{\tilde{G}_2^{(t)}} \mathbf{\tilde{G}_3^{(t-1)}}, \!&\! \mathbf{\tilde{\Gamma}_t} \mathbf{\tilde{G}_2^{(t)}} \mathbf{\tilde{G}_2^{(t-1)}} \mathbf{\tilde{G}_3^{(t-2)}},\ldots, \!&\! \mathbf{\tilde{\Gamma}_t} \mathbf{\tilde{G}_2^{(t)}} \mathbf{\tilde{G}_2^{(t-1)}}\!\!\!\!\!\!\!\! \ldots \mathbf{\tilde{G}_2^{(t-H+1)}} \mathbf{\tilde{G}_3^{(t-H)}}
\end{bmatrix} \in \mathbb{R}^{(m+p) \times H(n+m)}
\end{equation*}
By knowing the underlying system, the agent can deploy the optimal control policy. $\Gcl$ represents the translation of the process and measurement noises into $\bar{\phi}_t$ while using the optimal policy: 
\begin{align}
\Gcl &= \begin{bmatrix}
    [\qquad \qquad  \mathbf{\bar{G}} \qquad \qquad] \qquad 0_{(m+p)\times (m+n)} \qquad 0_{(m+p)\times (m+n)} \qquad 0_{(m+p)\times (m+n)} \qquad \ldots \\
    0_{(m+p)\times (m+n)} \qquad [\qquad \qquad \mathbf{\bar{G}} \qquad \qquad] \qquad  0_{(m+p)\times (m+n)}  \qquad 0_{(m+p)\times (m+n)} \qquad \ldots \\
    \ddots \\
     0_{(m+p)\times (m+n)}  \qquad 0_{(m+p)\times (m+n)} \qquad \ldots \qquad [\qquad \qquad \mathbf{\bar{G}} \qquad \qquad] \qquad 0_{(m+p)\times (m+n)} \\
    0_{(m+p)\times (m+n)} \qquad 0_{(m+p)\times (m+n)} \qquad 0_{(m+p)\times (m+n)} \qquad \ldots \qquad [\qquad \qquad \mathbf{\bar{G}} \qquad \qquad]
\end{bmatrix}
\label{Gcl}
\end{align}
where
\begin{align*}
     \mathbf{\bar{G}} &= \begin{bmatrix}
    \mathbf{G_1}, & \mathbf{\Gamma} \mathbf{G_2} \mathbf{G_3}, & \mathbf{\Gamma} \mathbf{G_2}^2 \mathbf{G_3}, & \ldots, & \mathbf{\Gamma} \mathbf{G_2}^{H-1}\mathbf{G_3} 
\end{bmatrix}\in \mathbb{R}^{(m+p) \times H(n+m)}
\end{align*}
for 
\begin{equation*}
  \mathbf{G_1} \!=\! \begin{bmatrix}
    C \!&\! I \\
    -K L C \!&\! -K L
\end{bmatrix}, \mathbf{\Gamma} \!=\! \begin{bmatrix}
    C \!&\! 0 \\
    0 \!&\! -K
\end{bmatrix}, \mathbf{G_2} \!=\! \begin{bmatrix}
    A \!&\! -B K \\
    L CA \!&\! (I\!-\!L C)(A \!-\! B K) \!-\! L C B K
\end{bmatrix}, \mathbf{G_3} \!=\! \begin{bmatrix}
    I \!&\! 0 \\
    L C \!&\! L
\end{bmatrix}. 
\end{equation*}

Note that length of H is chosen such that $\mathbf{\bar{G}}$ is full row rank. Similar to the case with truncated open-loop noise evolution parameter, having full row rank block rows provides a full row rank $\Gcl$ via the same QR decomposition argument. Thus, the assumption on the lower bound of the smallest singular value of the $H-$length truncated closed-loop noise evolution parameter, $\sigma_{\min}(\Gcl) > \sigma_c > 0$, is valid. Due to boundedness of the set $\mathcal{S}$ that \Alg is searching on, let $\|\tilde{\Gcl} \|_F \leq G $ for all model in $\mathcal{S}$. Define $G_r = G + \frac{\sigma_{c}\sqrt{H(m+p)}}{2}$ and \[
T_c = \frac{2048\Upsilon_c^4H^2 \left( \log\left( \frac{H(m+p)}{\delta} \right)  +  H^2(m+p)(m + n)\log\left(G_r  +  \frac{ 32H\Upsilon_c\sqrt{2} \eta_T  +  32H \eta_T^2  +  16\max \{ \sigma_w^2, \sigma_z^2 \}}{\sigma_{c}^2\min \{ \sigma_w^2, \sigma_z^2 \}} \right) \right)}{\sigma_{c}^4 \min \{ \sigma_w^4, \sigma_z^4 \}}.
\] We now prove Lemma \ref{closedloop_persistence}, which shows that the inputs are persistently exciting uniformly during the adaptive control period for $t \geq T_c$. 
\vspace{0.3cm}
\begin{lemma}[Precise Statement of Lemma \ref{closedloop_persistence}] \label{closedloop_persistence_appendix}
After $T_c$ time steps in adaptive control period, with probability $1-3\delta$, we have the following for all $t\geq T_c$, 
\begin{equation}
\sigma_{\min}\left(\sum_{i=1}^{t} \phi_i \phi_i^\top \right) \geq t \frac{\sigma_{c}^2 \min \{ \sigma_w^2, \sigma_z^2\}}{16}. 
\end{equation}
\end{lemma}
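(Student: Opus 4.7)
The strategy mirrors the warm-up argument in Lemma~\ref{openloop_persistence_appendix}, but it must now contend with three additional complications that were absent there: the input is no longer i.i.d.\ Gaussian, the closed-loop noise evolution matrix $\Gcl_t$ is itself random (it is measurable with respect to the epoch's optimistic parameters $\ttt_t$), and $\Gcl_t$ changes across the doubling epochs of \Alg. My plan is to peel off these complications so as to reduce the problem back to the form handled in the warm-up analysis.

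First I would use the decomposition already derived in Appendix~\ref{Gclsubsection} to write $\bar{\phi}_t = \Gcl_t v_t + r_t^c$, where $v_t \in \R^{2H(n+m)}$ stacks the process and measurement noises on the window $[t-2H,t-1]$ and $r_t^c$ absorbs the earlier noise and the initial state. Conditioning on $\mathcal{F}_{t-2H-1}$, the block $v_t$ is independent of both $r_t^c$ and $\Gcl_t$ (which is determined by parameters chosen before time $t-2H$ because of the doubling schedule), and $\E[v_t v_t^\top \mid \mathcal{F}_{t-2H-1}] \succeq \min\{\sigma_w^2,\sigma_z^2\} I$, yielding
\[
\E[\bar\phi_t \bar\phi_t^\top \mid \mathcal{F}_{t-2H-1}] \succeq \min\{\sigma_w^2,\sigma_z^2\}\, \Gcl_t (\Gcl_t)^\top.
\]

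Next I need a uniform lower bound on $\sigma_{\min}(\Gcl_t)$. On the good event of Lemma~\ref{Boundedness} the parameters $(\ta_t,\tb_t,\tc_t,\tl_t)$ lie in shrinking confidence neighborhoods of $(A,B,C,L)$, and by Assumption~\ref{Stabilizable set} the optimistic gain $\tk_t$ is Lipschitz in these parameters, so $\mathbf{\bar G_t}$ is a controlled perturbation of $\mathbf{\bar G}$. A direct block-wise perturbation bound plus Weyl's inequality gives $\|\Gcl_t - \Gcl\| \le \sigma_c/2$ whenever the warm-up horizon exceeds some $T_A,T_B,T_C,T_L$, so that $\sigma_{\min}(\Gcl_t)\ge \sigma_c/2$ uniformly. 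Combined with the preceding display this produces the conditional lower bound $\E[\phi_t\phi_t^\top \mid \mathcal{F}_{t-2H-1}] \succeq (\sigma_c^2/4)\min\{\sigma_w^2,\sigma_z^2\}\, I$, explaining the factor of four loss relative to the warm-up constant $\sigma_o^2$; a further factor of two will be lost to concentration, giving the stated $1/16$.

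It remains to concentrate the empirical second-moment matrix around this conditional mean, and this is the step I expect to be the main obstacle. A direct matrix Azuma argument against a deterministic center does not apply because both the center and the summands depend on the history through $\Gcl_t$. The clean workaround is a covering argument: I build an $\epsilon$-net over the set of closed-loop noise evolution matrices $\{\Gcl(\Theta'): \Theta'\in\mathcal{S}\}$, whose effective parameter dimension scales as $H(m+p)(m+n)$; for each net point the matrix Azuma inequality of Theorem~\ref{azuma} applied with the uniform bound $\|\phi_i\|\le \Upsilon_c\sqrt{H}$ from Lemma~\ref{Boundedness} gives a deviation of order $\sqrt{t}\,\Upsilon_c^2 H\sqrt{\log(H(m+p)/\delta)}$, and a union bound together with a Lipschitz approximation of the conditional expectation in $\ttt$ introduces the $H^2(m+p)(m+n)\log\!\bigl(G_r + \cdots\bigr)$ factor that appears in $T_c$. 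Putting the conditional lower bound and the concentration bound together via Weyl's inequality, and solving for the crossover time, yields
\[
\sigma_{\min}\!\left(\sum_{i=1}^{t} \phi_i \phi_i^\top\right) \ge \tfrac{\sigma_c^2 \min\{\sigma_w^2,\sigma_z^2\}}{4}\, t - O\!\left(\sqrt{t}\,\Upsilon_c^2 H \sqrt{\log(\cdot)}\right) \ge t\,\tfrac{\sigma_c^2\min\{\sigma_w^2,\sigma_z^2\}}{16}
\]
for all $t\ge T_c$, as claimed. The perturbation bound $\sigma_{\min}(\Gcl_t)\ge \sigma_c/2$ is comparatively routine given Theorem~\ref{ConfidenceSets}; the real accounting, and the reason for the extra $H^2(m+p)(m+n)$ logarithmic factor distinguishing $T_c$ from $T_o$, is the cost of the covering argument described above.
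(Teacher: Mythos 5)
Your proposal follows essentially the same route as the paper's proof: the decomposition $\bar{\phi}_t = \Gcl_t v_t + r_t^c$, the lower bound $\E[\bar{\phi}_t\bar{\phi}_t^\top] \succeq \min\{\sigma_w^2,\sigma_z^2\}\,\Gcl_t\Gcl_t{}^\top$, a perturbation-plus-Weyl step yielding $\sigma_{\min}(\Gcl_t)\geq \sigma_c/2$ after a sufficiently long warm-up, and matrix Azuma combined with an $\epsilon$-net over Frobenius-bounded closed-loop matrices (covering exponent $(m+p)(m+n)H^2$) with the same choice of $\epsilon$ in terms of $\eta_T$, $\Upsilon_c$, $H$, $\sigma_c$. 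The only cosmetic differences are that the paper anchors the perturbation at the optimistic model's matrix $\tilde{\Gcl}$, whose bound $\sigma_{\min}(\tilde{\Gcl})\geq\sigma_c$ is enforced by the algorithm's search, rather than at the true $\Gcl$, and its constant bookkeeping runs $1/4 \to 1/8 \to 1/16$ (net approximation, then concentration) rather than your $1/4 \to 1/16$, neither of which changes the argument.
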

\begin{proof}

Define $\tilde{\Gcl}$, which is the translation parameter for the process and measurement noises into $\bar{\phi}_t$ for the system that is governed by the \textbf{optimistically chosen parameter by} \Alg \textbf{while using the optimal optimistic controller}. Recall that we are searching for the optimistic system model which attains the optimal \LQG cost over the set of $\mathcal{C}_t \cap \mathcal{S}$ and whose closed-loop noise evolution parameter satisfies the lower bound on the smallest singular value of the $H-$length truncated closed-loop noise evolution parameter, $\sigma_{c}$. Therefore, \Alg has the guarantee that $\sigma_{\min}(\tilde{\Gcl}) \geq \sigma_{c}$. Let 
\begin{equation*}
    T_{\mathcal{G}} = T_B \left(\frac{2H + 2H\Gamma\zeta + 2H(H-1)\Gamma\zeta}{\sigma_c}\right)^2.
\end{equation*}
Picking $\Tw \geq T_{\mathcal{G}}$, guarantees that in adaptive control period for all $t\geq \Tw$, $\|\Gcl_t - \tilde{\Gcl} \| \leq \frac{\sigma_{c}}{2}$. Using Weyl's inequality on singular values, we have that $\sigma_{\min}(\Gcl_t) \geq  \frac{\sigma_{c}}{2}$. Hence, for all $t\geq \Tw$, we have that 
\begin{equation*}
    \mathbb{E}[\bar{\phi}_t \bar{\phi}_t^\top] \succeq \Gcl_t \Sigma_{w,z} \mathcal{G}_t^{cl \top}
\end{equation*}
where $\Sigma_{w,z} \in \R^{2(n+m)H \times 2(n+m)H} = \text{diag}(\sigma_w^2, \sigma_z^2, \ldots,\sigma_w^2, \sigma_z^2)$. This gives us $\sigma_{\min}(\mathbb{E}[\bar{\phi}_t \bar{\phi}_t^\top]) \geq \frac{\sigma_{c}^2}{4} \min \{ \sigma_w^2, \sigma_z^2 \}$ for $t \geq \Tw$. 
As given in (\ref{adaptive control norms first})-(\ref{adaptive control norms last}), we have that $\|\phi_t\| \leq \Upsilon_c \sqrt{H}$ with probability at least $1-2\delta$. Given this holds, \textbf{for a given optimistic model}, one can use Theorem \ref{azuma} as in the truncated open-loop noise evolution parameter, to obtain the following which holds with probability $1-\delta$:
\begin{align}
    \lambda_{\max}\left (\sum_{i=1}^{t} \phi_i \phi_i^\top - \mathbb{E}[\phi_i \phi_i^\top]  \right) \leq 2\sqrt{2t} \Upsilon_c^2 H \sqrt{\log\left(\frac{H(m+p)}{\delta}\right)}.
\end{align}
Notice that this bound holds only for a single model. However, we need to show that for any random model within the confidence set, the lower bound holds. Thus, we need a standard covering argument. Using the perturbation result that holds for all $t\geq \Tw$, we have $\|\Gcl_t \|_F \leq G_r$. We have the following upper bound on the covering number:
\begin{equation*}
    \mathcal{N}(B(G_r), \|\cdot \|_F, \epsilon) \leq \left(G_r + \frac{2}{\epsilon} \right)^{(m+p)(n+m)H^2}.
\end{equation*}
Thus, the following holds for all the centers of $\epsilon$-balls in $\|\Gcl_t\|_F$, for all $t\geq \Tw$, with probability $1-\delta$:
\begin{align}
    \lambda_{\max}\left (\sum_{i=1}^{t} \phi_i \phi_i^\top - \mathbb{E}[\phi_i \phi_i^\top]  \right) \leq 2\sqrt{2t} \Upsilon_c^2 H \sqrt{\log\left(\frac{H(m+p)}{\delta}\right) + H^2(m+p)(m+n)\log\left(G_r + \frac{2}{\epsilon}\right)}.
\end{align}

Let $\eta_T = \sigma_w \sqrt{2n\log\left(\frac{2nT}{\delta}\right)} + \sigma_z\sqrt{2m\log\left(\frac{2mT}{\delta}\right)}$. Considering all the systems in the $\epsilon$-balls, during the adaptive control period with probability $1-3\delta$, we have  
\begin{align*}
    \sigma_{\min}\left(\sum_{i=1}^{t} \phi_i \phi_i^\top \right) &\geq t \left( \frac{\sigma_{c}^2}{4} \min \{ \sigma_w^2, \sigma_z^2 \} - 2\epsilon \left( H\Upsilon_c\sqrt{2} \eta_T + H \eta_T^2 + \max \{ \sigma_w^2/2, \sigma_z^2/2 \} \right) \right) \\
    &\quad\qquad\qquad- 2\sqrt{2t} \Upsilon_c^2 H \sqrt{\log\left(\frac{H(m+p)}{\delta}\right) + H^2(m+p)(m+n)\log\left(G_r + \frac{2}{\epsilon}\right)}.
\end{align*}
Let $\epsilon = \frac{\sigma_{c}^2\min \{ \sigma_w^2, \sigma_z^2 \}}{16 \left( H\Upsilon_c\sqrt{2} \eta_T + H \eta_T^2 + \max \{ \sigma_w^2/2, \sigma_z^2/2 \} \right)}$. This gives the following bound 
\begin{align*}
    &\sigma_{\min}\left(\sum_{i=1}^{t} \phi_i \phi_i^\top \right) \geq t \left( \frac{\sigma_{c}^2}{8} \min \{ \sigma_w^2, \sigma_z^2 \} \right) \\
    &- 2\sqrt{2t} \Upsilon_c^2 H\! \sqrt{\log\left(\!\frac{H(m+p)}{\delta}\!\right) \!+\! H^2(m\!+\!p)(m\!+\!n)\log\left(G_r \!+\! \frac{ 32H\Upsilon_c\sqrt{2} \eta_T \!+\! 32H \eta_T^2 \!+\! 16\max \{ \sigma_w^2, \sigma_z^2 \}}{\sigma_{c}^2\min \{ \sigma_w^2, \sigma_z^2 \}} \right)}.
\end{align*}

For all $t\geq T_c$, we have the stated lower bound.
\end{proof}
\newpage

\section{System Identification}
\label{SupSec:SysId}

The results in this section is adapted from \citet{lale2020logarithmic}. Recall that for a single input-output trajectory $\{y_t, u_t \}^{T}_{t=1}$, using the ARX model, we can write the following for the given system,
\begin{equation} 
    Y_t = \Phi_t \mathbf{M}^{\top} + \underbrace{ E_t + N_t}_{ \text{Noise}} \qquad \text{where}
\end{equation}
\vspace{-0.8cm}
\begin{align*}
    \mathbf{M} & = \left[CF,\enskip C\Bar{A}F,\enskip \ldots, \enskip C \Bar{A}^{H-1}F, \enskip CB, \enskip C\Bar{A}B, \enskip \ldots, \enskip C\Bar{A}^{H-1} B \right] \in \R^{m \times (m+p)H} \\
    Y_t &= \left[ y_H,~y_{H+1},~ \ldots,~y_t \right]^\top \in \mathbb{R}^{(t-H) \times m} \\
    \Phi_t &= \left[ \phi_H,~\phi_{H+1},~\ldots,~\phi_t \right]^\top \in \mathbb{R}^{(t-H) \times (m+p)H} \\
    E_t &= \left[ e_H,~e_{H+1},~\ldots,~e_t \right]^\top \in \mathbb{R}^{(t-H) \times m} \\
    N_t\!&=\! \left[C \Bar{A}^H\!x_{0},~C \Bar{A}^H\!x_{1},\ldots,C \Bar{A}^H \!x_{t-H} \right]^\top \!\!\in\! \mathbb{R}^{(t-H) \times m}  
\end{align*}
$\mathbf{\hat{M}_{t} }$ is the solution to $\min_X \|Y_{t} - \Phi_{t} X^{\top}\|^2_F + \lambda \|X\|^2_F $. Hence, we get $\mathbf{\hat{M}_{t} }^\top = (\Phi_t^\top \Phi_t + \lambda I)^{-1} \Phi_t^\top Y_t $.  \\

\noindent\textbf{Proof of Theorem \ref{theo:closedloopid}}
\begin{align*}
    \mathbf{\hat{M}_{t}}  & = \big[(\Phi_t^\top \Phi_t + \lambda I)^{-1} \Phi_t^\top(\Phi_t \mathbf{M}^{\top} + E_t + N_t)\big]^{\top} \\
    &= \big[(\Phi_t^\top \Phi_t + \lambda I)^{-1} \Phi_t^\top \left(E_t + N_t \right) + (\Phi_t^\top \Phi_t + \lambda I)^{-1} \Phi_t^\top \Phi_t \mathbf{M}^{\top} \\
    &\qquad + \lambda (\Phi_t^\top \Phi_t + \lambda I)^{-1}\mathbf{M}^{\top} - \lambda (\Phi_t^\top \Phi_t + \lambda I)^{-1}\mathbf{M}^{\top} \big]^{\top} \\
    &= \big[(\Phi_t^\top \Phi_t + \lambda I)^{-1} \Phi_t^\top E_t + (\Phi_t^\top \Phi_t + \lambda I)^{-1} \Phi_t^\top N_t + \mathbf{M}^{\top} - \lambda (\Phi_t^\top \Phi_t + \lambda I)^{-1}\mathbf{M}^{\top} \big]^{\top}
\end{align*}
Using $\mathbf{\hat{M}_{t}}$, we get 
\begin{align}
    &|\Tr(X(\mathbf{\hat{M}_{t}}-\mathbf{M})^{\top})| \\
    &= |\Tr(X (\Phi_t^\top \Phi_t + \lambda I)^{-1} \Phi_t^\top E_t) + \Tr(X (\Phi_t^\top \Phi_t + \lambda I)^{-1} \Phi_t^\top N_t)   - \lambda \Tr(X (\Phi_t^\top \Phi_t + \lambda I)^{-1}\mathbf{M}^{\top})| \nonumber \\
    &\leq |\Tr(X (\Phi_t^\top \Phi_t + \lambda I)^{-1} \Phi_t^\top E_t)| + |\Tr(X (\Phi_t^\top \Phi_t + \lambda I)^{-1} \Phi_t^\top N_t)| + \lambda |\Tr(X (\Phi_t^\top \Phi_t + \lambda I)^{-1}\mathbf{M}^{\top})| \nonumber \\
    &\leq \sqrt{\Tr(X(\Phi_t^\top \Phi_t + \lambda I)^{-1} X^{\top})\Tr(E_t^{\top}\Phi_t(\Phi_t^{\top}\Phi_t + \lambda I)^{-1}\Phi_t^{\top}E_t )}  \label{CS_trace} \\
    &\quad + \sqrt{\Tr(X(\Phi_t^\top \Phi_t + \lambda I)^{-1} X^{\top})\Tr(N_t^{\top}\Phi_t(\Phi_t^{\top}\Phi_t + \lambda I)^{-1}\Phi_t^{\top}N_t )} \nonumber \\
    &\quad + \lambda \sqrt{\Tr(X(\Phi_t^{\top}\Phi_t + \lambda I)^{-1}X^{\top})\Tr(\mathbf{M}(\Phi_t^{\top}\Phi_t + \lambda I)^{-1}\mathbf{M}^{\top})} \nonumber \\
    &= \sqrt{\Tr(X(\Phi_t^{\top}\Phi_t + \lambda I)^{-1}X^{\top})} \enskip \times \nonumber \\
    &\bigg[\!\sqrt{\Tr(E_t^{\top}\Phi_t(\Phi_t^{\top}\Phi_t \!+\! \lambda I)^{-1}\Phi_t^{\top}E_t )} \!+\!\! \sqrt{\Tr(N_t^{\top}\Phi_t(\Phi_t^{\top}\Phi_t \!+\! \lambda I)^{-1}\Phi_t^{\top}N_t )} \!+\! \lambda \sqrt{\Tr(\mathbf{M}(\Phi_t^{\top}\Phi_t \!+\! \lambda I)^{-1}\mathbf{M}^{\top})} \bigg] \nonumber
\end{align}
where (\ref{CS_trace}) follows from $|\Tr(ABC^{\top})| \leq \sqrt{\Tr(ABA^{\top})\Tr(CBC^{\top}) }$ for positive definite B due to Cauchy Schwarz (weighted inner-product).  For $X = (\mathbf{\hat{M}_{t}}-\mathbf{M})(\Phi_t^{\top}\Phi_t + \lambda I)$, we get
\begin{align}
  \sqrt{\Tr((\mathbf{\hat{M}_{t}}-\mathbf{M})V_t(\mathbf{\hat{M}_{t}}-\mathbf{M})^{\top})} &\leq  \sqrt{\Tr(E_t^{\top}\Phi_t V_t^{-1}\Phi_t^{\top}E_t )} + \sqrt{\Tr(N_t^{\top}\Phi_t V_t^{-1}\Phi_t^{\top}N_t )} + \sqrt{\lambda} \|\mathbf{M}\|_F \label{estimationterms}
\end{align}
The first term on the right hand side of (\ref{estimationterms}) can be bounded using Theorem \ref{selfnormalized} since $e_t$ is $\| C \sig  C^\top + \sigma_z^2 I \|$-sub-Gaussian vector. Therefore,
\begin{equation} \label{esti_first}
    \sqrt{\Tr(E_t^{\top}\Phi_t V_t^{-1}\Phi_t^{\top}E_t ) } \leq \sqrt{m\| C \sig  C^\top \!\!\!+\! \sigma_z^2 I \| \log \! \left(\!\frac{\operatorname{det}\left(V_t\right)^{1 / 2}}{\delta \operatorname{det}(V)^{1 / 2} }\!\!\right)}
\end{equation}
For the second term, \begin{align*}
    \sqrt{\Tr(N_t^{\top}\Phi_t V_t^{-1}\Phi_t^{\top}N_t )} \leq \frac{1}{\sqrt{\lambda}} \|N_t^\top \Phi_t \|_F &\leq \sqrt{\frac{m}{\lambda}} \left \|\sum_{i=H}^t \phi_i (C \Bar{A}^H\!x_{i-H})^\top  \right\| \\
    &\leq t \sqrt{\frac{m}{\lambda}}  \max_{i\leq t} \left\|\phi_i (C \Bar{A}^H\!x_{i-H})^\top  \right\| \\
    &\leq t \sqrt{\frac{m}{\lambda}} \|C\| \upsilon^H \max_{i\leq t} \|\phi_i \| \|x_{i-H}\|
\end{align*}
During warm-up period, from Lemma D.1 of \citet{lale2020regret}, we have that for all $1\leq t \leq \Tw$, with probability $1-\delta/2$,
\begin{align}
    \label{exploration norms first}
    \| x_t \| &\leq X_{w} \coloneqq \frac{(\sigma_w + \sigma_u \|B\|)  \Phi(A) \rho(A)}{\sqrt{1-\rho(A)^2}}\sqrt{2n\log(12n\Tw/\delta)} , \\
    \| z_t \| &\leq Z \coloneqq  \sigma_z \sqrt{2m\log(12m\Tw/\delta)} , \\ 
    \| u_t \| &\leq U_{w} \coloneqq \sigma_u \sqrt{2p\log(12p\Tw/\delta)}, \\
    \| y_t \| &\leq \|C\| X_w + Z
    \label{exploration norms last}.
\end{align}
Thus, during the warm-up phase, we have $\max_{i\leq t\leq \Tw} \|\phi_i\|\|x_{i-H}\| \leq  \Upsilon_w X_w \sqrt{H} $, where $\Upsilon_w = \|C\| X_w + Z + U_w $. During the adaptive control phase, from Lemma \ref{Boundedness}, we have that for all $t \geq \Tw$, with probability $1-2\delta$, 
\begin{align}
    \label{adaptive control norms first}
    \| x_t \| &\leq X_{ac}\coloneqq \|\Sigma\|^{1/2}\sqrt{2n\log(2nT/\delta)} + \bar{\Delta} + \tilde{\mathcal{X}} , \\
    \| u_t \| &\leq \Gamma \tilde{\mathcal{X}} , \\
    \| y_t \| &\leq \tilde{\mathcal{Y}}
    \label{adaptive control norms last}.
\end{align}
Thus, after the warm-up phase, we have $\max_{\Tw \leq t\leq T} \|\phi_i\|\|x_{i-H}\| \leq \Upsilon_c X_{ac} \sqrt{H}$, where $\Upsilon_c = \tilde{\mathcal{Y}} + \Gamma \tilde{\mathcal{X}} $. Therefore for all $t$,
\begin{align*}
    \sqrt{\Tr(N_t^{\top}\Phi_t V_t^{-1}\Phi_t^{\top}N_t )} \leq t \sqrt{\frac{mH}{\lambda}} \|C\| \upsilon^H \max \left\{ \Upsilon_c X_{ac} , \Upsilon_w X_w \right\}
\end{align*}
Picking $H = \frac{2\log(T) + \log(\max \left\{ \Upsilon_c X_{ac} , \Upsilon_w X_w \right\}) + 0.5\log (m/ \lambda ) + \log(\|C\|)}{\log(1/\upsilon)}$ gives \begin{align} \label{esti_second}
    \sqrt{\Tr(N_t^{\top}\Phi_t V_t^{-1}\Phi_t^{\top}N_t )} \leq \frac{t}{T^2} \sqrt{H} 
\end{align}

Combining (\ref{esti_first}) and (\ref{esti_second}) gives the statement of Theorem \ref{theo:closedloopid}. 
\null\hfill$\square$

\noindent \textbf{Proof of Theorem \ref{theo:id2norm}:}
For $\| \mathbf{M}\|_F \leq S$, we have 
\begin{align*}
    \sigma_{\min}(V_t)\|\mathbf{\hat{M}_t} \!-\! \mathbf{M}\|^2_F &\leq \Tr((\mathbf{\hat{M}_t} \!-\! \mathbf{M})V_t(\mathbf{\hat{M}_t}\!-\!\mathbf{M})^{\top}) \\
    &\leq \left(\sqrt{m\| C \sig  C^\top + \sigma_z^2 I \| \log  \left(\frac{\operatorname{det}\left(V_t\right)^{1 / 2}}{\delta \operatorname{det}(V)^{1 / 2} }\right)} \!+\! S\sqrt{\lambda} \!+\!\frac{t\sqrt{H}}{T^2} \right)^2 
\end{align*}

During the warm-up period, for $t\geq T_o$, using Lemma \ref{openloop_persistence_appendix}, we get 
\begin{align}
    \|\mathbf{\hat{M}_0} - \mathbf{M}\|_F &\leq \frac{\sqrt{m\| C \sig  C^\top + \sigma_z^2 I \|\left( \log(\frac{1}{\delta}) + \frac{H(m+p)}{2} \log  \left(\frac{\lambda(m+p) + t\Upsilon_w^2}{\lambda(m+p) }\right)\right)} + S\sqrt{\lambda} +\frac{t\sqrt{H}}{T^2}}{\sqrt{t \frac{\sigma_{o}^2 \min \{ \sigma_w^2, \sigma_z^2, \sigma_u^2 \}}{2}}} \leq \frac{R_{\text{warm}}}{\sqrt{t}}. \nonumber
\end{align}
where $R_{\text{warm}} = \frac{\sqrt{2m\| C \sig  C^\top + \sigma_z^2 I \|\left( \log(1/\delta) + \frac{H(m+p)}{2} \log  \left(\frac{\lambda(m+p) + \Tw \Upsilon_w^2}{\lambda(m+p) }\right)\right)} + S\sqrt{2\lambda} +\frac{\sqrt{2H}}{T}}{\sigma_{o} \min \{ \sigma_w, \sigma_z, \sigma_u \}}$. Let $T_{\mathbf{M}} = R_{\text{warm}}^2$. For $\Tw \geq T_{\mathbf{M}}$, we will have $\|\mathbf{\hat{M}_0} - \mathbf{M}\|_F \leq 1$.

During the adaptive control period, for $t\geq T_c+\Tw$, using Lemma \ref{closedloop_persistence_appendix}, we get  
\begin{align}
    \|\mathbf{\hat{M}_t} - \mathbf{M}\|_F &\leq \frac{\sqrt{m\| C \sig  C^\top + \sigma_z^2 I \|\left( \log(1/\delta) + \frac{H(m+p)}{2} \log  \left(\frac{\lambda(m+p) + t\max\{\Upsilon_w^2, \Upsilon_c^2\}}{\lambda(m+p) }\right)\right)} + S\sqrt{\lambda} +\frac{t\sqrt{H}}{T^2}}{\sqrt{\Tw \frac{\sigma_{o}^2 \min \{ \sigma_w^2, \sigma_z^2, \sigma_u^2 \}}{2} + (t-\Tw) \frac{\sigma_{c}^2 \min \{ \sigma_w^2, \sigma_z^2\}}{16}}} \nonumber \\
    &\leq \frac{\sqrt{m\| C \sig  C^\top + \sigma_z^2 I \|\left( \log(1/\delta) + \frac{H(m+p)}{2} \log  \left(\frac{\lambda(m+p) + T\max\{\Upsilon_w^2, \Upsilon_c^2\}}{\lambda(m+p) }\right)\right)} + S\sqrt{\lambda} +\frac{\sqrt{H}}{T}}{\sqrt{t} \sqrt{\min \left \{\frac{\sigma_{o}^2 \min \{ \sigma_w^2, \sigma_z^2, \sigma_u^2 \}}{2} , \frac{\sigma_{c}^2 \min \{ \sigma_w^2, \sigma_z^2\}}{16} \right\}} } \nonumber
\end{align}
\null\hfill$\square$

\section{Confidence Set Construction for System Parameters}
\label{SupSec:ConfSet}

\begin{algorithm}[tbh] 
 \caption{\Sys}
  \begin{algorithmic}[1] 
  \STATE {\bfseries Input:} $\mathbf{\hat{M}_{t}}$, $H$, system order $n$, $d_1, d_2$ such that $d_1 + d_2 + 1 = H$ \\
  \STATE Form two $d_1 \times (d_2+1)$ Hankel matrices $\mathbf{\mathcal{H}_{\hat{F}_t}}$ and $\mathbf{\mathcal{H}_{\hat{G}_t}}$  from $\mathbf{\hat{M}_{t}}$ and construct $\hat{\mathcal{H}}_t = \left[ \mathbf{\mathcal{H}_{\hat{F}_t}}, \enskip \mathbf{\mathcal{H}_{\hat{G}_t}} \right] \in \R^{md_1 \times (m+p)(d_2+1)}$
  \STATE Obtain $\hat{\mathcal{H}}_t^-$ by discarding $(d_2 + 1)$th and $(2d_2 + 2)$th block columns of $\hat{\mathcal{H}}_t$
  \STATE Using SVD obtain $\hat{\mathcal{N}}_t \in \R^{m d_1 \times (m+p) d_2}$, the best rank-$n$ approximation of $\hat{\mathcal{H}}_t^-$
  \STATE Obtain  $\mathbf{U_t},\mathbf{\Sigma_t},\mathbf{V_t} = \text{SVD}(\hat{\mathcal{N}}_t)$\STATE Construct $\mathbf{\hat{O}_t}(\bar{A},C,d_1) = \mathbf{U_t}\mathbf{\Sigma_t}^{1/2} \in \R^{md_1 \times n}$
  \STATE Construct $[\mathbf{\hat{C}_t}(\bar{A},F,d_2+1), \enskip \mathbf{\hat{C}_t}(\bar{A},B,d_2+1)] = \mathbf{\Sigma_t}^{1/2}\mathbf{V_t} \in \R^{n \times (m+p)d_2}$
  \STATE Obtain $\hat{C}_t\in \R^{m\times n}$, the first $m$ rows of $\mathbf{\hat{O}_t}(\bar{A},C,d_1)$
  \STATE Obtain $\hat{B}_t\in \R^{n\times p}$, the first $p$ columns of $\mathbf{\hat{C}_t}(\bar{A},B,d_2+1)$
  \STATE Obtain $\hat{F}_t\in \R^{n\times m}$, the first $m$ columns of $\mathbf{\hat{C}_t}(\bar{A},F,d_2+1)$
  \STATE Obtain $\hat{\mathcal{H}}_t^+$ by discarding $1$st and $(d_2+2)$th block columns of $\hat{\mathcal{H}}_t$
  \STATE Obtain $\hat{\Bar{A}}_t = \mathbf{\hat{O}_t}^\dagger(\bar{A},C,d_1) \enskip \hat{\mathcal{H}}_t^+ \enskip [\mathbf{\hat{C}_t}(\bar{A},F,d_2+1), \quad \mathbf{\hat{C}_t}(\bar{A},B,d_2+1)]^\dagger$
  \STATE Obtain $\hat{A}_t = \hat{\Bar{A}}_t + \hat{F}_t \hat{C}_t$
  \STATE Obtain $\hat{L}_t \in \R^{n\times m}$, as the first $n \times m$ block of $ \hat{A}_t^\dagger\mathbf{\hat{O}_t}^\dagger(\bar{A},C,d_1)\hat{\mathcal{H}}_t^-$
  \end{algorithmic}
 \label{SYSID}  
\end{algorithm}

The results in this section are adapted from \citet{lale2020logarithmic}. After estimating $\mathbf{\hat{M}_{t}}$, we construct confidence sets for the unknown system parameters and use these confidence sets to come up with the optimistic controller to exploit the information gathered. \Alg uses \Sys, a method similar to Ho-Kalman method~\citep{ho1966effective}, to estimate the system parameters from $\mathbf{\hat{M}_{t}}$. The outline of the algorithm is given in the main text and in Algorithm \ref{SYSID}. Note that the system is order $n$ and
minimal in the sense that the system cannot be described by a state-space model of order less than $n$. Thus, without loss of generality, $\sigma_{n}(A) > 0$. The results in this section follow similar steps with \citet{oymak2018non} with similar changes mentioned in \citet{lale2020regret}. The following lemma is from \citet{oymak2018non}, it will be used in proving confidence bounds and we provide it for completeness. 

\begin{lemma}[\citep{oymak2018non}] \label{hokalmanstability lemma}
$\mathcal{H}$, $\hat{\mathcal{H}}_t$ and $\mathcal{N}, \hat{\mathcal{N}}_t$ satisfies the following perturbation bounds,

\begin{align*} 
\max \left\{\left\|\mathcal{H}^+-\hat{\mathcal{H}}_t^+\right\|,\left\|\mathcal{H}^- -\hat{\mathcal{H}}_t^-\right\|\right\} \leq \|\mathcal{H}-\hat{\mathcal{H}}_t\| &\leq \sqrt{\min \left\{d_{1}, d_{2}+1\right\}}\|\mathbf{\hat{M}_{t} } - \mathbf{M}\| \\ \|\mathcal{N}-\hat{\mathcal{N}}_t\| \leq 2\left\|\mathcal{H}^- -\hat{\mathcal{H}}_t^-\right\| &\leq 2 \sqrt{\min \left\{d_{1}, d_{2}\right\}}\|\mathbf{\hat{M}_{t} } - \mathbf{M}\|
\end{align*}
\end{lemma}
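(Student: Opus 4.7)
The plan is to reduce both inequality chains to an algebraic bound on the spectral norm of a block Hankel matrix together with standard singular-value perturbation. The submatrix bounds $\|\mathcal{H}^\pm - \hat{\mathcal{H}}_t^\pm\| \leq \|\mathcal{H} - \hat{\mathcal{H}}_t\|$ are immediate, since $\mathcal{H}^\pm$ is obtained from $\mathcal{H}$ by deleting block columns and restricting an operator to a column subset can only decrease its spectral norm.

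For the outer inequality, I would first prove the auxiliary claim that any block Hankel matrix $\mathcal{H}$ with $d_1 \times (d_2+1)$ blocks, built by shifting $\mathbf{M} = [M_1, \dots, M_H]$, satisfies $\|\mathcal{H}\|_2 \leq \sqrt{\min\{d_1, d_2+1\}}\,\|\mathbf{M}\|_2$. The $i$-th block row of $\mathcal{H}$ is $[M_i, M_{i+1}, \dots, M_{i+d_2}]$, a contiguous sub-block of $\mathbf{M}$, so its spectral norm is at most $\|\mathbf{M}\|_2$. For any unit vector $x$,
\begin{equation*}
\|\mathcal{H} x\|^2 \;=\; \sum_{i=1}^{d_1} \|[M_i, \dots, M_{i+d_2}]\,x\|^2 \;\leq\; d_1 \|\mathbf{M}\|_2^2.
\end{equation*}
The symmetric argument applied to $\mathcal{H}^\top$, whose block rows are transposes of contiguous sub-blocks of $\mathbf{M}$, yields $\|\mathcal{H}\|_2 \leq \sqrt{d_2+1}\,\|\mathbf{M}\|_2$, and the smaller of the two gives the claim. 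Instantiating the bound with $\mathcal{H} - \hat{\mathcal{H}}_t$, which is itself a block Hankel matrix built from $\mathbf{M} - \mathbf{\hat{M}}_t$, completes the first line of the lemma; instantiating it with $\mathcal{H}^- - \hat{\mathcal{H}}_t^-$, which has only $d_2$ block columns per Hankel piece, delivers the $\sqrt{\min\{d_1, d_2\}}$ factor in the second line.

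For the rank-$n$ approximation chain, the key observation is that $\mathcal{H}^-$ already has rank exactly $n$, because it factors as $\mathbf{O}(\bar{A},C,d_1)\,[\mathbf{C}(\bar{A},F,d_2+1),\;\mathbf{C}(\bar{A},B,d_2+1)]$ and both factors have rank $n$ under Assumptions \ref{Stable} and \ref{AssumContObs}. Hence the best rank-$n$ approximation $\mathcal{N}$ equals $\mathcal{H}^-$ itself. By Eckart--Young, $\|\hat{\mathcal{H}}_t^- - \hat{\mathcal{N}}_t\| = \sigma_{n+1}(\hat{\mathcal{H}}_t^-)$, and Weyl's singular-value inequality combined with $\sigma_{n+1}(\mathcal{H}^-)=0$ gives $\sigma_{n+1}(\hat{\mathcal{H}}_t^-) \leq \|\mathcal{H}^- - \hat{\mathcal{H}}_t^-\|$. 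A triangle inequality through $\hat{\mathcal{H}}_t^-$ then closes the argument:
\begin{equation*}
\|\mathcal{N} - \hat{\mathcal{N}}_t\| \;\leq\; \|\mathcal{N} - \hat{\mathcal{H}}_t^-\| + \|\hat{\mathcal{H}}_t^- - \hat{\mathcal{N}}_t\| \;\leq\; 2\|\mathcal{H}^- - \hat{\mathcal{H}}_t^-\|.
\end{equation*}

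The subtle step is avoiding loose factors in the Hankel bound: a decomposition into anti-diagonals would produce only $\sqrt{d_1(d_2+1)}$, and the improvement to $\sqrt{\min\{d_1, d_2+1\}}$ hinges on treating each block row (respectively block column) as a single sub-matrix of $\mathbf{M}$ rather than as a sum over its individual entries. Everything else is a routine combination of the submatrix property, Weyl's inequality, and the Eckart--Young characterization of the best rank-$n$ approximation.
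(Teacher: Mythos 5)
The paper never proves this lemma itself (it is imported verbatim from \citet{oymak2018non} ``for completeness''), so your proposal has to stand on its own. Most of it does: the observation that $\mathcal{H}^{\pm}-\hat{\mathcal{H}}_t^{\pm}$ are column-submatrices of $\mathcal{H}-\hat{\mathcal{H}}_t$, the block-row bound $\|\mathcal{H}-\hat{\mathcal{H}}_t\|\leq \sqrt{d_1}\,\|\mathbf{\hat{M}_t}-\mathbf{M}\|$, and the rank-$n$ chain ($\mathcal{N}=\mathcal{H}^-$ since $\mathcal{H}^-$ has rank at most $n$, Eckart--Young giving $\|\hat{\mathcal{H}}_t^- -\hat{\mathcal{N}}_t\|=\sigma_{n+1}(\hat{\mathcal{H}}_t^-)$, Weyl with $\sigma_{n+1}(\mathcal{H}^-)=0$, then the triangle inequality) are all correct and are exactly the standard route behind the cited result.

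The genuine gap is the ``symmetric argument applied to $\mathcal{H}^\top$.'' The block rows of $\mathcal{H}^\top$ are the transposes of the block \emph{columns} of $\mathcal{H}$, and a block column is a \emph{vertical} stack of consecutive blocks of $\mathbf{M}$, not a submatrix of the wide matrix $\mathbf{M}$: its squared spectral norm is governed by $\lambda_{\max}\bigl(\sum_k M_k^\top M_k\bigr)$, whereas $\|\mathbf{M}\|^2=\lambda_{\max}\bigl(\sum_k M_k M_k^\top\bigr)$, and these can differ. Concretely, with $2\times 2$ blocks $M_1=e_1e_1^\top$, $M_2=e_2e_1^\top$ and $d_1=2$, $d_2=0$, the single block column $\bigl[\,M_1^\top\ M_2^\top\,\bigr]^\top$ has norm $\sqrt{2}$ while $\|[M_1\ M_2]\|=1$, so the bound $\|\mathcal{H}\|\leq\sqrt{d_2+1}\,\|\mathbf{M}\|$ that your transpose step is meant to deliver is false for arbitrary blocks --- and the blocks relevant here are the essentially arbitrary error blocks of $\mathbf{\hat{M}_t}-\mathbf{M}$. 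Hence your argument only establishes the $\sqrt{d_1}$ (and, for the second line, again $\sqrt{d_1}$) factor, not the advertised $\sqrt{\min\{d_1,d_2+1\}}$; the ``subtle step'' you flag is precisely where the column half breaks, and it cannot be repaired by this decomposition alone. The consolation is that the refinement is immaterial downstream: in the proof of Theorem \ref{ConfidenceSets} one takes $d_1,d_2\geq H/2$ with $d_1\leq H$, so the $\sqrt{d_1}\leq\sqrt{H}$ bound you do prove already yields the $\sqrt{2H}$-type factors the paper actually uses.
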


The following lemma is a slight modification of Lemma B.1 in \citep{oymak2018non}.

\begin{lemma}[\citep{oymak2018non}]\label{lem:ranknperturb} Suppose $\sigma_{\min}(\mathcal{N}) \geq 2\|\mathcal{N}-\hat{\mathcal{N}}\|$ where $\sigma_{\min }(\mathcal{N})$ is the smallest nonzero singular value (i.e. $n$th largest singular value) of $N$. Let rank n matrices $\mathcal{N}, \hat{\mathcal{N}}$ have singular value decompositions $\mathbf{U} \mathbf{\Sigma} \mathbf{V}^{\top}$ and $\mathbf{\hat{U}} \mathbf{\hat{\Sigma}} \mathbf{\hat{V}}^{\top}$
There exists an $n \times n$ unitary matrix $\mathbf{T}$ so that
\begin{equation*}
\left\|\mathbf{U} \mathbf{\Sigma}^{1/2}-\mathbf{\hat{U}} \mathbf{\hat{\Sigma}}^{1/2} \mathbf{T} \right\|_{F}^{2}+\left\|\mathbf{V} \mathbf{\Sigma}^{1/2}-\mathbf{\hat{V}} \mathbf{\hat{\Sigma}}^{1/2} \mathbf{T} \right\|_{F}^{2} \leq \frac{5n \| \mathcal{N} - \hat{\mathcal{N}}\|^2}{\sigma_n(\mathcal{N}) - \| \mathcal{N} - \hat{\mathcal{N}}\| }
\end{equation*}

\end{lemma}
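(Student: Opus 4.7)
My plan is to construct the aligning unitary $T$ via orthogonal Procrustes matching of the stacked balanced factors and then control the resulting Frobenius error by combining Weyl's inequality (for the singular values) with Wedin's $\sin\Theta$ theorem (for the singular subspaces). I would first invoke Weyl's inequality to obtain $\sigma_n(\hat{\mathcal{N}}) \geq \sigma_n(\mathcal{N}) - \|\mathcal{N}-\hat{\mathcal{N}}\| > 0$ under the hypothesis, so $\hat{\mathcal{N}}$ is genuinely rank-$n$ and the denominator $\sigma_n(\mathcal{N}) - \|\mathcal{N}-\hat{\mathcal{N}}\|$ is strictly positive; this also guarantees that both SVDs are well defined on their top-$n$ subspaces and have matching ranks.

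Next I would stack the balanced factors as
\begin{equation*}
W = \begin{pmatrix} \mathbf{U}\mathbf{\Sigma}^{1/2} \\ \mathbf{V}\mathbf{\Sigma}^{1/2} \end{pmatrix}, \qquad \hat{W} = \begin{pmatrix} \hat{\mathbf{U}}\hat{\mathbf{\Sigma}}^{1/2} \\ \hat{\mathbf{V}}\hat{\mathbf{\Sigma}}^{1/2} \end{pmatrix},
\end{equation*}
so that the quantity to bound equals $\|W - \hat{W}T\|_F^2$. Choosing $T$ as the polar (unitary) factor of $\hat{W}^\top W$ furnishes the orthogonal Procrustes minimizer over the set of $n\times n$ unitaries, via the classical identity $\min_{T^\top T = I}\|W - \hat{W}T\|_F^2 = \|W\|_F^2 + \|\hat{W}\|_F^2 - 2\|\hat{W}^\top W\|_*$, where $\|\cdot\|_*$ denotes the nuclear norm. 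Since $\|W\|_F^2 = 2\,\mathrm{tr}(\mathbf{\Sigma})$ and $\|\hat{W}\|_F^2 = 2\,\mathrm{tr}(\hat{\mathbf{\Sigma}})$, it suffices to lower bound $\|\hat{W}^\top W\|_*$ up to a perturbation of order $\|\mathcal{N}-\hat{\mathcal{N}}\|^2 / (\sigma_n(\mathcal{N}) - \|\mathcal{N}-\hat{\mathcal{N}}\|)$.

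To carry out this lower bound, I would use the triangle-inequality splitting
\begin{equation*}
\mathbf{U}\mathbf{\Sigma}^{1/2} - \hat{\mathbf{U}}\hat{\mathbf{\Sigma}}^{1/2}T = (\mathbf{U} - \hat{\mathbf{U}}T)\mathbf{\Sigma}^{1/2} + \hat{\mathbf{U}}\bigl(T\mathbf{\Sigma}^{1/2} - \hat{\mathbf{\Sigma}}^{1/2}T\bigr),
\end{equation*}
and its analogue for $\mathbf{V}$. Wedin's $\sin\Theta$ theorem, under the spectral gap guaranteed in the first step, produces a unitary $T$ for which $\|\mathbf{U} - \hat{\mathbf{U}}T\|$ and $\|\mathbf{V} - \hat{\mathbf{V}}T\|$ are simultaneously bounded by $C\|\mathcal{N}-\hat{\mathcal{N}}\|/(\sigma_n(\mathcal{N})-\|\mathcal{N}-\hat{\mathcal{N}}\|)$ in operator norm; converting these operator-norm bounds to Frobenius norm over an $n$-dimensional column space contributes the $\sqrt{n}$ factor that ends up as the $n$ in the final numerator. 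Weyl's inequality then controls the aligned singular-value discrepancy $\|\mathbf{\Sigma} - T^\top \hat{\mathbf{\Sigma}} T\|$ in terms of $\|\mathcal{N}-\hat{\mathcal{N}}\|$, and together these pieces assemble the claimed inequality.

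The main obstacle will be the second summand $\hat{\mathbf{U}}(T\mathbf{\Sigma}^{1/2} - \hat{\mathbf{\Sigma}}^{1/2}T)$: the unitary $T$ does not commute with the diagonal matrices $\mathbf{\Sigma}^{1/2},\hat{\mathbf{\Sigma}}^{1/2}$ in general, so a naive triangle inequality is too lossy and degrades the dependence on the spectral gap. The resolution is to exploit the explicit Procrustes characterization: since $T^\star$ is the polar factor of $\hat{W}^\top W = \hat{\mathbf{\Sigma}}^{1/2}(\hat{\mathbf{U}}^\top \mathbf{U} + \hat{\mathbf{V}}^\top \mathbf{V})\mathbf{\Sigma}^{1/2}$, it simultaneously aligns $(\hat{\mathbf{U}},\hat{\mathbf{V}})$ with $(\mathbf{U},\mathbf{V})$ and, crucially, nearly intertwines $\mathbf{\Sigma}^{1/2}$ with $\hat{\mathbf{\Sigma}}^{1/2}$, so the commutator $T^\star\mathbf{\Sigma}^{1/2} - \hat{\mathbf{\Sigma}}^{1/2}T^\star$ is controlled by the same gap $\sigma_n(\mathcal{N}) - \|\mathcal{N}-\hat{\mathcal{N}}\|$. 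Collecting the contributions from both summands and tracking constants through the Procrustes identity then yields the bound with factor $5n$ in the numerator and $\sigma_n(\mathcal{N}) - \|\mathcal{N}-\hat{\mathcal{N}}\|$ in the denominator.
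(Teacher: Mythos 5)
This lemma is not proved in the paper at all: it is imported verbatim (up to a slight modification) from Lemma~B.1 of \citet{oymak2018non}, which in turn follows the Procrustes-distance argument of Tu et al.\ for low-rank factorizations. So the comparison is between your sketch and that standard external proof.

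Your setup is right --- stacking the balanced factors into $W=\begin{pmatrix}\mathbf{U}\mathbf{\Sigma}^{1/2}\\ \mathbf{V}\mathbf{\Sigma}^{1/2}\end{pmatrix}$, taking $\mathbf{T}$ to be the orthogonal Procrustes minimizer, and using Weyl to keep $\sigma_n(\mathcal{N})-\|\mathcal{N}-\hat{\mathcal{N}}\|$ positive --- but the core of your argument has a genuine gap. First, the splitting $\mathbf{U}\mathbf{\Sigma}^{1/2}-\hat{\mathbf{U}}\hat{\mathbf{\Sigma}}^{1/2}\mathbf{T}=(\mathbf{U}-\hat{\mathbf{U}}\mathbf{T})\mathbf{\Sigma}^{1/2}+\hat{\mathbf{U}}(\mathbf{T}\mathbf{\Sigma}^{1/2}-\hat{\mathbf{\Sigma}}^{1/2}\mathbf{T})$ combined with Wedin's $\sin\Theta$ bound cannot reach the stated inequality: Wedin gives $\|\mathbf{U}-\hat{\mathbf{U}}\mathbf{T}\|_F\lesssim \sqrt{n}\,\|\mathcal{N}-\hat{\mathcal{N}}\|/\sigma_n(\mathcal{N})$, and multiplying by $\|\mathbf{\Sigma}^{1/2}\|=\sigma_1(\mathcal{N})^{1/2}$ and squaring produces $n\|\mathcal{N}-\hat{\mathcal{N}}\|^2\,\sigma_1(\mathcal{N})/\sigma_n^2(\mathcal{N})$, i.e.\ an extra condition-number factor $\sigma_1/\sigma_n$ that the lemma does not have. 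Second, the step you yourself flag as the ``main obstacle'' --- that the Procrustes polar factor nearly intertwines $\mathbf{\Sigma}^{1/2}$ with $\hat{\mathbf{\Sigma}}^{1/2}$ with error controlled by the gap --- is asserted, not proved, and it is precisely the hard part; moreover you conflate the $\mathbf{T}$ produced by Wedin's theorem with the Procrustes-optimal $\mathbf{T}$, which are different objects in general. The actual proof sidesteps all of this: it uses the deterministic inequality $\min_{\mathbf{T}}\|W-\hat{W}\mathbf{T}\|_F^2\le \frac{1}{2(\sqrt{2}-1)\,\sigma_n^2(W)}\|WW^\top-\hat{W}\hat{W}^\top\|_F^2$, notes that $W^\top W=2\mathbf{\Sigma}$ so $\sigma_n^2(W)=2\sigma_n(\mathcal{N})$, and bounds $\|WW^\top-\hat{W}\hat{W}^\top\|_F$ directly from the block structure (off-diagonal blocks $\mathcal{N}-\hat{\mathcal{N}}$, diagonal blocks differences of matrix square roots), which never touches individual singular vectors and hence never incurs the condition number. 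To repair your proof you would need to either adopt that lifted-Gram-matrix identity or supply a quantitative proof of the intertwining claim that simultaneously removes the $\sigma_1/\sigma_n$ loss; as written, neither is done.
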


The following is the proof of Theorem \ref{ConfidenceSets}. 
\vspace{0.4cm}

\noindent \textbf{Proof of Theorem \ref{ConfidenceSets}:}
For brevity, we have the following notation $\mathbf{O} = \mathbf{O}(\bar{A},C,d_1)$,  $\mathbf{C_F} = \mathbf{C}(\bar{A},F,d_2+1)$, $\mathbf{C_B} = \mathbf{C}(\bar{A},B,d_2+1)$, 
$\mathbf{\hat{O}_t} = \mathbf{\hat{O}_t}(\bar{A},C,d_1)$, $\mathbf{\hat{C}_{F_t}} = \mathbf{\hat{C}_t}(\bar{A},F,d_2+1)$,
$\mathbf{\hat{C}_{B_t}} = \mathbf{\hat{C}_t}(\bar{A},B,d_2+1)$. Let $T_N = T_{\mathbf{M}} \frac{8H}{\sigma_n^2(\mathcal{H})}$. 
Directly applying Lemma \ref{lem:ranknperturb} with the condition that for given $\Tw \geq T_N$, $\sigma_{\min}(\mathcal{N}) \geq 2\|\mathcal{N}-\hat{\mathcal{N}}\|$, we can guarantee that there exists a unitary transform $\mathbf{T}$ such that 
\begin{equation} \label{boundonSVD}
    \left\|\mathbf{\hat{O}_t} - \mathbf{O}\mathbf{T}  \right\|_F^2 + \left\|[\mathbf{\hat{C}_{F_t}} \enskip \mathbf{\hat{C}_{B_t}}] -  \mathbf{T}^\top  [\mathbf{C_F} \enskip \mathbf{C_B}] \right\|_F^2 \leq \frac{10n \| \mathcal{N} - \hat{\mathcal{N}}_t\|^2}{\sigma_n(\mathcal{N}) }
\end{equation}
Since $\hat{C}_t - \bar{C}\mathbf{T}$ is a submatrix of $\mathbf{\hat{O}_t} - \mathbf{O}\mathbf{T}$, $\hat{B}_t - \mathbf{T}^\top \bar{B}$ is a submatrix of $\mathbf{\hat{C}_{B_t}} - \mathbf{T}^\top \mathbf{C_{B}}$ and $\hat{F}_t - \mathbf{T}^\top \bar{F}$ is a submatrix of $\mathbf{\hat{C}_{F_t}} - \mathbf{T}^\top \mathbf{C_{F}}$, we get the same bounds for them stated in (\ref{boundonSVD}). Using Lemma \ref{hokalmanstability lemma}, with the choice of $d_1,d_2 \geq \frac{H}{2}$, we have 
\begin{equation*}
    \| \mathcal{N} - \hat{\mathcal{N}}_t\| \leq \sqrt{2H}\|\mathbf{\hat{M}_{t} } - \mathbf{M}\|.
\end{equation*}
This provides the advertised bounds in the theorem:
\begin{align*}
    \|\hat{B}_t - \mathbf{T}^\top \bar{B} \|, \|\hat{C}_t - \bar{C}\mathbf{T}\|, \|\hat{F}_t - \mathbf{T}^\top \bar{F}\|  \leq \frac{\sqrt{20nH} \|\mathbf{\hat{M}_{t} } - \mathbf{M}\| }{\sqrt{\sigma_n(\mathcal{N})}}
\end{align*}

Let $T_B = T_{\mathbf{M}} \frac{20nH}{\sigma_n(\mathcal{H})}$. Notice that for $\Tw \geq T_B$, we have all the terms above to be bounded by 1. In order to determine the closeness of $\hat{A}_t$ and $\Bar{A}$ we first consider the closeness of $\hat{\Bar{A}}_t - \mathbf{T}^\top \Bar{\Bar{A}}\mathbf{T}$, where $\Bar{\Bar{A}}$ is the output obtained by Ho-Kalman for $\Bar{A}$ when the input is $\mathbf{M}$. Let $X = \mathbf{O}\mathbf{T}$ and $Y = \mathbf{T}^\top  [\mathbf{C_F} \enskip \mathbf{C_B}]$. Thus, we have
\begin{align*}
    \|\hat{\Bar{A}}_t - \mathbf{T}^\top \Bar{\Bar{A}}\mathbf{T} \|_F &= \| \mathbf{\hat{O}_t}^\dagger \hat{\mathcal{H}}_t^+ [\mathbf{\hat{C}_{F_t}} \enskip \mathbf{\hat{C}_{B_t}}]^\dagger - X^\dagger \mathcal{H}^+ Y^\dagger \|_F \\
    &\leq \left \| \left( \mathbf{\hat{O}_t}^\dagger - X^\dagger \right) \hat{\mathcal{H}}_t^+ [\mathbf{\hat{C}_{F_t}} \enskip \mathbf{\hat{C}_{B_t}}]^\dagger \right \|_F + \left \|X^\dagger \left(\hat{\mathcal{H}}_t^+ - \mathcal{H}^+ \right)[\mathbf{\hat{C}_{F_t}} \enskip \mathbf{\hat{C}_{B_t}}]^\dagger \right \|_F\\
    &\quad + \left \|X^\dagger \mathcal{H}^+ \left([\mathbf{\hat{C}_{F_t}} \enskip \mathbf{\hat{C}_{B_t}}]^\dagger - Y^\dagger \right) \right \|_F
\end{align*}

For the first term we have the following perturbation bound \citep{meng2010optimal,wedin1973perturbation},
\begin{align*}
    \|\mathbf{\hat{O}_t}^\dagger - X^\dagger \|_F &\leq \| \mathbf{\hat{O}_t} - X \|_F \max \{ \|X^\dagger \|^2, \| \mathbf{\hat{O}_t}^\dagger \|^2  \} \leq \| \mathcal{N} - \hat{\mathcal{N}}_t\| \sqrt{\frac{10n}{\sigma_n(\mathcal{N}) }} \max \{ \|X^\dagger \|^2, \| \mathbf{\hat{O}_t}^\dagger \|^2  \}
\end{align*}
Since we have $\sigma_{n}(\mathcal{N}) \geq 2\|\mathcal{N}-\hat{\mathcal{N}}\|$, we have $\|\hat{\mathcal{N}}\| \leq 2 \|\mathcal{N}\| $ and $2\sigma_{n}(\hat{\mathcal{N}}) \geq \sigma_{n}(\mathcal{N})$. Thus,
\begin{equation} \label{daggersingular}
    \max \{ \|X^\dagger \|^2, \| \mathbf{\hat{O}_t}^\dagger \|^2  \} = \max \left\{ \frac{1}{\sigma_{n}(\mathcal{N})}, \enskip \frac{1}{\sigma_{n}(\hat{\mathcal{N}})}  \right\} \leq \frac{2}{\sigma_{n}(\mathcal{N})}
\end{equation}
Combining these and following the same steps for $\|[\mathbf{\hat{C}_{F_t}} ~ \mathbf{\hat{C}_{B_t}}]^\dagger \!-\! Y^\dagger \|_F$, we get 
\begin{equation} \label{perturbationbounds}
    \left \|\mathbf{\hat{O}_t}^\dagger - X^\dagger \right \|_F,\enskip \left \|[\mathbf{\hat{C}_{F_t}} ~ \mathbf{\hat{C}_{B_t}}]^\dagger \!-\! Y^\dagger \right \|_F\leq \left \| \mathcal{N} - \hat{\mathcal{N}}_t \right \| \sqrt{\frac{40n}{\sigma_n^{3}(\mathcal{N}) }} 
\end{equation}
The following individual bounds obtained by using (\ref{daggersingular}), (\ref{perturbationbounds}) and triangle inequality: 
\begin{align*}
    \left \| \left( \mathbf{\hat{O}_t}^\dagger - X^\dagger \right) \hat{\mathcal{H}}_t^+ [\mathbf{\hat{C}_{F_t}} \enskip \mathbf{\hat{C}_{B_t}}]^\dagger \right \|_F &\leq \| \mathbf{\hat{O}_t}^\dagger - X^\dagger \|_F \|\hat{\mathcal{H}}_t^+ \| \|[\mathbf{\hat{C}_{F_t}} \enskip \mathbf{\hat{C}_{B_t}}]^\dagger \| \\
    &\leq \frac{4\sqrt{5n} \left \| \mathcal{N} - \hat{\mathcal{N}}_t \right \|}{\sigma_n^{2}(\mathcal{N})} \left( \|\mathcal{H}^+ \| + \|\hat{\mathcal{H}}_t^+ - \mathcal{H}^+ \| \right)  \\
    \left \|X^\dagger \left(\hat{\mathcal{H}}_t^+ - \mathcal{H}^+ \right)[\mathbf{\hat{C}_{F_t}} \enskip \mathbf{\hat{C}_{B_t}}]^\dagger \right \|_F &\leq \frac{2\sqrt{n}\|\hat{\mathcal{H}}_t^+ - \mathcal{H}^+ \|}{\sigma_n(\mathcal{N})}
    \\
    \left \|X^\dagger \mathcal{H}^+ \left([\mathbf{\hat{C}_{F_t}} \enskip \mathbf{\hat{C}_{B_t}}]^\dagger - Y^\dagger \right) \right \|_F &\leq \|X^\dagger \| \|\mathcal{H}^+ \| \|[\mathbf{\hat{C}_{F_t}} \enskip \mathbf{\hat{C}_{B_t}}]^\dagger - Y^\dagger \| \\
    &\leq \frac{2\sqrt{10n} \left \| \mathcal{N} - \hat{\mathcal{N}}_t \right \|}{\sigma_n^{2}(\mathcal{N})}  \|\mathcal{H}^+ \|
\end{align*}
Combining these we get 
\begin{align*}
    \|\hat{\Bar{A}}_t - \mathbf{T}^\top \Bar{\Bar{A}}\mathbf{T} \|_F &\leq \frac{31\sqrt{n}  \|\mathcal{H}^+ \|  \left \| \mathcal{N} - \hat{\mathcal{N}}_t \right \|}{2\sigma_n^{2}(\mathcal{N})}  + \|\hat{\mathcal{H}}_t^+ - \mathcal{H}^+ \| \left( \frac{4\sqrt{5n} \left \| \mathcal{N} - \hat{\mathcal{N}}_t \right \|}{\sigma_n^{2}(\mathcal{N})} + \frac{2\sqrt{n}}{\sigma_n(\mathcal{N})}  \right) \\
    &\leq \frac{31\sqrt{n}  \|\mathcal{H}^+ \|  }{2\sigma_n^{2}(\mathcal{N})} \left \| \mathcal{N} - \hat{\mathcal{N}}_t \right \| +  \frac{13 \sqrt{n}}{2\sigma_n(\mathcal{N})} \|\hat{\mathcal{H}}_t^+ - \mathcal{H}^+ \|
\end{align*}
Now consider $\hat{A}_t = \hat{\Bar{A}}_t + \hat{F}_t\hat{C}_t$. Using Lemma \ref{hokalmanstability lemma},
\begin{align*}
    &\|\hat{A}_t -  \mathbf{T}^\top \Bar{A}\mathbf{T} \|_F \\
    &= \|\hat{\Bar{A}}_t + \hat{F}_t\hat{C}_t - \mathbf{T}^\top \Bar{\Bar{A}}\mathbf{T} - \mathbf{T}^\top \bar{F} \bar{C} \mathbf{T} \|_F \\
    &\leq \|\hat{\Bar{A}}_t - \mathbf{T}^\top \Bar{\Bar{A}}\mathbf{T} \|_F + \|(\hat{F}_t -\mathbf{T}^\top \bar{F}) \hat{C}_t  \|_F + \|\mathbf{T}^\top \bar{F} (\hat{C}_t - \bar{C} \mathbf{T}) \|_F \\
    &\leq \|\hat{\Bar{A}}_t - \mathbf{T}^\top \Bar{\Bar{A}}\mathbf{T} \|_F + \|(\hat{F}_t -\mathbf{T}^\top \bar{F})\|_F \| \hat{C}_t -\bar{C} \mathbf{T} \|_F + \|(\hat{F}_t -\mathbf{T}^\top \bar{F})\|_F \| \bar{C} \| + \|\bar{F}\| \|(\hat{C}_t - \bar{C} \mathbf{T}) \|_F \\
    &\leq \frac{31\sqrt{n}  \|\mathcal{H}^+ \|  }{2\sigma_n^{2}(\mathcal{N})} \! \left \| \mathcal{N} \!-\! \hat{\mathcal{N}}_t \right \| \!+\!  \frac{13 \sqrt{n}}{2\sigma_n(\mathcal{N})} \|\hat{\mathcal{H}}_t^+ \!-\! \mathcal{H}^+ \| \!+\! \frac{10n \| \mathcal{N} \!-\! \hat{\mathcal{N}}_t\|^2}{\sigma_n(\mathcal{N}) } \!+\! (\|\bar{F} \| \!+\! \|\bar{C} \|)\| \mathcal{N} \!-\! \hat{\mathcal{N}}_t\| \sqrt{\frac{10n }{\sigma_n(\mathcal{N}) }  } \\
    &\leq \frac{31\sqrt{2nH}  \|\mathcal{H} \|  }{2\sigma_n^{2}(\mathcal{N})} \! \|\mathbf{\hat{M}_{t} } - \mathbf{M} \| + \frac{13 \sqrt{nH}}{2\sqrt{2}\sigma_n(\mathcal{N})} \|\mathbf{\hat{M}_{t} } - \mathbf{M} \| + \frac{20nH \|\mathbf{\hat{M}_{t} } - \mathbf{M} \|^2}{\sigma_n(\mathcal{N}) } \\
    &\qquad + (\|\bar{F} \| \!+\! \|\bar{C} \|)\|\mathbf{\hat{M}_{t} } - \mathbf{M} \| \sqrt{\frac{20nH }{\sigma_n(\mathcal{N}) }  }
\end{align*}
Define $T_A$ such that 
\begin{equation}
    T_A = T_{\mathbf{M}} \left( \frac{\frac{62\sqrt{2nH}  \|\mathcal{H} \|  }{2\sigma_n^{2}(\mathcal{N})} + \frac{26 \sqrt{nH}}{2\sqrt{2}\sigma_n(\mathcal{N})} + (\|\bar{F} \| \!+\! \|\bar{C} \|)\sqrt{\frac{80nH }{\sigma_n(\mathcal{N})}} + \sqrt{\frac{40nH\sigma_n(\bar{A})}{\sigma_n(\mathcal{N}) }}}{\sigma_n(\bar{A})} \right)^2.
\end{equation}
Notice that for $\Tw \geq T_A$, we have $\|\hat{A}_t -  \mathbf{T}^\top \Bar{A}\mathbf{T} \| \leq \sigma_{n}(\Bar{A})/2$. Since $\Tw \geq T_A$, from Weyl's inequality we have $\sigma_n(\hat{A}_t) \geq  \sigma_{n}(\Bar{A})/2$. Recalling that $X = \mathbf{O}(\bar{A},C,d_1)\mathbf{T}$, under Assumption \ref{AssumContObs} we consider $\hat{L}_t$: 
\begin{align*}
    &\|\hat{L}_t - \mathbf{T}^\top \bar{L} \|_F \\
    &= \| \hat{A}_t^\dagger \mathbf{\hat{O}_t}^\dagger \hat{\mathcal{H}}_t^- - \mathbf{T}^\top \bar{A}^\dagger \mathbf{O}^\dagger \mathcal{H}^- \|_F \\
    &\leq \|(\hat{A}_t^\dagger - \mathbf{T}^\top \bar{A}^\dagger \mathbf{T}) \mathbf{\hat{O}_t}^\dagger \hat{\mathcal{H}}_t^-  \|_F +  \|\mathbf{T}^\top \bar{A}^\dagger \mathbf{T} (\mathbf{\hat{O}_t}^\dagger - X^\dagger)\hat{\mathcal{H}}_t^- \|_F + \|\mathbf{T}^\top \bar{A}^\dagger \mathbf{T} X^\dagger (\hat{\mathcal{H}}_t^- - \mathcal{H}^- ) \|_F \\
    &\leq \|\hat{A}_t^\dagger - \mathbf{T}^\top \bar{A}^\dagger \mathbf{T} \|_F \|\mathbf{\hat{O}_t}^\dagger \| \|\hat{\mathcal{H}}_t^- \| +\| \mathbf{\hat{O}_t}^\dagger - X^\dagger \|_F \|\bar{A}^\dagger \| \| \hat{\mathcal{H}}_t^- \| + \sqrt{n} \|\hat{\mathcal{H}}_t^- - \mathcal{H}^- \| \|\bar{A}^\dagger \| \|X^\dagger \| \\
    &\leq \left( \|\hat{A}_t^\dagger \!-\! \mathbf{T}^\top \bar{A}^\dagger \mathbf{T} \|_F \sqrt{\frac{2}{\sigma_{n}(\mathcal{N})}} + \left \| \mathcal{N} \!-\! \hat{\mathcal{N}}_t \right \| \sqrt{\frac{40n}{\sigma_n^{3}(\mathcal{N}) }} \| \bar{A}^\dagger \| \right) \left( \|\mathcal{H}^- \| \!+\! \|\hat{\mathcal{H}}_t^- \!-\! \mathcal{H}^- \| \right) \\
    &\qquad + \sqrt{n} \| \bar{A}^\dagger \| \frac{1}{\sqrt{\sigma_n(\mathcal{N})}} \|\hat{\mathcal{H}}_t^- - \mathcal{H}^- \|
\end{align*}

Again using the perturbation bounds of the Moore–Penrose inverse under the Frobenius norm \citep{meng2010optimal}, we have $\|\hat{A}_t^\dagger - \mathbf{T}^\top \bar{A}^\dagger \mathbf{T} \|_F \leq \frac{2}{\sigma_n^2(\Bar{A})} \|\hat{A}_t -  \mathbf{T}^\top \Bar{A}\mathbf{T} \| $. Notice that the similarity transformation that transfers $A$ to $\Bar{A}$ is bounded since $S = \left([ C^\top~(C\Bar{A}) ^\top    \ldots (C\Bar{A}^{d_1-1}) ^\top] ^\top \right)^\dagger  \mathbf{O}(\bar{A},C,d_1) $. Combining all and using Lemma \ref{hokalmanstability lemma}, we obtain the confidence set for $\hat{L}_t$ given in Theorem \ref{ConfidenceSets}. 

\null\hfill$\square$

\section{Boundedness of The Output and State Estimation, Proof of Lemma \ref{Boundedness} } \label{SupSec:Boundedness}

The proof of Lemma \ref{Boundedness} follows similar arguments with the proof of Lemma 4.2 of \citep{lale2020regret}. The main difference is that \Alg, the system estimations are refined during the adaptive control period, thus the control policy is refined. Also, since the behavior of a system and its similarity transformation is the same, without loss of generality we assume that similarity transformation $\mathbf{T} = I$.
\\
\noindent \textbf{Proof of Lemma \ref{Boundedness}:}

Assume that $\Theta \in (\mathcal{C}_A(t) \times \mathcal{C}_B(t) \times \mathcal{C}_C(t) \times \mathcal{C}_L(t)) $ for all $t\geq \Tw$, which is holds with probability $1-\delta$. We can write the decomposition for $\hat{x}_{t|t,\ttt}$ as follows,
\begin{align}
    \hat{x}_{t|t,\ttt} &= \hat{x}_{t|t-1,\ttt} + \tl_t (y_t - \tc_t \hat{x}_{t|t-1,\ttt}) \nonumber \\
    &= \ta_{t-1} \hat{x}_{t-1|t-1,\ttt} - \tb_{t-1} \tk_{t-1} \hat{x}_{t-1|t-1,\ttt} + \tl_t (y_t - \tc_t (\ta_{t-1}\hat{x}_{t-1|t-1,\ttt} - \tb_{t-1} \tk_{t-1} \hat{x}_{t-1|t-1,\ttt})) \nonumber \\
    &= (I - \tl_t\tc_{t})(\ta_{t-1} - \tb_{t-1}\tk_{t-1}) \hat{x}_{t-1|t-1,\ttt} + \tl_t y_t \nonumber\\
    &= (I - \tl_t\tc_{t})(\ta_{t-1} - \tb_{t-1}\tk_{t-1}) \hat{x}_{t-1|t-1,\ttt} \nonumber \\
    &\quad \qquad \qquad +\tl_t \left(Cx_t - C\hat{x}_{t|t-1,\ttt} + C\hat{x}_{t|t-1,\ttt}  + z_t\right) \nonumber \\
    &= (I - \tl_t\tc_{t})(\ta_{t-1} - \tb_{t-1}\tk_{t-1}) \hat{x}_{t-1|t-1,\ttt} \nonumber \\
    &\quad \qquad \qquad +\tl_t \left(Cx_t - C\hat{x}_{t|t-1,\ttt} + C(\ta_{t-1} - \tb_{t-1}\tk_{t-1}) \hat{x}_{t-1|t-1,\ttt} + z_t\right) \nonumber \\
    &= \left(\ta_{t-1} - \tb_{t-1}\tk_{t-1} - \tl_t \left(\tc_t \ta_{t-1} -\tc_t \tb_{t-1}\tk_{t-1} - C\ta_{t-1} + C\tb_{t-1}\tk_{t-1} \right) \right)\hat{x}_{t-1|t-1,\ttt} \nonumber  \\
    &\quad \qquad \qquad + \tl_t C(x_{t} - \hat{x}_{t|t-1,\Theta} + \hat{x}_{t|t-1,\Theta} - \hat{x}_{t|t-1,\ttt}) + \tl_t z_t \nonumber  \\
    &= \left(\ta_{t-1}-\tb_{t-1}\tk_{t-1} - \tl_t\left(\tc_t\ta_{t-1} -\tc_t\tb_{t-1}\tk_{t-1} - C\ta_{t-1} + C\tb_{t-1}\tk_{t-1} \right) \right)\hat{x}_{t-1|t-1,\ttt} \nonumber \\
    &\quad \qquad \qquad + \tl_t C(x_{t} - \hat{x}_{t|t-1,\Theta}) + \tl_t C (\hat{x}_{t|t-1,\Theta} - \hat{x}_{t|t-1,\ttt}) + \tl_t z_t. \label{estimation propagate}
\end{align}

Thus, the dynamics of $\hat{x}_{t|t,\ttt}$ is governed by 
\begin{equation*}
\mathbf{N}_t =\ta_{t-1}-\tb_{t-1}\tk_{t-1} - \tl_t\left(\tc_t\ta_{t-1} -\tc_t\tb_{t-1}\tk_{t-1} - C\ta_{t-1} + C\tb_{t-1}\tk_{t-1} \right)    
\end{equation*}
and it is driven by the process of $\tl_t C(x_{t} - \hat{x}_{t|t-1,\Theta}) + \tl_t C (\hat{x}_{t|t-1,\Theta} - \hat{x}_{t|t-1,\ttt}) + \tl_t z_t$. Let $T_{u} =  T_B \left(\frac{ 2\zeta \rho }{1-\rho} \right)^2$. With the Assumption~\ref{Stabilizable set}, and for $\Tw \geq T_u$, we have that $\|\tc_t - C  \| \leq \frac{1-\rho}{2\zeta\rho}$ which gives $\|\mathbf{N}_t\| \leq \frac{1+\rho}{2} < 1$ for all $t\geq\Tw$. Similar to the proof of Lemma 4.2 in \citep{lale2020regret}, we have that $\tl_t C (x_{t} - \hat{x}_{t|t-1,\Theta}) + \tl_t z_t$ is $\zeta (\| C\| \|\Sigma\|^{1/2} + \sigma_z)$-sub-Gaussian, thus it's $\ell_2$-norm can be bounded using Lemma \ref{subgauss lemma}:
\begin{equation*}
    \|\tl_t C (x_{t} - \hat{x}_{t|t-1,\Theta}) + \tl_t z_t\| \leq \zeta (\| C\| \|\Sigma\|^{1/2} + \sigma_z) \sqrt{2n\log(2nT/\delta)}
\end{equation*}
for all $t \geq \Tw$ with probability at least $1-\delta$. A special care is needed for $\hat{x}_{t|t-1,\Theta} - \hat{x}_{t|t-1,\ttt}$. Denote $\Delta_t = \hat{x}_{t|t-1,\Theta} - \hat{x}_{t|t-1,\ttt}$. Consider the decomposition given in equation (51) in \citet{lale2020regret}. In this setting, since at each time step after the warm-up, the estimation errors are monotonically decreasing, therefore we can upper bound the norm of each term in the decomposition by the norm of the term at the time of end of warm-up. Let 
\begin{align}
    &T_\alpha \!=\! T_B \left( \frac{\Gamma\left(1+\zeta(1+\| C\|)\right)}{\sigma - \upsilon}\right)^2\!\!, \qquad T_\gamma = T_A \frac{\sigma_n^2(\bar{A})}{4} \left( \frac{1 + \Gamma(1+\zeta\| B\|)}{\sigma - \rho} \right)^2 ,  \nonumber \\
    &T_{\beta} \!=\! T_A \frac{\sigma_n^2(\bar{A})}{4}\!\! \left( \frac{\Gamma\|B\|(1\!+\!\zeta \!+\! \zeta \|C\|)(\Phi(A)\zeta \!+\! (1\!+\!\Gamma)(1\!+\!\zeta) ) }{(1 - \sigma)^2} \right)^2\!\!\!.
    \label{times}
\end{align}

Thus, using the arguments in \cite{lale2020regret}, we can show that after a warm-up period of $\Tw \geq \max \{T_{\alpha}, T_{\gamma}\}$, we have that for all $t\geq \Tw$, $\max \{ \|(A \!+\! (\ta_t - A - \tb_t \tk_t + B \tk_t)) (I \!-\!\tl_t \tc_t) \|, \|A \!-\! B\tk_{t} \!+\! B\tk_{t}\tl_{t}(\tc_t\!-\!C) \| \} \leq \sigma < 1 $. Using the inductive argument given in \citep{lale2020regret}, we can show that for all $t\geq \Tw \geq T_{\beta}$, $\|\Delta_t \| \leq \bar{\Delta} $ with probability $1-\delta$. Notice that the definition of $\bar{\Delta}$ still includes the same terms given in equation (54) of \citet{lale2020regret} but $\beta_A, \beta_B, \beta_C$ is replaced with $\beta_A(\Tw), \beta_B(\Tw), \beta_C(\Tw)$ and $\Delta L$ is replaced by $2\beta_L(\Tw)$ due to new estimation method, \textit{i.e.},
\begin{align*}
    \bar{\Delta} &= 10\left(\frac{\bar{\kappa} }{1-\sigma} + \frac{\bar{\beta} \bar{\xi} }{(1-\sigma)^2} \right)\left(\| C\| \|\Sigma\|^{1/2} + \sigma_z\right) \sqrt{2m\log(2mT/\delta)} 
\end{align*}
for $\bar{\kappa} = 2\Phi(A)\beta_L(\Tw) + 2\zeta (\beta_A(\Tw) + \Gamma \beta_B(\Tw))$, $\bar{\beta} = 2\zeta \beta_C(\Tw) (\Phi(A) + 2(\beta_A(\Tw) + \Gamma \beta_B(\Tw))) + 2(\beta_A(\Tw) + \Gamma \beta_B(\Tw)) $ and $\bar{\xi} = \zeta (\rho + 2(\beta_A(\Tw) + \Gamma \beta_B(\Tw))) + 2\|B\|\Gamma \beta_L(\Tw) $. Thus, we get 
\begin{align}
\| \hat{x}_{t|t,\ttt}\| &= \left \| \sum_{i=1}^t \mathbf{N}^{t-i} \left( \tl_i C(x_{i-1} - \hat{x}_{i|i-1,\Theta}) + \tl_i C (\hat{x}_{i|i-1,\Theta} - \hat{x}_{i|i-1,\ttt}) + \tl_i z_i \right) \right \| \\
&\leq \max_{1\leq i\leq t}\left\| \tl_i C(x_{i-1} - \hat{x}_{i|i-1,\Theta}) + \tl_i C (\hat{x}_{i|i-1,\Theta} - \hat{x}_{i|i-1,\ttt}) + \tl_i z_i   \right \| \left( \sum_{i=1}^t \|\mathbf{M}\|^{t-i}  \right) \\
&\leq \frac{2}{1-\rho} \max_{1\leq i\leq t}\left\| \tl_i C(x_{i-1} - \hat{x}_{i|i-1,\Theta}) + \tl_i C (\hat{x}_{i|i-1,\Theta} - \hat{x}_{i|i-1,\ttt}) + \tl_i z_i   \right \| \\
&\leq \tilde{\mathcal{X}} \coloneqq \frac{2\zeta \left(\|C\| \bar{\Delta} + \left(\| C\| \|\Sigma\|^{1/2} + \sigma_z\right) \sqrt{2n\log(2nT/\delta)}\right) }{1-\rho}.
\end{align}
with probability $1-2\delta$. For $y_t$, we have the following decomposition,
\begin{align*}
    y_t &= C \hat{x}_{t|t-1,\ttt} + C(x_t - \hat{x}_{t|t-1,\ttt}) + z_{t} \\
    &= C \hat{x}_{t|t-1,\ttt} + C(x_t - \hat{x}_{t|t-1,\Theta}) +C (\hat{x}_{t|t-1,\Theta} -  \hat{x}_{t|t-1,\ttt}) + z_{t} \\
    &= C(\ta_{t-1} - \tb_{t-1} \tk_{t-1}) \hat{x}_{t-1|t-1,\ttt} + C(x_t - \hat{x}_{t|t-1,\Theta}) +C (\hat{x}_{t|t-1,\Theta} -  \hat{x}_{t|t-1,\ttt}) + z_{t}
\end{align*}
Using similar analysis with $\hat{x}_{t|t,\ttt}$, we get the following bound for $y_t$ for all $t\geq \Tw$: 
\begin{align*}
    \|y_t\| \leq \rho \|C\| \tilde{\mathcal{X}} + (\| C\| \|\Sigma\|^{1/2} + \sigma_z) \sqrt{2m\log(2mT/\delta)} + \|C\| \bar{\Delta}     
\end{align*}
with probability $1-2\delta$. Thus, all three statements of Lemma \ref{Boundedness} hold with probability at least $1-3\delta$. 
\null\hfill$\square$
\newpage
\section{Regret Decomposition}
\label{SupSec:RegDecomp}

Recall the following lemma from \citep{lale2020regret} on the Bellman optimality equation for \LQG: 

\begin{lemma}[Bellman Optimality Equation for \LQG \citep{lale2020regret}] \label{LQGBellman}
Given state estimation $\hat{x}_{t|t-1} \in \R^{n}$ and an observation $y_t \in \R^{m}$ pair at time $t$, Bellman optimality equation of average cost per stage control of \LQG system $\Theta = (A,B,C)$ with regulating parameters $Q$ and $R$ is 
\begin{align}
    &J_*(\Theta)  +  \hat{x}_{t|t}^\top  \left( P  -  C^\top Q C\right)  \hat{x}_{t|t}  +  y_t^\top Q y_t =  \min_u   \bigg\{ y_t^\top Q y_t + u^\top   R u \label{bellman} \\
    &\qquad \qquad \qquad \qquad \qquad  \qquad \qquad \qquad +  \mathbb{E}\bigg[ \hat{x}_{t+1|t+1}^{u \top}  \left( P  -   C^\top Q C  \right)  \hat{x}_{t+1|t+1}^{u}    +  y_{t+1}^{u \top} Q y_{t+1}^u  \bigg] \nonumber
    \bigg\}
\end{align}
where $P$ is the unique solution to DARE of $\Theta$, $\hat{x}_{t|t}= (I-LC)\hat{x}_{t|t-1} + L y_t$, $y_{t+1}^u = C(Ax_t + Bu + w_t) + z_{t+1}$, and $\hat{x}_{t+1|t+1}^{u} = \left( I - LC\right)(A\hat{x}_{t|t} + Bu) + Ly_{t+1}^u$.
The equality is achieved by the optimal controller of $\Theta$.
\end{lemma}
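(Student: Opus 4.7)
The plan is to verify the Bellman equation by direct substitution, using the standard Kalman filter identities and the separation principle. The key observation is that, conditional on $\hat{x}_{t|t}$ and a choice $u$, the innovation
\[
\xi_{t+1} := y_{t+1}^u - CA\hat{x}_{t|t} - CBu = CA(x_t - \hat{x}_{t|t}) + Cw_t + z_{t+1}
\]
is zero-mean, independent of the history, and has steady-state covariance $C\Sigma C^\top + \sigma_z^2 I$. First I would plug this into the stated recursion for $\hat{x}_{t+1|t+1}^u$ and check, using $L = \Sigma C^\top(C\Sigma C^\top+\sigma_z^2 I)^{-1}$, that the $LC$ terms cancel to give the clean form $\hat{x}_{t+1|t+1}^u = A\hat{x}_{t|t} + Bu + L\xi_{t+1}$.

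Next I would expand the expectation on the right-hand side. Because $\xi_{t+1}$ is zero-mean and independent of $(A\hat{x}_{t|t}+Bu)$, all cross terms vanish. The crucial algebraic step is that the $+C^\top Q C$ contributed by $y_{t+1}^{u\top}Qy_{t+1}^u$ (through the $CA\hat{x}_{t|t}+CBu$ piece) exactly cancels the $-C^\top Q C$ sitting inside $\hat{x}_{t+1|t+1}^{u\top}(P-C^\top Q C)\hat{x}_{t+1|t+1}^u$. What survives is the LQR-style quadratic $(A\hat{x}_{t|t}+Bu)^\top P (A\hat{x}_{t|t}+Bu)$ plus a $u$-independent scalar $\kappa := \mathbb{E}[\xi_{t+1}^\top(L^\top(P-C^\top QC)L + Q)\xi_{t+1}]$, so the inner optimization collapses to a pure LQR Bellman step.

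I would then carry out that LQR minimization: setting the gradient in $u$ to zero yields $u^\star = -(R+B^\top PB)^{-1}B^\top PA\,\hat{x}_{t|t} = -K\hat{x}_{t|t}$, which is precisely the LQG optimal action dictated by the separation principle. Substituting $u^\star$ back and invoking the DARE \eqref{DARE1} converts $A^\top PA - A^\top PB(R+B^\top PB)^{-1}B^\top PA$ into $P - C^\top Q C$, so the minimum inside the braces equals $\hat{x}_{t|t}^\top(P-C^\top Q C)\hat{x}_{t|t} + \kappa$. Adding the $y_t^\top Q y_t$ on both sides matches the statement of the lemma up to the identification $\kappa = J_*(\Theta)$, and since the minimizer was attained explicitly, the ``min'' is indeed achieved by the optimal controller.

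The remaining step, which I expect to demand the most care in bookkeeping, is the identification $\kappa = J_*(\Theta)$. I would take expectations of the just-verified identity along the stationary trajectory generated by $u_t = -K\hat{x}_{t|t}$. Under Assumption~\ref{Stabilizable set} we have $\|A-BK\|<1$ and $\|A-ALC\|<1$, so the closed-loop process admits a unique stationary distribution and the time averages of $\hat{x}_{t|t}^\top(P-C^\top Q C)\hat{x}_{t|t}$ and $y_t^\top Q y_t$ converge. Summing the identity over $t=1,\dots,T$ and dividing by $T$ causes the quadratic contributions to telescope in the Cesàro sense, leaving $\kappa$ equal to the long-run average per-stage cost incurred by the optimal controller, which is the definition of $J_*(\Theta)$.
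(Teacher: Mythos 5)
Your argument is correct, and it is the standard completion-of-squares/verification-theorem proof; note that this paper does not actually prove the lemma but imports it verbatim from Lemma 4.3 of \citet{lale2020regret}, so there is no in-paper proof to diverge from. Each of your steps checks out: the recursion $\hat{x}_{t+1|t+1}^u = A\hat{x}_{t|t}+Bu+L\xi_{t+1}$ follows immediately from substituting $y_{t+1}^u = CA\hat{x}_{t|t}+CBu+\xi_{t+1}$ into the stated filter update, the conditional law of $\xi_{t+1}=CA(x_t-\hat{x}_{t|t})+Cw_t+z_{t+1}$ given $(\hat{x}_{t|t-1},y_t)$ is indeed zero-mean Gaussian with covariance $C\Sigma C^\top+\sigma_z^2I$ at steady state, the $\pm C^\top QC$ cancellation reduces the inner problem to a one-step LQR, and the DARE (\ref{DARE1}) collapses the minimized value to $\hat{x}_{t|t}^\top(P-C^\top QC)\hat{x}_{t|t}+\kappa$. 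The only place that deserves one more sentence is the final identification $\kappa=J_*(\Theta)$: your Ces\`{a}ro-telescoping argument shows $\kappa$ equals the long-run average cost of the \emph{candidate} controller $u_t=-K\hat{x}_{t|t}$, which equals $J_*(\Theta)$ only once that controller is known to be optimal. This is closed either by invoking the separation principle as a known fact (as you do), or, self-containedly, by applying the same telescoping to the inequality obtained by replacing the minimum with an arbitrary admissible $u_t$, which lower-bounds the average cost of every policy with bounded second moments by $\kappa$ and thereby yields both $\kappa=J_*(\Theta)$ and the claimed attainment of the minimum.
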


Using Lemma \ref{LQGBellman} for the optimistic system at time $t$, we derive the instantaneous regret decomposition at time $t$ with the following expressions:

\begin{align}
    \hat{x}_{t|t,\ttt_t} &= \left( I - \tl_t\tc_t \right)\hat{x}_{t|t-1} + \tl_t y_t \label{firstdef} \\
    y_{t+1,\ttt_t} &= \tc_t \left( \ta_t - \tb_t \tk_t \right) \hat{x}_{t|t,\ttt_t} + \tc_t \ta_t \left( x_t - \hat{x}_{t|t,\ttt_t} \right) + \tc_t w_t + z_{t+1} \\
    \hat{x}_{t+1|t+1,\ttt_t} &= \left( \ta_t - \tb_t \tk_t \right) \hat{x}_{t|t,\ttt} + \tl_t \tc_t \ta_t \left( x_t - \hat{x}_{t|t,\ttt} \right) + \tl_t \tc_t w_t + \tl_t z_{t+1} \\
    y_{t+1, \Theta} &= CA \hat{x}_{t|t,\ttt} - CB\tk_t \hat{x}_{t|t,\ttt} + C w_t + CA(x_t - \hat{x}_{t|t,\ttt}) + z_{t+1} \\
    \hat{x}_{t+1|t+1,\Theta} &= (I -LC)(A\hat{x}_{t|t,\Theta} - B\tk_t\hat{x}_{t|t,\ttt}) + L y_{t+1, \Theta} \\
    &= (I-LC)(A-B\tk_t)\hat{x}_{t|t,\ttt} + (I-LC)A(\hat{x}_{t|t,\Theta} - \hat{x}_{t|t,\ttt}) + L y_{t+1, \Theta} \\
    &= (I-LC)(A-B\tk_t)\hat{x}_{t|t,\ttt} + LC(A-B\tk)\hat{x}_{t|t,\ttt} + LC w_t + LCA(x_t - \hat{x}_{t|t,\ttt}) \nonumber \\
    \qquad \qquad & \qquad + (I-LC)A(\hat{x}_{t|t,\Theta} - \hat{x}_{t|t,\ttt}) + Lz_{t+1} \\
    &= (A\!-\!B\tk_t)\hat{x}_{t|t,\ttt} \!+\! LCw_t \!+\! LCA(x_t \!-\! \hat{x}_{t|t,\ttt_t}) \!+\! (I\!-\!LC)A(\hat{x}_{t|t,\Theta} \!-\! \hat{x}_{t|t,\ttt_t}) \!+\! Lz_{t+1}. \label{lastdef}
\end{align}

Note that these expressions are the time varying counterparts for the same expressions in \citet{lale2020regret}. Thus, the regret decomposition is similar to the regret decomposition derived in \citet{lale2020regret}, but with some changes. Since we are updating the optimistic choices during the adaptive control each regret term is written using the expressions given (\ref{firstdef})-(\ref{lastdef}). This brings the only significant change in term $R_1$ in the regret decomposition of \citet{lale2020regret}. In order to analyze the effect of policy changes and obtain a similar analysis for $R_1$, we obtain these two terms: 
\begin{align*}
     &R_1 = \sum_{t=1}^{T} \left \{ \hat{x}_{t|t,\tth}^\top \left( \tp_t - \tc_t^\top Q \tc_t \right) \hat{x}_{t|t,\tth} \!-\! \mathbb{E}\left[ \hat{x}_{t+1|t+1,\Theta}^\top \left( \tp_{t+1} - \tc_{t+1}^\top Q \tc_{t+1}  \right) \hat{x}_{t+1|t+1,\Theta} \Big | \hat{x}_{t|t-1}, y_t, u_t \right]  \right \} \\
     &R_{\text{update}}= \sum_{t=1}^{T}  \mathbb{E}\left[ \hat{x}_{t+1|t+1,\Theta}^\top \left( (\tp_{t} - \tc_{t}^\top Q \tc_{t}) - (\tp_{t+1} - \tc_{t+1}^\top Q \tc_{t+1})  \right) \hat{x}_{t+1|t+1,\Theta} \Big | \hat{x}_{t|t-1}, y_t, u_t \right]
\end{align*}
Therefore, due to Lemma \ref{Boundedness}, the overall regret decomposition can be represented as 
\begin{align}
    \sum_{t=1}^{T} \left( y_t^\top Q y_t \!+\! u_t^\top R u_t \right) \!&=\!\! \sum_{t=1}^{T}  J_*(\tth) \!+\! R_1 \!+\! R_2 \!-\! R_3 \!-\! R_4 \!-\! R_5 \!-\! R_6 \!-\! R_7 \!-\! R_8 \!-\! R_9 \!-\! R_{10} \!-\! R_{11} - R_{\text{update}} \nonumber \\
    &\leq T J_*(\Theta) \!+\! R_1 \!+\! R_2 \!-\! R_3 \!-\! R_4 \!-\! R_5 \!-\! R_6 \!-\! R_7 \!-\! R_8 \!-\! R_9 \!-\! R_{10} \!-\! R_{11} \!-\! R_{\text{update}} \label{useoptimism}
\end{align}
for 
\begin{align*}
    R_2& \!=\!\! \sum_{t=1}^{T} \left \{ y_t^\top Q y_t - 
    \mathbb{E}\left[ y_{t+1,\Theta}^\top Q y_{t+1,\Theta} \Big | \hat{x}_{t|t-1}, y_t, u_t \right] \right \}, \\
    R_3& \!=\!\! \sum_{t=1}^{T} \left \{ \hat{x}_{t|t,\tth}^\top (\ta_t - \tb_t \tk_t )^\top \tc_t^\top Q \tc_t (\ta_t - \tb_t \tk_t) \hat{x}_{t|t,\tth} - \hat{x}_{t|t,\tth}^\top (A-B\tk_t)^\top C^\top Q C (A-B\tk_t) \hat{x}_{t|t,\tth} \right \}, \\
    R_4& \!=\!\! \sum_{t=1}^{T} \! \left\{ \hat{x}_{t|t,\tth}^\top (\ta_t \!-\! \tb_t \tk_t )^\top (\tp_t \!-\! \tc_t^\top Q \tc_t) (\ta_t \!-\! \tb_t \tk_t) \hat{x}_{t|t,\tth} \!-\! \hat{x}_{t|t,\tth}^\top (A\!-\!B\tk_t)^\top (\tp_t - \tc_t^\top Q \tc_t) (A\!-\!B\tk_t) \hat{x}_{t|t,\tth}
     \right \}, \\
    R_5& \!=\! -\!\! \sum_{t=1}^{T} \left \{ 2\hat{x}_{t|t,\tth}^\top (A-B\tk_t)^\top (\tp_t - \tc_t^\top Q \tc_t) (I-LC)A(\hat{x}_{t|t,\Theta} -\hat{x}_{t|t,\tth} ) \right \}, \\
    R_6& \!=\! -\!\! \sum_{t=1}^{T} \left \{ (\hat{x}_{t|t,\Theta} -\hat{x}_{t|t,\tth} )^\top A^\top (I-LC)^\top (\tp_t - \tc_t^\top Q \tc_t) (I-LC)A(\hat{x}_{t|t,\Theta} -\hat{x}_{t|t,\tth} ) \right \}, \\
    R_7& \!=\!\! \sum_{t=1}^{T} \left\{ \mathbb{E} \left[ w_t^\top \tc_t^\top Q \tc_t w_t   \right] - \mathbb{E}\left[w_t^\top C^\top Q C w_t \right] \right\}, \\
    R_8& \!=\!\! \sum_{t=1}^{T} \left \{ \mathbb{E}  \left[ w_t^\top \tc_t^\top \tl_t^\top  \left( \tp_t - \tc_t^\top Q \tc_t  \right) \tl_t \tc_t w_t   \right]   - \mathbb{E}\left[w_t^\top C^\top L^\top \left( \tp_t - \tc_t^\top Q \tc_t  \right) L C  w_t  \right] \right \}, \\
    R_9& \!=\!\! \sum_{t=1}^{T} \bigg \{ \mathbb{E}  \bigg[ \left(x_t - \hat{x}_{t|t,\tth}\right)^\top \ta_t^\top \tc_t^\top Q \tc_t \ta_t \left(x_t - \hat{x}_{t|t,\tth}\right) \Big | \hat{x}_{t|t-1}, y_t   \bigg] \\ &\qquad \qquad-  \mathbb{E}\left[\left(x_{t}-\hat{x}_{t|t,\tth}\right)^\top A^\top C^\top Q C A \left(x_{t}-\hat{x}_{t|t,\tth}\right)  \Big | \hat{x}_{t|t-1}, y_t \right] \bigg \}, \\
    R_{10}& \!=\!\! \sum_{t=1}^{T} \bigg \{ \mathbb{E} \bigg[ \left(x_t - \hat{x}_{t|t,\tth}\right)^\top \ta_t^\top \tc_t^\top \tl_t^\top  \left( \tp_t - \tc_t^\top Q \tc_t  \right) \tl_t\tc_t\ta_t \left(x_t - \hat{x}_{t|t,\tth}\right) \Big | \hat{x}_{t|t-1}, y_t   \bigg] \\
    &\qquad \qquad - \mathbb{E}\left[\left(x_{t}-\hat{x}_{t|t,\tth}\right)^\top A^\top C^\top L^\top \left( \tp_t - \tc_t^\top Q \tc_t  \right) L C A \left(x_{t}-\hat{x}_{t|t,\tth}\right)  \Big | \hat{x}_{t|t-1}, y_t \right] \bigg \}, \\
    R_{11}& \!=\!\! \sum_{t=1}^{T} \bigg \{ 2 \mathbb{E}\left[z_{t+1}^\top L^\top\!\! \left( \tp_t \!-\! \tc_t^\top Q \tc_t  \right) (\tl_t \!-\! L) z_{t+1} \right] \!+\!\mathbb{E}\left[z_{t+1}^\top (\tl_t \!-\! L)^\top \!\! \left( \tp_t \!-\! \tc_t^\top Q \tc_t  \right) (\tl_t \!-\! L) z_{t+1}  \right]  \bigg \}, 
\end{align*}
where (\ref{useoptimism}) follows due to optimistic choice of system parameters. This gives us the following regret decomposition for the adaptive control period of \Alg: 
\begin{equation}
    \reg(T) \leq R_1 + R_2 - R_3 - R_4 - R_5 - R_6 - R_7 - R_8 - R_9 - R_{10} \!-\! R_{11} - R_{\text{update}}.
\end{equation}

\section{Regret Analysis, Proof of Theorem \ref{adaptive control regret}}
\label{SupSec:RegAnaly}
Notice that $R_1 - R_{11}$ given above have the same properties of $R_1 - R_{11}$ of \citet{lale2020regret}. The only difference is that during the adaptive control period of \Alg, the agent updates its estimate of the underlying system using the doubling trick mentioned in the main text and in Algorithm \ref{algo}. Therefore, with probability at least $1-5\delta$, the regret of each term has the following structure, 
\begin{align*}
    R_{i} = \tilde{\OO} \left( \frac{\Tw}{\sqrt{\Tw}} + \frac{2\Tw}{\sqrt{2\Tw}} + \frac{4\Tw}{\sqrt{4\Tw}} + \ldots \right)
\end{align*}
for $i=1, 3 \ldots, 11$. Since $R_2 = \tilde{\OO}(\sqrt{T-\Tw})$ and using Lemma \ref{doublingtrick}, we get that $R_i = \tilde{\OO} \left(\sqrt{T}\right)$ for $i=1, \ldots, 11$ with probability at least $1-5\delta$. Notice that there are $\log(T)$ policy changes, \textit{i.e.} there are $\log(T)$ terms in the summation of  $R_{\text{update}}$. Each term is bounded by $2\left(D\!+\!\| Q\| \left( \| C\|\!+\!\Delta C \right)^2  \right) \tilde{\mathcal{X}}^2 $. Thus, we have $|R_2| \leq 2\left(D\!+\!\| Q\| \left( \| C\|\!+\!\Delta C \right)^2  \right)\tilde{\mathcal{X}}^2 \log(T) = \OO (\log(T))$. Combining all, we conclude that during the adaptive control period of \Alg $\reg(T) = \tilde{\OO} \left( \sqrt{T} \right)$. 

\null\hfill$\square$

\section{System Identification with Non-Steady State Initial Point}\label{SupSec:NonSteady}
\begin{align}
    x_{t+1}&=\Bar{A_t} x_t+B u_t+A L_t y_t \nonumber \\
    y_{t}&=C x_t+e_t .
\end{align}
where $\Bar{A_t} = A - A L_t C$. If the system is at steady state, \textit{i.e.} $L_t = L = \sig  C^\top \left( C \sig  C^\top + \sigma_z^2 I \right)^{-1} $. Since the system is stable, the dynamics of the system approaches exponentially fast to the steady state dynamics. Therefore, starting at $x_0 = 0$ and with a long enough burning period such that $\|F_t - F\| = O\left(\frac{1}{\text{poly}(T)}\right)$, starting from arbitrary point will provide additional bias term in the estimation which decays over time: 
\begin{align}
  y_{H} &= \mathbf{M} \phi_{H} + e_{H} + (\mathbf{M_{H}} - \mathbf{M})\phi_{H} \nonumber \\
  y_{H+1} &= \mathbf{M} \phi_{H+1} + e_{H+1} + (\mathbf{M_{H+1}} - \mathbf{M})\phi_{H+1} + C \left( \prod_{i=1}^{H} \Bar{A}_{H+1-i} \right) x_1 \nonumber \\
  \vdots \nonumber \\
  y_{t} &= \mathbf{M} \phi_{t} + e_{t} + (\mathbf{M_{t}} - \mathbf{M})\phi_{t} + C \left( \prod_{i=1}^{H} \Bar{A}_{t-i} \right) x_{t-H} \nonumber
\end{align}
where 
\[
\mathbf{M} = \left[CF, \enskip C\Bar{A}F, \enskip \ldots, \enskip C \Bar{A}^{H-1} F, \enskip CB, \enskip C\Bar{A}B, \enskip \ldots, \enskip C\Bar{A}^{H-1} B \right] \in \mathbb{R}^{m \times (m+p)H}
\]
\[
\mathbf{M_t} \!=\! \left[CF_{t-1}, \enskip C\Bar{A}_{t-1}F_{t-2}, \enskip \ldots, \enskip C \left( \prod_{i=1}^{H-1} \Bar{A}_{t-i} \right) F_{t-H},\enskip CB,\enskip C\Bar{A}_{t-1}B,\enskip \ldots,\enskip C\left( \prod_{i=1}^{H-1} \Bar{A}_{t-i} \right) B \right]
\]
\[
\phi_t = \left[ y_{t-1}^\top \ldots y_{t-H}^\top \enskip u_{t-1}^\top \ldots \enskip u_{t-H}^\top \right]^\top \in \mathbb{R}^{(m+p)H}
\] 

Note that for any $t$, $\mathbf{M_{t}} - \mathbf{M}$ represents the model mismatch from the steady-state model parameters and the parameters of the evolving system. The noise terms are zero-mean including the effect of initial state since we assume that $x_0 = 0$. The model mismatch combined with the upper bound on $\phi_t$ can be used to define the additional bias in the estimation. Notice that this bias will decrease over time since the system approaches exponentially fast to the steady state dynamics. We leave the exact analysis to future work.  

\section{Technical Lemmas and Theorems}
\label{Technical}

\begin{theorem}[Matrix Azuma \citep{tropp2012user}]\label{azuma} Consider a finite adapted sequence $\left\{\boldsymbol{X}_{k}\right\}$ of self-adjoint matrices
in dimension $d,$ and a fixed sequence $\left\{\boldsymbol{A}_{k}\right\}$ of self-adjoint matrices that satisfy
\begin{equation*}
\mathbb{E}_{k-1} \boldsymbol{X}_{k}=\mathbf{0} \text { and } \boldsymbol{A}_{k}^{2} \succeq \boldsymbol{X}_{k}^{2}  \text { almost surely. }
\end{equation*}
Compute the variance parameter
\begin{equation*}
\sigma^{2}\coloneqq \left\|\sum_{k} \boldsymbol{A}_{k}^{2} \right\|
\end{equation*}
Then, for all $t \geq 0$
\begin{equation*}
\mathbb{P}\left\{\lambda_{\max }\left(\sum_{k} \boldsymbol{X}_{k} \right) \geq t\right\} \leq d \cdot \mathrm{e}^{-t^{2} / 8 \sigma^{2}}
\end{equation*}

\end{theorem}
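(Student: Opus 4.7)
The plan is to use the matrix Laplace transform method and reduce the tail bound to controlling a trace moment generating function. First I would apply the matrix Markov inequality: for any $\theta > 0$,
\[
\mathbb{P}\bigl\{\lambda_{\max}(\textstyle\sum_k X_k) \geq t\bigr\} \leq e^{-\theta t}\, \mathbb{E}\,\mathrm{tr}\exp\bigl(\theta \textstyle\sum_k X_k\bigr),
\]
so the core task becomes bounding the expected trace exponential of the partial sum and then optimizing over $\theta > 0$.

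The second step is to peel the summands out of the trace exponential one at a time, using the adapted structure of $\{X_k\}$ together with Lieb's concavity theorem, which says that for every fixed self-adjoint $H$, the map $A \mapsto \mathrm{tr}\exp(H + \log A)$ is concave on the positive-definite cone. Applied with $A = \mathbb{E}_{k-1}[e^{\theta X_k}]$ and Jensen's inequality inside the conditional expectation, this yields
\[
\mathbb{E}_{k-1}\,\mathrm{tr}\exp\bigl(H_{k-1} + \theta X_k\bigr) \leq \mathrm{tr}\exp\bigl(H_{k-1} + \log \mathbb{E}_{k-1}[e^{\theta X_k}]\bigr),
\]
where $H_{k-1} = \theta \sum_{j<k} X_j$ collects the earlier summands. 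Iterating from $k=n$ down to $k=1$ via the tower property converts the stochastic quantity into a deterministic semidefinite bound:
\[
\mathbb{E}\,\mathrm{tr}\exp\bigl(\theta \textstyle\sum_{k=1}^n X_k\bigr) \leq \mathrm{tr}\exp\Bigl(\textstyle\sum_{k=1}^n \log \mathbb{E}_{k-1}[e^{\theta X_k}]\Bigr).
\]

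The third and hardest ingredient is a matrix Hoeffding-type cgf bound: under $\mathbb{E}_{k-1}[X_k] = 0$ and $X_k^2 \preceq A_k^2$ almost surely, I would establish $\log \mathbb{E}_{k-1}[e^{\theta X_k}] \preceq g(\theta)\,A_k^2$ in the semidefinite order, for an explicit scalar $g(\theta)=O(\theta^2)$. The cleanest route is a symmetrization step: letting $X_k'$ be an independent copy conditional on $\mathcal{F}_{k-1}$ and $\epsilon$ an independent Rademacher sign, Jensen yields $\mathbb{E}_{k-1}[e^{\theta X_k}] \preceq \mathbb{E}_{k-1}[e^{\theta \epsilon(X_k - X_k')}]$, after which the matrix lift of the scalar inequality $\cosh(x) \leq e^{cx^2}$, derived via operator convexity of $\exp$, controls the symmetrized object in terms of $A_k^2$. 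This is the main obstacle, because the usual scalar Hoeffding trick does not survive noncommutativity; Lieb's theorem and operator monotonicity are what make the semidefinite bound compose cleanly across $k$. Combining everything gives $\mathrm{tr}\exp\bigl(\sum_k g(\theta) A_k^2\bigr) \leq d\,\exp\bigl(g(\theta)\,\lambda_{\max}(\sum_k A_k^2)\bigr) = d\,e^{g(\theta)\sigma^2}$, since $\mathrm{tr}\exp(M) \leq d\,e^{\lambda_{\max}(M)}$. Feeding this into the Chernoff step yields $d\,\exp(-\theta t + g(\theta)\sigma^2)$, and optimizing $\theta > 0$ with the constant $g(\theta) = 2\theta^2$ tracked through the Hoeffding step delivers the stated tail $d\,e^{-t^2/(8\sigma^2)}$.
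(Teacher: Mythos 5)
The paper does not actually prove this statement: it is quoted verbatim from \citet{tropp2012user} in the appendix of technical tools, with only the citation as justification, so there is no in-paper argument to compare yours against. On its own merits, your outline is in substance Tropp's proof of the matrix Azuma inequality --- Laplace transform method, Lieb-based peeling of the adapted summands, and a Hoeffding-type conditional cgf bound $\log \mathbb{E}_{k-1}[e^{\theta X_k}] \preceq 2\theta^2 A_k^2$ --- and your bookkeeping is right: $g(\theta) = 2\theta^2$ optimized against $e^{-\theta t}$ gives exactly $d\, e^{-t^2/(8\sigma^2)}$.

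Two places in your sketch need more care than you give them, and both are precisely where the semidefinite order refuses to behave like the scalar order. First, the displayed iteration $\mathbb{E}\,\mathrm{tr}\exp(\theta\sum_k X_k) \leq \mathrm{tr}\exp\bigl(\sum_k \log\mathbb{E}_{k-1}[e^{\theta X_k}]\bigr)$ is not meaningful as written, because the conditional cgfs are random ($\mathcal{F}_{k-1}$-measurable) matrices sitting inside a deterministic trace exponential with no expectation; the correct iteration interleaves the Lieb/Jensen peel with the deterministic domination $\log\mathbb{E}_{k-1}[e^{\theta X_k}] \preceq 2\theta^2 A_k^2$ at every step, using monotonicity of $M \mapsto \mathrm{tr}\exp(H+M)$, so that the remaining quantity is measurable one level down before the next peel. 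This is exactly why the theorem insists the dominating sequence $\{A_k\}$ be fixed. Second, your cgf step invokes ``Jensen'' for $\mathbb{E}[e^{\theta X}] \preceq \mathbb{E}[e^{\theta(X-X')}]$ and ``operator convexity of $\exp$'' for the $\cosh$ bound: the exponential is neither operator convex nor operator monotone, so neither claim is available off the shelf. The lift of $\cosh(x) \leq e^{x^2/2}$ to matrices is fine, but it comes from the spectral transfer rule (both sides are functions of the \emph{same} matrix $\theta(X-X')$), not from operator convexity; whereas the symmetrization inequality itself, and the final passage from $(X-X')^2 \preceq 4A^2$ to a bound by $e^{2\theta^2 A^2}$, are the genuinely delicate steps of Tropp's Lemma 7.2 and need their own arguments. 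The architecture is the right one; these are the two steps you would actually have to fight to make rigorous.
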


\begin{theorem}[Self-normalized bound for vector-valued martingales~\citep{abbasi2011improved}]
\label{selfnormalized}
Let $\left(\mathcal{F}_{t} ; k \geq\right.$
$0)$ be a filtration, $\left(m_{k} ; k \geq 0\right)$ be an $\mathbb{R}^{d}$-valued stochastic process adapted to $\left(\mathcal{F}_{k}\right),\left(\eta_{k} ; k \geq 1\right)$
be a real-valued martingale difference process adapted to $\left(\mathcal{F}_{k}\right) .$ Assume that $\eta_{k}$ is conditionally sub-Gaussian with constant $R$. Consider the martingale
\begin{equation*}
S_{t}=\sum_{k=1}^{t} \eta_{k} m_{k-1}
\end{equation*}
and the matrix-valued processes
\begin{equation*}
V_{t}=\sum_{k=1}^{t} m_{k-1} m_{k-1}^{\top}, \quad \overline{V}_{t}=V+V_{t}, \quad t \geq 0
\end{equation*}
Then for any $0<\delta<1$, with probability $1-\delta$
\begin{equation*}
\forall t \geq 0, \quad\left\|S_{t}\right\|^2_{V_{t}^{-1}} \leq 2 R^{2} \log \left(\frac{\operatorname{det}\left(\overline{V}_{t}\right)^{1 / 2} \operatorname{det}(V)^{-1 / 2}}{\delta}\right)
\end{equation*}

\end{theorem}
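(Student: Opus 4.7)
The plan is to prove this via the \emph{method of mixtures} (pseudo-maximization), which is the canonical device for self-normalized tail bounds of this type. The engine is an exponential supermartingale: for each fixed $\lambda \in \mathbb{R}^d$, set
$M_t^\lambda = \exp\!\left(\lambda^\top S_t - \tfrac{R^2}{2}\, \lambda^\top V_t \lambda\right)$, with $M_0^\lambda = 1$. Conditional $R$-sub-Gaussianity of $\eta_k$ together with $\mathcal{F}_{k-1}$-measurability of $m_{k-1}$ yields $\mathbb{E}[\exp(\lambda^\top m_{k-1} \eta_k) \mid \mathcal{F}_{k-1}] \leq \exp(\tfrac{R^2}{2}\lambda^\top m_{k-1} m_{k-1}^\top \lambda)$, and hence $\mathbb{E}[M_t^\lambda \mid \mathcal{F}_{t-1}] \leq M_{t-1}^\lambda$. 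So each $M^\lambda$ is a non-negative supermartingale with $\mathbb{E}[M_t^\lambda] \leq 1$.

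The next step is to mix out $\lambda$ so a single high-probability event controls all directions simultaneously, avoiding any union bound. I would place the Gaussian prior $\pi(d\lambda) \propto \exp(-\tfrac{R^2}{2}\lambda^\top V \lambda)\,d\lambda$ and define $\bar{M}_t = \int_{\mathbb{R}^d} M_t^\lambda \,\pi(d\lambda)$. By Tonelli (the integrand is non-negative), the mixture $\bar{M}_t$ inherits the supermartingale property with $\mathbb{E}[\bar{M}_0]=1$. Combining the $\lambda$-dependent parts of $M_t^\lambda$ and $\pi$ gives an exponential quadratic $\exp(\lambda^\top S_t - \tfrac{R^2}{2}\lambda^\top \bar{V}_t \lambda)$, which is a shifted Gaussian in $\lambda$; completing the square and carrying out the $d$-dimensional Gaussian integral produces the closed form
$\bar{M}_t = \left(\tfrac{\det V}{\det \bar{V}_t}\right)^{1/2} \exp\!\left(\tfrac{1}{2R^2}\, S_t^\top \bar{V}_t^{-1} S_t\right)$.

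Finally I would invoke Ville's maximal inequality for non-negative supermartingales: $\Prob\!\left(\sup_{t \geq 0} \bar{M}_t \geq 1/\delta\right) \leq \delta$. On the complementary event, $\bar{M}_t < 1/\delta$ for every $t$; taking logarithms and rearranging gives the advertised inequality $\|S_t\|^2_{\bar{V}_t^{-1}} \leq 2R^2 \log\!\left(\det(\bar{V}_t)^{1/2} \det(V)^{-1/2}/\delta\right)$ uniformly in $t$. The main technical obstacle is the careful bookkeeping in the Gaussian integral step: the factors $\det(V)^{-1/2}$ and $\det(\bar{V}_t)^{1/2}$ in the logarithm appear precisely as the ratio between the prior's normalizing constant and the integrated density, and getting the powers of $R$ right in the exponent requires one to track whether scaling is absorbed into $\pi$ or into $M^\lambda$. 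A secondary subtlety is that the method of mixtures naturally produces the $\bar{V}_t^{-1}$ norm on the left (which matches the $\det(\bar{V}_t)$ on the right); the $V_t^{-1}$ in the theorem statement is to be read in this regularized sense, with $V$ playing the role of the prior precision. No macro outside those already defined in the paper is used.
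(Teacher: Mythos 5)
The paper does not prove this theorem---it is imported verbatim from \citet{abbasi2011improved}---and your method-of-mixtures argument (exponential supermartingale, Gaussian prior with precision $R^2 V$, completing the square to get $\bar{M}_t = (\det V/\det\overline{V}_t)^{1/2}\exp\bigl(\tfrac{1}{2R^2}\|S_t\|^2_{\overline{V}_t^{-1}}\bigr)$, then Ville's inequality) is exactly the proof given in that reference, and it is correct. You also rightly note that the $V_t^{-1}$ in the displayed bound must be read as $\overline{V}_t^{-1}$ (consistent with the $\det(\overline{V}_t)$ on the right and with $V_t$ possibly being singular), which matches the original statement.
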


\begin{lemma}[Norm of a subgaussian vector \citep{abbasi2011regret}]\label{subgauss lemma}
Let $v\in \R^d$ be a entry-wise $R$-subgaussian random variable. Then with probability $1-\delta$, $\|v\| \leq R\sqrt{2d\log(2d/\delta)}$.
\end{lemma}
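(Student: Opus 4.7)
The plan is to prove the bound via a standard subgaussian tail inequality applied coordinate-wise, followed by a union bound over the $d$ coordinates, and then relate the $\ell_2$-norm to the $\ell_\infty$-norm. Since $v$ is entry-wise $R$-subgaussian, each coordinate $v_i$ satisfies $\mathbb{E}[e^{\lambda v_i}] \leq e^{\lambda^2 R^2/2}$ for all $\lambda \in \mathbb{R}$.

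First, I would apply the classical Chernoff argument to each coordinate to obtain, for any $t>0$,
\begin{equation*}
\Pr(|v_i| \geq t) \leq 2 e^{-t^2/(2R^2)}.
\end{equation*}
Next, I would take a union bound over $i = 1, \dots, d$ to get
\begin{equation*}
\Pr\Bigl(\max_{1\leq i\leq d} |v_i| \geq t\Bigr) \leq 2d\, e^{-t^2/(2R^2)}.
\end{equation*}
Setting the right-hand side equal to $\delta$ and solving yields $t = R\sqrt{2\log(2d/\delta)}$, so with probability at least $1-\delta$ we have $\max_i |v_i| \leq R\sqrt{2\log(2d/\delta)}$ simultaneously for all coordinates.

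Finally, I would convert from the $\ell_\infty$-norm bound to the $\ell_2$-norm bound by the trivial inequality $\|v\|_2^2 = \sum_{i=1}^d v_i^2 \leq d \max_i v_i^2$, which gives $\|v\| \leq \sqrt{d}\cdot R\sqrt{2\log(2d/\delta)} = R\sqrt{2d\log(2d/\delta)}$, as required. There is no real technical obstacle here: the only thing to be slightly careful about is keeping the factor of $2$ inside the logarithm (from the two-sided tail) and matching constants so that the final expression exactly matches the stated bound. An alternative path would be to bound $\|v\|$ directly via a subgaussian tail for $\|v\|_2^2$ using Hanson–Wright, but the coordinate-wise union bound is simpler, self-contained, and yields precisely the stated constant.
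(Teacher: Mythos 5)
Your proof is correct and is the standard argument (coordinate-wise Chernoff tail, union bound over the $d$ entries, then $\|v\|_2 \le \sqrt{d}\,\|v\|_\infty$), with the constants worked out exactly to match the stated bound. The paper does not prove this lemma itself — it only cites \citet{abbasi2011regret} — so there is nothing to diverge from; your derivation is the one the cited source uses as well.
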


\begin{lemma}[Doubling Trick \citep{jaksch2010near}]\label{doublingtrick}
For any sequence of numbers $z_{1}, \ldots, z_{n}$ with $0 \leq z_{k} \leq Z_{k-1} \coloneqq \max \left\{1, \sum_{i=1}^{k-1} z_{i}\right\}$
\begin{equation*}
   \sum_{k=1}^{n} \frac{z_{k}}{\sqrt{Z_{k-1}}} \leq(\sqrt{2}+1) \sqrt{Z_{n}}
\end{equation*}
\end{lemma}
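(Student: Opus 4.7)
The plan is to establish a one-step telescoping inequality and then split the sum according to whether $Z_{k-1}$ still sits at the floor value $1$ enforced by the $\max$. The workhorse is an elementary identity: for $0 < b \leq a \leq 2b$, factoring $a-b=(\sqrt{a}-\sqrt{b})(\sqrt{a}+\sqrt{b})$ gives
\[
\frac{a-b}{\sqrt{b}} \;=\; (\sqrt{a}-\sqrt{b})\left(\sqrt{a/b}+1\right) \;\leq\; (\sqrt{2}+1)(\sqrt{a}-\sqrt{b}),
\]
since $a/b\leq 2$. I would apply this with $a=Z_k$, $b=Z_{k-1}$; the ratio bound $Z_k \leq 2 Z_{k-1}$ follows from the hypothesis $z_k\leq Z_{k-1}$ together with the easy observation $Z_k = \max\{1,S_{k-1}+z_k\} \leq Z_{k-1}+z_k$, where I write $S_k := \sum_{i=1}^k z_i$.

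\textbf{Threshold split.} Let $k^\star := \min\{k : S_k \geq 1\}$ (with $k^\star := n+1$ if no such $k$ exists). For $k \leq k^\star$ the floor is active, so $Z_{k-1}=1$ and $z_k/\sqrt{Z_{k-1}} = z_k$; summing yields $S_{k^\star}$, and since $S_{k^\star-1}<1$ and $z_{k^\star}\leq Z_{k^\star-1}=1$, we get $S_{k^\star} < 2$. For $k > k^\star$ both the numerator and denominator are ``clean'': $Z_{k-1}=S_{k-1}$, $Z_k=S_k$, so $z_k = Z_k - Z_{k-1}$. Applying the workhorse inequality term by term and telescoping gives
\[
\sum_{k=k^\star+1}^{n} \frac{z_k}{\sqrt{Z_{k-1}}} \;\leq\; (\sqrt{2}+1)\bigl(\sqrt{Z_n}-\sqrt{Z_{k^\star}}\bigr).
\]

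\textbf{Combining.} Adding the two contributions and using $Z_{k^\star} = S_{k^\star}$,
\[
\sum_{k=1}^{n} \frac{z_k}{\sqrt{Z_{k-1}}} \;\leq\; (\sqrt{2}+1)\sqrt{Z_n} \;+\; \bigl(S_{k^\star} - (\sqrt{2}+1)\sqrt{S_{k^\star}}\bigr).
\]
The bracket is non-positive whenever $\sqrt{S_{k^\star}} \leq \sqrt{2}+1$, i.e.\ $S_{k^\star} \leq 3+2\sqrt{2}$, which holds since $S_{k^\star} < 2$. The degenerate case $k^\star = n+1$ is immediate: $\sum_k z_k = S_n < 1 \leq (\sqrt{2}+1)\sqrt{Z_n}$.

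\textbf{Main obstacle.} The real subtlety is that the $\max\{1,\cdot\}$ in the definition of $Z_k$ blocks a uniform per-index telescoping: while $k \leq k^\star$ the differences $\sqrt{Z_k}-\sqrt{Z_{k-1}}$ vanish even though $z_k$ still contributes to the sum, so the worst-case prefix mass must be charged separately. The split controls that prefix by the crude bound $S_{k^\star}<2$, and the constant $\sqrt{2}+1$ is exactly tight enough that the negative term $-(\sqrt{2}+1)\sqrt{S_{k^\star}}$ in the combine step absorbs the prefix mass; a smaller constant would not close the argument.
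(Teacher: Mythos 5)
Your proof is correct, and every step checks out: the ratio bound $Z_k \leq Z_{k-1}+z_k \leq 2Z_{k-1}$, the identity $Z_{k-1}=1$ for $k\leq k^\star$, the crude prefix bound $S_{k^\star}<2$, the telescoping via the factored inequality $\tfrac{a-b}{\sqrt{b}}\leq(\sqrt{2}+1)(\sqrt{a}-\sqrt{b})$ for $b\leq a\leq 2b$, and the observation that the leftover bracket $S_{k^\star}-(\sqrt{2}+1)\sqrt{S_{k^\star}}$ is non-positive because $S_{k^\star}<2<3+2\sqrt{2}$. The degenerate case $k^\star=n+1$ is also handled. Note that the paper itself gives no proof of this lemma --- it is stated verbatim with a citation to \citet{jaksch2010near}, whose own argument proceeds by induction on $n$: one assumes the bound up to $n-1$ and verifies the algebraic inequality $(\sqrt{2}+1)\sqrt{Z_{n-1}}+z_n/\sqrt{Z_{n-1}}\leq(\sqrt{2}+1)\sqrt{Z_{n-1}+z_n}$. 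Your route replaces the induction with an explicit telescoping sum plus a threshold split at the first time the running sum exceeds the floor value $1$; this makes the role of the $\max\{1,\cdot\}$ truncation and the origin of the constant $\sqrt{2}+1$ more transparent (the $\sqrt{2}$ comes from the at-most-doubling of $Z_k$, the $+1$ from the factorization), at the cost of a slightly longer case analysis. Both arguments are elementary and yield the same constant.
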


\end{document}